\documentclass[english,12pt]{cse-ucdenver-thesis-report}
\usepackage[T1]{fontenc}
\usepackage[utf8]{inputenc}
\usepackage{textcomp}
\usepackage{url}
\usepackage{paralist}
\usepackage{xparse}
\usepackage{enumitem}
\usepackage[table,xcdraw]{xcolor}
\usepackage{hyperref}
\usepackage{lipsum}
\usepackage{placeins}

\usepackage{ctable}
\usepackage{floatrow}
\usepackage{subcaption}

\usepackage{makecell}

\usepackage[numbers,sort&compress]{natbib}

\usepackage{amsmath,amssymb,amsfonts,mathrsfs}

\usepackage{amsthm}
\newtheorem{mylemma}{Lemma}
\newenvironment{lemma}{\begin{mylemma}}{\end{mylemma}}
\newtheorem{proposition}{Proposition}

\usepackage{subfiles}
\usepackage{graphicx}

\usepackage{amsmath,amssymb}

\usepackage{booktabs}

\usepackage{array}

\usepackage[ruled,vlined]{algorithm2e}

\usepackage{datetime}

\makeatletter
\@ifundefined{proglang}{%

  \newcommand\code{\bgroup\@makeother\_\@makeother\~\@makeother\$\@codex}
  \def\@codex#1{{\normalfont\ttfamily\hyphenchar\font=-1 #1}\egroup}
}
\makeatother

\usepackage{fancyhdr}
\usepackage{datetime}

\input{metainfo} 

\ucdabstract{

Dynamical systems describe how a physical system evolves over time. Physical processes can evolve faster or slower in different environmental conditions. We use time-warping as rescaling the time in a model of a physical system. This thesis proposes a new method of transfer learning for Recurrent Neural Networks (RNNs) based on time-warping. We prove that for a class of linear, first-order differential equations known as time-lag models, an LSTM can approximate these systems with any desired accuracy, and the model can be time-warped while maintaining the approximation accuracy. 

The Time-Warping method of transfer learning is then evaluated in an applied problem on predicting fuel moisture content (FMC), an important concept in wildfire modeling. An RNN with LSTM recurrent layers is pretrained on fuels with a characteristic time scale of 10 hours, where there are large quantities of data available for training. The RNN is then modified with transfer learning to generate predictions for fuels with characteristic time scales of 1 hour, 100 hours, and 1000 hours. The Time-Warping method is evaluated against several known methods of transfer learning. The Time-Warping method produces predictions with an accuracy level comparable to the established methods, despite modifying only a small fraction of the parameters that the other methods modify. 

}

\advisorapprovalpage{true}
\copyrightpage{true}  
\acknowledgements{

The research presented in this paper was partially funded by NASA grants 80NSSC22K1717, 80NSSC23K1118, 80NSSC22K1405, 80NSSC23K1344, and 80NSSC25K7276. Special thanks to my research collaborators at the Wildfire Interdisciplinary Research Center (WIRC) at San Jose State University, and the Cooperative Institute for Research in the Atmosphere (CIRA) at Colorado State University.

Computing resources were provided by the University of Colorado Denver supercomputing cluster Alderaan, partially funded by NSF Campus Cyberinfrastructure grant OAC-2019089.

A portion of this work used code generously provided by Brian Blaylock's Herbie Python packages Herbie (Version 2024.8.0) (\url{https://github.com/blaylockbk/SynopticPy}) and SynopticPy  (\url{https://github.com/blaylockbk/SynopticPy}). 

Large Language Models, specifically ChatGPT-5, were utilized for support with coding, literature search, and writing revisions.

}

\begin{document}
    \chapter{\uppercase{Introduction}} \label{chapter:01}

Many physical systems change over time, and these changes often follow patterns that reflect how quickly the system responds to new conditions. The characteristic time scale of a system refers to the typical amount of time over which the system responds to changes in its conditions. It is often useful from a modeling perspective to modify the time scale to represent different dynamics. In this thesis, we refer to the modification of the time scale of the dynamics as ``time-warping". 

Time-warping a physical system is used for the purposes of empirically fitting to data, where the observed rate of change of a system is different from what the model predicts and you need to adjust the dynamics to match reality. Time-warping is also useful when modeling a related system. It is common in applied time series modeling to assume that two different systems are described by the same underlying dynamics, with only the difference that one system is faster or slower than the other. Mathematical models attempt to identify dynamics that are invariant to different temporal changes. In a machine learning (ML) framework, transfer learning refers to a set of methods where aspects of one problem are applied to a related problem. Time-warping a dynamic system can be framed as a form of transfer learning. 

Traditional differential equations model the rate of change of a system. The dynamics can be directly sped-up or slowed-down. Modern ML methods for sequence learning typically learn a discrete update rule of the system. Recurrent neural networks (RNNs), including the popular Long Short-Term Memory (LSTM), are trained on a certain discrete time scale. These models are often viewed as black-boxes, and the characteristic time scale of the system is not obvious and not amendable to modification. 

In this thesis, we will show how the learned dynamics of an RNN can be time-warped. Under a certain set of assumptions about the underlying system and the RNN that represents it, this can be proven mathematically. For a class of first-order linear differential equations known as time-lag models, we prove that an LSTM can approximate the systems with arbitrary accuracy. Then for the type of temporal rescaling of the dynamic system that we define as a time-warp, we prove that specific parameters in the LSTM can be modified to approximate the time-warped system. The proofs provide the theoretical foundation of a practical method for time-warping a pretrained RNN. The time-warping method will be demonstrated in a simulation with idealized dynamics. Then, the time-warping method will be evaluated statistically in an applied transfer learning problem. 

The motivating application of this thesis is to improve the accuracy of modeling fuel moisture content (FMC) for fuels that have sparse observations. FMC is an important concept in wildfire science that affects wildfire risk and active wildfire behavior. Fuels of different sizes respond to environmental conditions differently, and the FMC evolution over time is faster or slower depending on the size of the fuel particles. Large quantities of data exist for fuels with a characteristic response time of 10 hours (10h fuels), but the observations for fuels with other response times are sparse. Fuels with characteristic response times of 1h, 100h, and 1000h are studied by researchers and used in operational wildfire modeling tools, but observations of these fuels are low resolution in both space and time. In this thesis, we will use an RNN with LSTM recurrent layers that was pretrained on observations from 10h fuels. We will apply a time-warping technique to generate FMC predictions for 1h, 100h, and 1000h fuels, then evaluate this method against known transfer learning techniques.

The pretrained RNN with LSTM recurrent layers was developed for 10h fuels using observations from a network of standardized sensors across the Continental United States (CONUS). The model was evaluated extensively and compared to several ML baselines~\citep{Hirschi-2026-RNN}. A small-scale study from 1996-1997 in Oklahoma has been extensively relied upon in the FMC modeling literature for predicting the FMC of 1h, 100h, and 1000h fuels~\citep{Carlson-2007-ANM}. Time-warping the pretrained RNN will be evaluated and compared to established methods within transfer learning using this influential FMC data set. 

This thesis fills several gaps within the RNN literature. 

First, formally framing time-warping as a transfer learning problem has not been studied to the best of our knowledge. Approaching time-warping as a transfer learning problem helps advance the field of transfer learning with RNNs. Then, while there has been some research into how RNNs learn a characteristic time scale, to our knowledge there exists no literature on post-hoc modification of the learned dynamics of an RNN. That is, taking a pretrained RNN and attempting to modify the learned temporal dynamics is a novel concept with interesting potential applications. The time-warping framework leverages knowledge about temporal dynamics to enable more effective transfer learning.

This thesis also contributes to improving the interpretability of RNN-based models. We develop a theoretical result showing that a class of dynamical systems can be approximated with an LSTM and then time-warped to represent different time scales with any desired accuracy. Directly modifying specific weights inside black-box RNNs based on theoretical considerations is rare within machine learning. Further, we use mathematical properties of the LSTM to develop theoretical expectations for how certain weights may change during transfer learning. Developing a priori expectations for how neural network parameters may respond to new observations can help build a deeper understanding of RNN behavior and clarify some aspects of their black-box nature.

Lastly, this thesis advances FMC modeling by introducing a new method of transfer learning that is highly applicable to this field. In recent years, ML techniques are beginning to augment or improve upon traditional physics-based approaches to predicting FMC. Physically-based models can be used to make informed predictions on how fuels will behave when there are sparse observations, while ML models tend to struggle when there are not large quantities of data to learn patterns from. Transfer learning has the potential to greatly improve FMC modeling in cases where observations are sparse. The time-warping method will be shown to be accurate compared to established transfer learning techniques, despite only modifying a small fraction of the parameters that the other techniques modify. 

All computational results in this thesis can be reproduced using publicly  available code. See Appendix~\ref{app:code} for more details.

    \chapter{\uppercase{Background}}\label{chapter:02}

\section{Time-Warping Dynamic Systems}

Throughout this work, the term ``time series'' refers to discrete-time models and data. The term ``dynamical systems'' refers to models defined by an explicit evolution operator, most commonly in continuous time. Differential equations are a subset of dynamical systems that are defined in continuous time and describe instantaneous rates of change. In practice, differential equations are discretized with respect to some temporal grid. Digital technologies like computers and measurement devices have led to the implementation of differential equations in discrete time. Many differential equations are formulated in a way that does not fix an intrinsic temporal resolution, allowing the same model to be evaluated on different time scales. A modeler can choose to run a differential equation on an hourly-resolution temporal grid or a daily-resolution grid or other resolutions, though this raises issues related to accuracy and stability. Whether the model accurately describes reality on those different scales is a separate question. Many statistical approaches to time series are defined in discrete time and describe the change in a system over a single time step. RNNs describe a discrete update to a system on a fixed temporal grid. Classical statistical approaches to time series work similarly, like autoregression (AR) and moving average methods. For statistical and ML models in discrete time, it is often intractable to deploy such models on a temporal grid with different spacing. Particularly with neural networks, which are often viewed as black boxes, there are few established methods that allow for this. 

Consider a general discrete-time model of the form $z_{t} = f(z_{t-1}, X_t)$ with initial condition $z_0$. Let $N$ be a positive integer, and define the discrete time index $t=0,1,\dots, N$. The model evolves according to the update rule for $t=1,\dots, N$. We denote the sequence $(z_t)_{t=1}^N$ and the inputs $(X_t)_{t=1}^N$. This indexing convention will be used throughout the thesis. A single time step might represent an hour. To run the model on daily resolution, you could generate $z_0, z_{24}, z_{48}, \dots$ by iterating the model forward 24 steps at a time. However, to evolve the system forward from the initial condition $z_0$ to $z_{24}$, you would need to input $X_1, X_2, \dots, X_{24}$. In practice, the input data might not be measured hourly, and then there is no obvious way to translate a one-step model into a 24-step model. So, an RNN trained on daily resolution data cannot be trivially translated to a produce hourly resolution predictions.

We denote a time-warp by $\gamma$ as a mapping $t \mapsto \gamma t$, for some positive real number $\gamma >0$. Consider a general first-order system in continuous time, 
\[
\frac{dy(t)}{dt} = f\!\left(y(t), X(t)\right)
\]
where $X(t)$ denotes external inputs. The function $f$ also has a set of parameters that are fixed over time. Define a time-warp by \(\tilde t = \gamma t\) with \(\gamma>0\), and identify
\[
\tilde y(\tilde t) := y(t), \qquad \tilde X(\tilde t) := X(t), \quad t=\tilde t/\gamma.
\]
Applying the chain rule,
\[
\frac{d\tilde y(\tilde t)}{d\tilde t}
= \frac{dy(t)}{dt}\frac{dt}{d\tilde t}
= \frac{1}{\gamma} f\!\left(\tilde y(\tilde t), \tilde X(\tilde t)\right).
\]

At this point, there are two mathematically equivalent ways to use this formulation that have different interpretations. Temporal rescaling treats $\tilde t$ as a new time variable with different units. If $t$ represents hours, then $\gamma = 24$ would mean that $\tilde t$ represents days. So, temporal rescaling fixes the dynamics but changes the units of time the dynamics are evaluated on. Alternatively, if we keep $t$ and $\tilde t$ as the same time scale, and rescale the system after the time-warp, then we have modified the dynamics of the original system while keeping time fixed. The first argument of function $f$ has been scaled by $1/\gamma$, with time being unchanged. If $\gamma >1$, we have sped up the system, and if $\gamma < 1$, we have slowed down the system. 

This thesis focuses on time-warping in the context of transfer learning for RNNs, where an RNN is pretrained on data at one timescale, then time-warped to model a related system that operates at a faster or slower timescale. We refer to time-warping in this thesis to distinguish it from standard temporal rescaling for this reason. 

\section{Time-Lag Systems}

Consider a linear first-order differential equation with initial conditions,
\[
\frac{dz(t)}{dt} = \lambda(X-z(t)), \quad z(0) = z_0,\;\lambda >0
\]
where $\lambda$ is a fixed positive parameter and $X$ is constant in time. A closed-form analytical solution can be obtained by separation of variables,
\begin{align*}
    \frac{dz}{X-z} &= \lambda\;dt\\
    \int \frac{1}{X-z} dz &= \int \lambda\;dt\\
    -\ln|X-z| &= \lambda t+\mathcal C\\
    z &= X - \mathcal C e^{-\lambda t}
\end{align*}
where $\mathcal C$ denotes a constant of integration. Using the initial condition, we have $\mathcal C = X-z_0$, and the analytical solution is
\begin{equation}
\label{eq:tl_sol_continuous}
\begin{aligned}
    z(t) = X + (z_0 - X)e^{-\lambda t}.
\end{aligned}
\end{equation}
The solution exhibits exponential relaxation. The state $z(t)$ converges to the equilibrium value $X$, with the rate of convergence determined by the parameter $\lambda$.

Next, we discretize the solution in time by introducing a constant sampling interval $\Delta t$. We allow the input to vary in time under a zero-order hold assumption, meaning that $X(t)$ is held constant over each sampling interval. Specifically, over the interval 
$[t_k, t_k+\Delta t)$, we take $X(t) = X_k$. Over this interval, the differential equation becomes
\[
\frac{dz}{dt} = \lambda(X_k - z),\qquad z(t_k) = z_k.
\]
Since $X_k$ is constant on the interval, the continuous-time solution gives
\[
z(t) = X_k + (z_k - X_k)e^{-\lambda(t-t_k)}.
\]
Evaluating at $t_{k-1} + \Delta t = t_{k}$, we obtain
\[
z_{k}=X_k + (z_{k-1} - X_k)e^{-\lambda\Delta t}.
\]
Rewriting,
\[
z_{k}=(1 - e^{-\lambda\Delta t})X_k+e^{-\lambda\Delta t} z_{k-1}.
\]
For $\Delta t = 1$, this simplifies to
\begin{equation}
\label{eq:tl_discrete}
z_{k}=(1 - e^{-\lambda})X_k+e^{-\lambda} z_{k-1}.
\end{equation}
Note that for $\lambda>0$, $e^{-\lambda}\in (0,1)$. Define $a:=e^{-\lambda}$. In this thesis, we define a time-lag system as a class of discrete-time linear models with the recurrence relation 
\begin{equation}
\label{eq:timelag}
\begin{aligned}
    z_{t} = az_{t-1} + (1-a)X_t, \quad 0<a<1,
\end{aligned}
\end{equation}
where $X_t$ is constant over $[t, t+1]$ under the zero-order hold assumption. 

While the form of \eqref{eq:timelag} is simple, many physical systems are described this way, and it appears in many applied contexts. In this thesis, we will demonstrate a simulation study based on Newton's Law of Cooling, which in its simple form corresponds to \eqref{eq:timelag}. The principles of Newton's Law of Cooling are used in applied areas of radiative heat transfer~\citep{Restuccia-2019-EWW}. Systems of the form~\eqref{eq:timelag} are also studied in the classical time series literature and control theory under the name of transfer function models~\citep[Sect.~1.2.2]{Box-2016-TSA}, and they are more commonly called order-1 ARX models. Simple RNNs with linear activation have the same form, and LSTMs can be shown to approximate \eqref{eq:timelag} with any desired accuracy, see Section~\ref{sec:math} below. 

The main application in this thesis is on predicting fuel moisture content (FMC) for fuels of various characteristic sizes. To predict FMC, simple physics-based models assume that the dynamics are invariant to the size of the stick. Only the characteristic time scale changes. That is, under the assumption that the same dynamic system describes a small twig and a large log, the dynamics are sped-up for the twig relative to the log. Section~\ref{sec:fmc} gives a background on FMC and some of the current modeling approaches. More complicated physics-based models use the stick radius as an additional parameter.

Other physical systems are described by time-lag models, either as approximations or through boundary conditions. For example, resistor–capacitor (RC) circuits describe the evolution of voltage over time and, under standard assumptions, correspond to \eqref{eq:timelag} when expressed in discrete time. Physical problems that correspond to time-lag systems are directly connected to the time-warping method developed in this thesis, but the time-warping method could be applied to more general modeling problems. 

A time-warping by $\gamma$, as defined in the previous section, modifies the discretized time-lag system by raising the parameter $a$ to the power $\gamma$,

\begin{equation}
\label{eq:timewarp}
\begin{aligned}
    \widetilde z_{t} = a^\gamma \widetilde z_{t-1} + (1-a^\gamma)X_t, \quad 0<a<1, \; \gamma>0.
\end{aligned}
\end{equation}

\section{Sequence Learning with Recurrent Neural Networks}

Sequence learning is an area of ML that deals with ordered sequences of observations. This includes time series modeling, where observations are ordered in time, and other areas that are not time-ordered. Examples of sequence learning problems that are not time-ordered include natural language processing (NLP) and DNA sequencing. There are some important theoretical differences between time series and more general sequence modeling tasks. In time series modeling, there is a notion of causality in the dynamics. Observations from the past can influence observations in the future, but not vice-versa. In NLP and DNA sequencing, the sequences do not have an intrinsic direction of causality. In NLP, for instance, word order is often a subjective stylistic choice, and the conjugation of a word can be influenced by words that occur after it in a sequence of text. Many of the same statistical techniques are used throughout the different sub-fields of sequence modeling, but in specific applications some techniques are more appropriate. This thesis will consider time series modeling, but many of the principles will extend to general sequence modeling. 

Recurrent neural networks (RNNs) are a class of neural network architectures developed for sequence modeling tasks. They build upon the architecture of classical neural networks. The classic neural network unit is the perceptron, but in modern practice it is referred to as the ``dense'' or fully-connected unit. Multiple dense units are stacked into a dense layer. It has connections to the previous layer, possibly the input layer, that result in a linear combination that is passed through an activation function. The activation function is typically nonlinear. In modern neural network practice, a sensible default activation function for dense layers is the Rectified Linear Unit (ReLU) function, which is defined
\[
\text{relu}(x) = \max(0, x).
\]
The output of a dense layer feeds into the next layer of the neural network, or it is the output of the network if it is the final layer. A neural network that only consists of dense layers is referred to as a feed-forward network, since information flows through the input layer to the output in only one direction. This type of architecture is commonly called a multilayer perceptron (MLP).

In contrast to standard feed-forward neural networks, RNNs maintain a state over time that is used to modify the output at the next time step. For many RNN architectures, this is a single value for the recurrent unit and is called the hidden state. We will refer to this as the ``recurrent state" to unify the terminology for the LSTM cell, which uses a two-dimensional recurrent state. A single cell, or unit, is the basic recurrent computational unit. Collections of recurrent cells are arranged in a recurrent layer, and then multiple layers can be arranged sequentially. Typically, the recurrent layers feed into one or more dense layers to structure the output of the network. The number of units in the final output layer and the activation function of the output layer is chosen by the nature of the problem. The number of units in the output layer typically corresponds to the dimensionality of the output. In this thesis, the target output will be a continuous scalar value, so the final output layer will be a single dense cell with linear activation. If a single probability was the target output of the network, such as when predicting a binary class probability, the output layer would be a single dense cell with an activation function that mapped to the interval $[0,1]$. If the target output of the neural network was a real-valued vector of length two, the final output layer would be two dense cells with linear activation. Other modeling problems will lead to different choices of output layer structure. We will refer to an RNN as any neural network that contains one or more recurrent layers. 

There are different types of RNN architectures. The simple RNN is the most basic version, but it suffers from stability issues during training which will be discussed later in this thesis. The Long Short-Term Memory (LSTM) augments the simple RNN in complex ways and addresses many of the stability issues with training. This thesis focuses on a new method for transfer learning with LSTMs. Other RNN architectures exist, like the Gated Recurrent Unit (GRU), but they will not be considered in this thesis. 

\subsection{The Simple RNN}

The simple RNN augments the dense cell by adding a recurrent connection. A recurrent layer will have one or more simple RNN units. Each RNN unit has a scalar-valued recurrent state that is updated over time. The vector $h_t$ represents the recurrent states for the entire recurrent layer. To produce an output at time $t$, a recurrent layer receives a vector of external predictors at the same time $X_t$ and the vector of recurrent states at the previous time $h_{t-1}$. Denote the output of the layer as $y_t$. In the case of the simple RNN, the output is equal to the hidden state. But for the LSTM, it is useful to label these separately. The inputs are passed into a linear combination and then a univariate activation function $\phi: \mathbb R \to \mathbb R$ is applied \citep[Ch.~15, Eq.~15--1]{Geron-2019-HOM},
\begin{equation}
\label{eq:simplernn_eqs}
\begin{aligned}
    h_t &= \phi(W_x\,X_t + W_h\,h_{t-1} + b) \\
    y_t &= h_t,
\end{aligned}
\end{equation}
where $W_x$ contains the weights connecting the inputs to the recurrent layer, $W_h$ contains the recurrent weights connecting the previous states to the current states, and $b$ is a bias vector. The vectors are treated as column vectors so the products $W_x X_t$ and $W_h h_{t-1}$ are standard matrix multiplications. The bias $b$ is a constant contribution to the argument of the activation function. The left side of Figure~\ref{fig:rnn_cell} shows the computational flow of the simple RNN. 

There are many possible choices for the activation function in the standard software libraries \citep{Keras-2025-Activation}. The most popular choice of activation function is the hyperbolic tangent ("tanh"), which maps from the real numbers to the range $(-1,1)$,
\begin{equation}
\label{eq:tanh}
\begin{aligned}
    \text{tanh}(x) = \frac{e^x - e^{-x}}{(e^x + e^{-x})}
\end{aligned}
\end{equation}

The weights and biases of the recurrent unit are kept constant during prediction and applied iteratively at each time step. For a finite sequence of length $N$ of external inputs $(X_t)_{t=1}^N$, and an initial recurrent state of $h_0$, the RNN generates a sequence of outputs for each time $(y_t)_{t=1}^N$. The right side of Figure~\ref{fig:rnn_cell} shows the RNN ``unrolled" in time.

\begin{figure}[htbp]
\centering
\begin{minipage}[c]{0.25\textwidth}
  \centering
  \includegraphics[width=\textwidth]{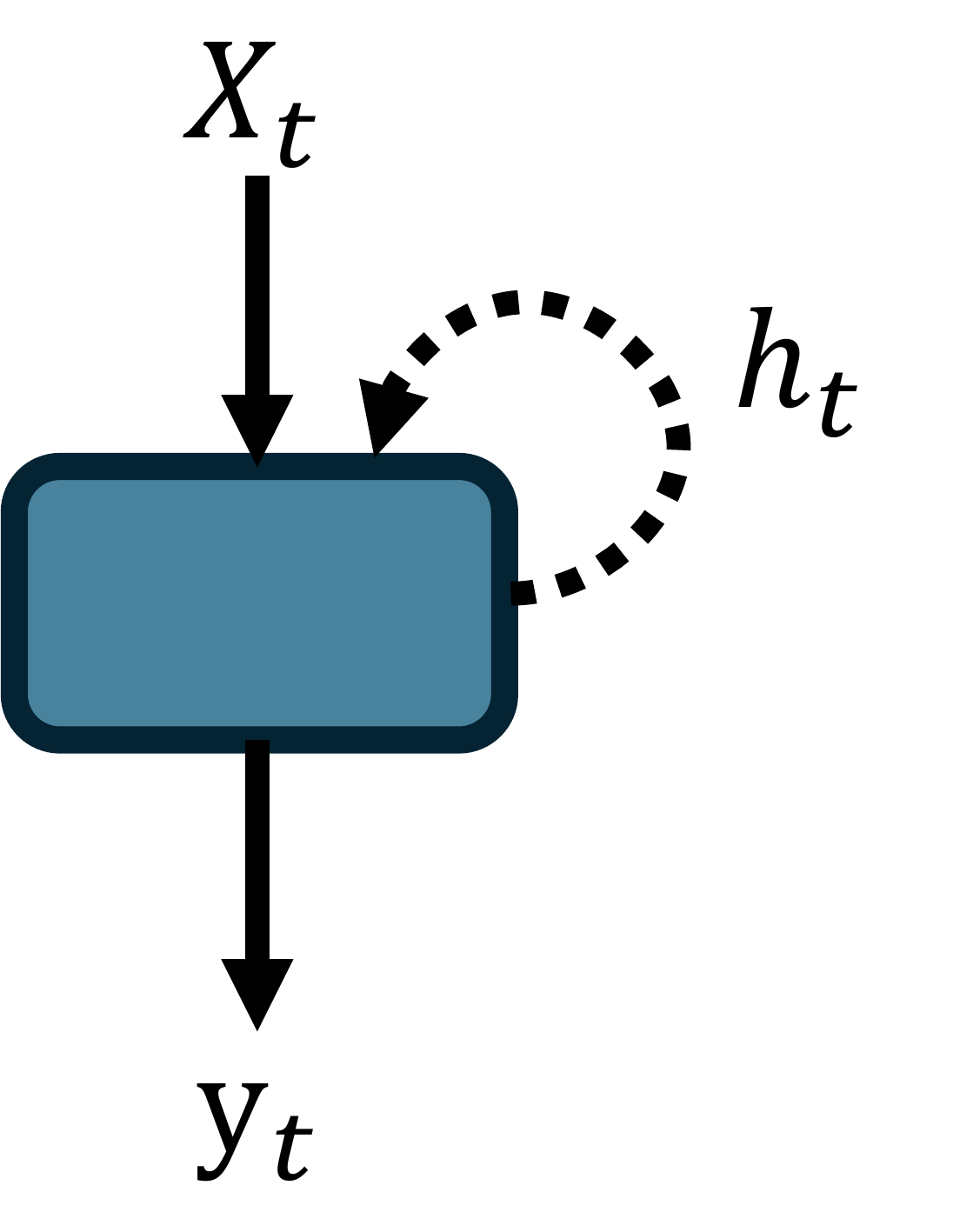}
\end{minipage}\hfill
\begin{minipage}[c]{0.65\textwidth}
  \centering
  \includegraphics[width=\textwidth]{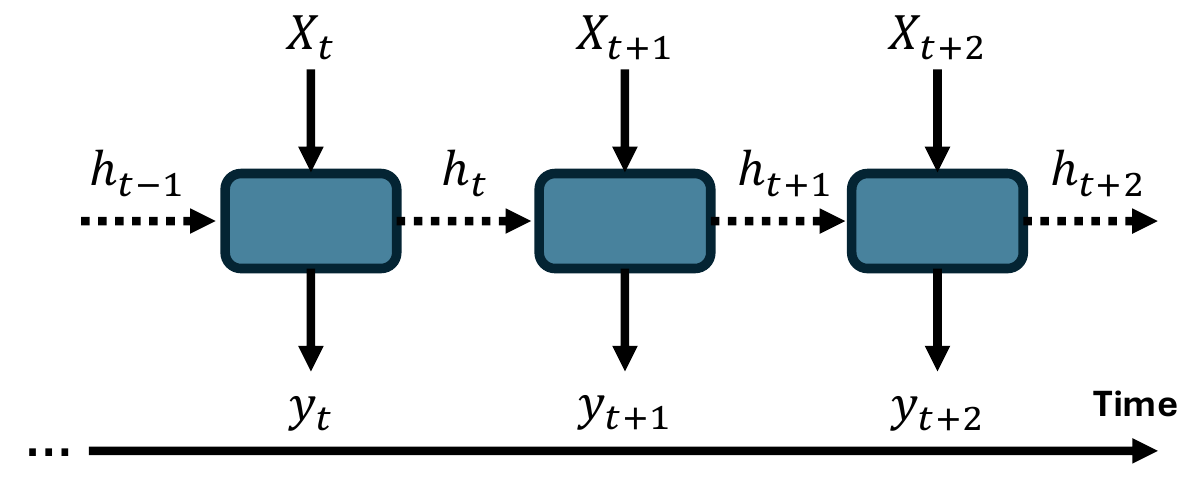}
\end{minipage}
\caption[
RNN cell schematic and information flow
]{
Two complementary views of an RNN.
(a) Left: a single RNN cell, following the formulation in \citep[p.~539]{Geron-2019-HOM}.
(b) Right: the RNN unrolled in time.
}
\label{fig:rnn_cell}
\end{figure}

\subsection{Long Short-Term Memory}

The Long Short-Term Memory (LSTM) augments the simple RNN by maintaining the hidden state and an additional long-term cell state. The combination of these two states will be referred to as the recurrent state. The LSTM was originally developed by \citet{Hochreiter-1997-LST} to address shortcomings of the simple RNN. First, the LSTM is capable of learning longer-term relationships than the simple RNN. Second, the LSTM addresses instability in the simple RNN when calculating gradients, an issue known as the exploding and vanishing gradient problem, which will be discussed in greater detail in Section~\ref{sec:opt}. The standard LSTM equations make use of the sigmoid function, commonly denoted $\sigma$,
\begin{equation}\label{eq:sigmoid}
\begin{aligned}
    \sigma : \mathbb{R} \to (0,1), \quad
    \sigma(x) = \frac{1}{1+e^{-x}}
\end{aligned}
\end{equation}

The LSTM unit maintains two separate states over time that are interpretable as memory. The hidden state, denoted $h_t$, is interpretable as the short-term memory and is also the output of the LSTM. It is recomputed at each time step from the current input and the cell state, so it can change quickly and does not preserve past information unless it is maintained in the cell state. The cell state, denoted $c_t$, is interpretable as the long-term memory. It is incrementally modified over time. The forget gate controls how much of the previous state is retained, while the input gate controls how new information is added. The cell state persists over time and evolves gradually.

The pair of these two states $[c_t, h_t]$ will be referred to as the recurrent state from now on. The recurrent state is modified through a series of computational gates. At time $t$, each of the computational gates receives as an input the hidden state of the recurrent layer at the previous time $h_{t-1}$ and a vector of the external input at the same time $X_t$. The computations of an LSTM, as formulated in \citet[Ch.~15, Eq.~15--4]{Geron-2019-HOM} are shown in Figure~\ref{fig:lstm_cell}, 

\vspace{1em}
\begin{samepage}
\begin{figure}[ht]
\centering
\includegraphics[width=12 cm]{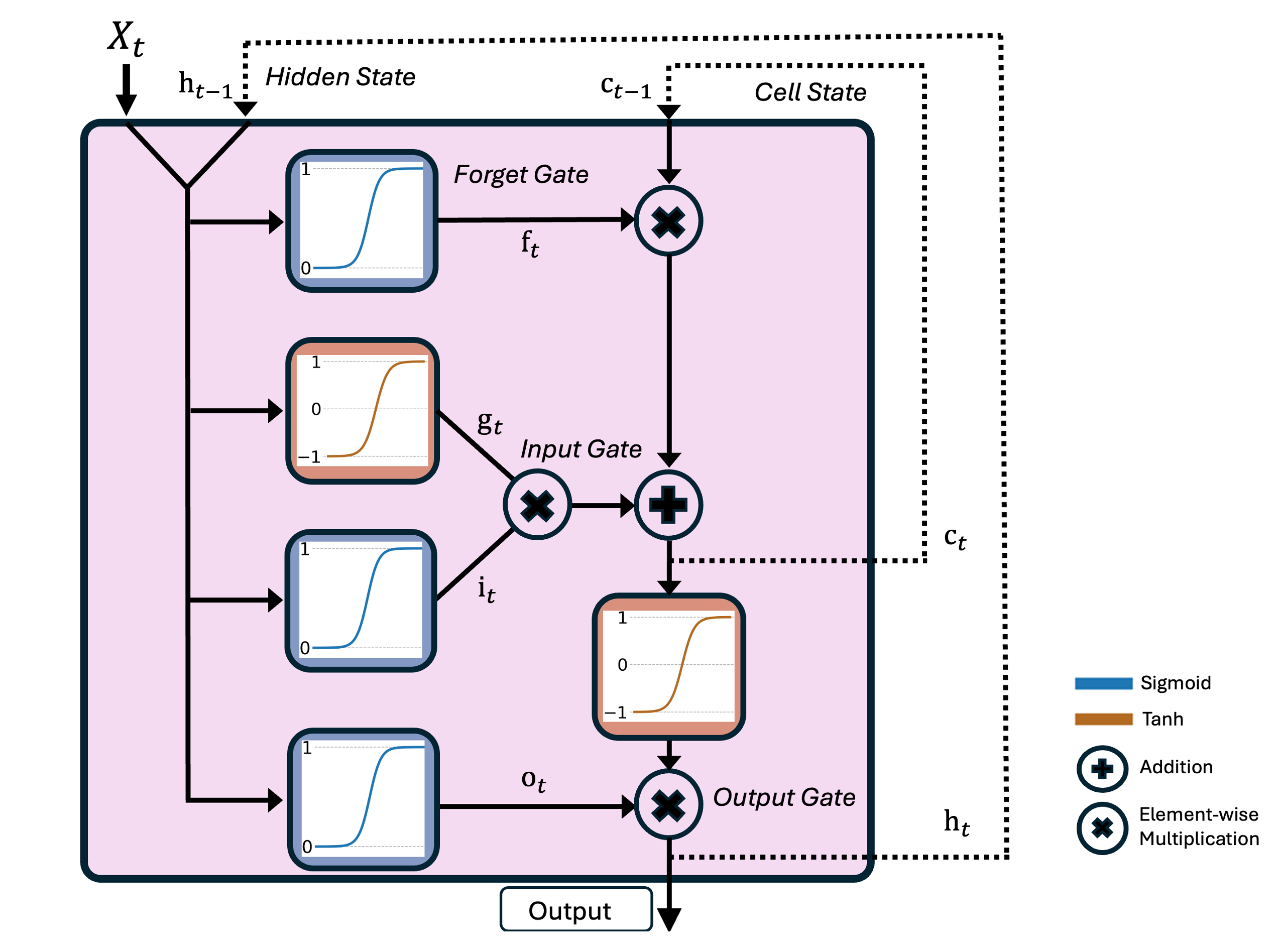}
\caption[A single LSTM unit.]{A single LSTM unit. This is an original graphic, developed following the formulation in \citep[p.~569]{Geron-2019-HOM}. The computations are described in Equations~\eqref{eq:lstm_eqs}.}
\label{fig:lstm_cell}
\end{figure} 

\begin{equation}
\label{eq:lstm_eqs}
\begin{aligned}
f_t &= \sigma(W_{xf}\,X_t + W_{hf}\,h_{t-1} + b_f) \\
i_t &= \sigma(W_{xi}\,X_t + W_{hi}\,h_{t-1} + b_i) \\
g_t &= \tanh(W_{xg}\,X_t + W_{hg}\,h_{t-1} + b_g) \\
o_t &= \sigma(W_{xo}\,X_t + W_{ho}\,h_{t-1} + b_o) \\
c_t &= f_t \otimes c_{t-1} + i_t \otimes g_t \\
h_t &= o_t\otimes \tanh(c_t)
\end{aligned}
\end{equation}
\end{samepage}

Again, the products are standard matrix multiplications. The symbol ``$\otimes$" is the Hadamard product, or element-wise multiplication. The trainable weights are described as follows. $W_{xf}, W_{xi}, W_{xg}$, and $W_{xo}$ are weight matrices that connect the set of external inputs to each gate. $W_{hf}, W_{hi}, W_{hg}$, and $W_{ho}$ are weight matrices that connect each gate to the hidden state of the LSTM layer at the previous time step $h_{t-1}$. Lastly, $b_{f}, b_{i}, b_{g}$, and $b_{o}$ are the vectors of bias terms for each gate. 

The recurrent state of an LSTM cell consists of two components, the hidden state and the cell state, formulated as the matrix $[h_t, c_t]$. The computational flow over time in Figure~\ref{fig:rnn_cell} still applies with an LSTM as the recurrent layer. The full recurrent state is maintained over time, but only the hidden state $h_t$ is passed to downstream layers in the neural network. In other words, the cell state $c_t$ is not produced as an output of the recurrent layer.

The gates can be interpreted heuristically as follows. The forget gate $f_t$ controls how much of the previous cell state to retain. A larger forget gate activation increases the fraction of the previous cell state that is retained, so the network retains more past information and therefore ``forgets'' less. Conversely, a smaller forget gate activation reduces the retained fraction of the previous cell state, causing the network to retain less information from the past. The input gate $i_t$ controls how much information from the current input to add to the cell state. The candidate memory $g_t$ represents a bounded nonlinear transformation of the current input and previous hidden state, and serves as the proposed update to the cell state. The output gate $o_t$ modulates the extent to which the cell state contributes to the hidden state. The recurrence of $c_t$ is linear, consisting of multiplication by $f_t$, which lies in (0, 1), and addition of the candidate memory times the input gate, which lies in (-1, 1). This leads to more stable dynamics and slower evolution over time, rather than repeatedly applying full linear transformations and nonlinear activation functions.

For an LSTM layer with $N$ units, each unit retains its own cell state which is not directly shared with the other units. However, each gate receives the full hidden state $h_{t-1}$ at time $t$, which contains information from the cell state of each unit. So for an RNN with $N>1$, the cell states $c_t$ are coupled to each other.

Figure~\ref{fig:lstm_cell} displays the LSTM architecture for a single unit using the standard activation functions, with three sigmoid activations and two tanh. However, software implementations of the LSTM allow for different activations functions to be used, including Keras~\citep{Keras-2025-LSTM}.

\subsection{Optimization with Backpropagation through Time}\label{sec:opt}

The parameters of neural networks are learned through a procedure that attempts to minimize a loss function. The loss with respect to model parameters for complicated models like neural networks cannot be assumed to be convex, and finding analytical solutions to minimize the loss is infeasible in all but the most contrived examples. The standard approach is a method known as gradient descent, which iteratively modifies model parameters by moving in the opposite direction of the gradient of the loss with respect to model parameters, e.g.~\citep[Ch.~4]{Geron-2019-HOM}. Calculating the gradient of the loss across the entire training dataset tends to lead to the algorithm getting stuck in local minima. Batch gradient descent is an augmentation that groups the training samples into batches and calculates gradients with respect to a subset of the overall training data. By randomizing the order of samples used in the batching, stochasticity is introduced to the optimization algorithm. In practice, adding randomness to the parameter search helps to avoid local minima and leads to better solutions.

Backpropagation is an efficient algorithm for calculating the gradient of a loss with respect to model parameters through successive applications of the chain rule \citep[Sect.~6.5]{Goodfellow-2016-DL}. The algorithm involves a forward pass, where inputs flow through the neural network architecture to generate an output. A loss is calculated by comparing the output of the neural network to a known target value for a specified loss function. The backward pass of the algorithm calculates the gradient of the loss with respect to each model parameter by applying the chain rule of calculus. Finally, the trainable parameters of the neural network are updated by stepping in the opposite direction of the gradient of the loss with respect to the model parameters.

For RNNs, the model parameters are shared across time while the recurrent state evolves recursively to generate outputs. The gradient of the loss with respect to the model parameters must account for this temporal dependence, a modification known as Backpropagation Through Time (BPTT) \citep[Ch.~9.7]{Zhang-2023-DDL}. Optimization via gradient descent requires computing gradients with respect to these parameters, including the weights and biases associated with recurrent connections. As an intermediate step, this involves computing gradients of the loss with respect to the recurrent state. The recurrent state $h_t$ depends on all previous states $h_{t-1}, h_{t-2}, \dots, h_0$, where $h_0$ is the initial state. Repeated application of the chain rule leads to a product of terms: for a single-unit recurrent layer, this product consists of partial derivatives with respect to $h_{t-1}, h_{t-2}, \dots, h_0$, while for multi-unit layers it consists of Jacobians of the state transitions. Computing this product is computationally linear in sequence length but highly unstable. In practice, gradients may grow or decay exponentially, leading to the exploding and vanishing gradient problem.

The most common way to address the exploding and vanishing gradient problem is to approximate the full derivatives by truncating the recursive application of the chain rule. Shorter sequences avoid the numerical issues of exploding and vanishing gradients. This introduces a hyperparameter known as sequence length that needs to be tuned. This method is known as Truncated Backpropagation through time (TBTT)~\citep[Sect.~9.7]{Zhang-2023-DDL}. 

Combining TBTT and batch gradient descent leads to a three-dimensional structure for the training data for RNNs. Training samples are arranged in a matrix of the form \texttt{(batch size, sequence length, features)}, where features are the number of unique inputs to the model. A single sample is a time-ordered matrix of the shape \texttt{(sequence length, features)}. The sample passes through the RNN layers to generate an output, which is compared to a target vector of observed data to calculate a loss. It is a modeling choice whether all the outputs of length equal to the sequence length is used to calculate a loss, or just the last element in the sequence. In the Keras and TensorFlow set of software tools, the function argument \texttt{return\_sequences} controls this behavior~\citep{Keras-2025-LSTM}. When \texttt{return\_sequences} is true, the loss calculation for a single sample is the loss averaged over the entire sequence length window. The algorithm proceeds by calculating losses for each sample in a batch, then aggregating the loss calculations, typically through averaging~\citep{Keras-2025-Loss}, and then the model parameters are updated in the opposite direction of the gradient in a step proportional to the learning rate hyperparameter. A single epoch of training is complete when all samples in all batches have been processed. 

The number of epochs used for training is a hyperparameter that requires tuning. The method known as ``early-stopping'' dynamically halts training when there are signs that the model predictive accuracy stagnates. Early-stopping utilizes a validation set during training. After each epoch of training, the loss on the validation set is calculated. Training is halted once the accuracy on the validation set fails to improve after a predetermined number of epochs, which is known as the ``patience'' hyperparameter. Then the final set of model weights is extracted after the epoch that had the lowest loss on the validation set. In practice, you can set the number of epochs to some very large number and utilize an early-stopping patience of 5 to 10, which alleviates the need to precisely tune the number of epochs. It is still possible that the loss would have continued to decrease after stagnating for 5 to 10 epochs, so early-stopping does not guarantee that the resulting model is as accurate as possible without overfitting. 

Training RNNs is notoriously tricky. The solution can be highly sensitive to hyperparameter choices. Though many methods exist to mitigate the issues, the exploding and vanishing gradient problem remains and can lead to computational errors or slow convergence. The sequential way that RNNs generate predictions makes RNNs less amenable to parallelization as other neural network methods. For these reasons, modern NLP has largely moved away from RNNs in favor of attention mechanisms and transformer architectures. We will briefly discuss why RNNs are still useful for applied modeling of physical systems in the next section. 

\subsection{Machine Learning Alternatives to RNNs}

Convolutional neural networks (CNNs) have also been applied to time series analysis. While CNNs are widely used for image processing and other spatially-distributed data, networks can also be constructed using one-dimensional temporal convolutions. In these architectures, convolutions act as temporal feature extractors. This approach has become increasingly common in environmental science in recent years. For example, \citet{Zhu-2021-LFM} and the follow-up study by \citet{Miller-2023-PLF} use one-dimensional temporal convolutions to predict live fuel moisture content point-wise across CONUS. 

Attention mechanisms are an ML approach to sequence learning that differs fundamentally from recurrent neural networks (RNNs). In an attention layer, each element of the output sequence is represented as a weighted sum of linear combinations of all elements in the input sequence. The model learns the coefficients of the linear combinations, while the weights in their sum change depending on the element currently being processed. As the current element changes, the weighting of the other elements changes accordingly, which is commonly interpreted as representing the context within the sequence. In a seminal 2017 paper, researchers from Google introduced the transformer architecture, which is built around attention mechanisms for sequence learning~\citep{Vaswani-2017-AAY}. Transformers proved highly effective for NLP tasks, and modern large language models are primarily based on the transformer architecture. Transformers are highly parallelizable in a way that RNNs are not. RNNs maintain a memory over time and generate outputs sequentially, which limits opportunities for parallel computation.

One of the main motivations for the transformer architecture was to model very long-term dependencies that RNNs struggle to capture. These long-range dependencies are particularly relevant in natural language processing, where context and logical coherence may need to be maintained across thousands of elements in a sequence. RNNs, by contrast, tend to gradually forget the past. This tendency proved ill-suited to NLP~\citep{Vaswani-2017-AAY}, but it can be appropriate and even desirable for many physical systems. Placing low importance on the beginning of a conversation or document can lead to nonsensical language modeling, but for physical problems such as weather forecasting it often makes sense to place greater importance on more recent conditions than on the distant past. 

\section{Autocorrelation and Autoregression}

Autocorrelation and autoregression are concepts from classical time series analysis. While they are commonly used for modeling temporal processes, they can also be used to characterize the empirical temporal dependencies of an arbitrary time series. In Chapter~\ref{chapter:04}, these concepts are used to analyze the temporal dependencies present in predictions generated by RNN models.

Autocorrelation is the correlation between a discrete time series and itself shifted in time. For a sequence $(z_t)$, where the index set may be finite or infinite, let $k$ be an integer time interval. The covariance between $(z_t)$ and $(z_{t+k})$ is called the autocovariance at ``lag $k$''~\citep[Sect.~2.1]{Box-2016-TSA}. Note that this use of the term ``lag'' is different from the concept of time-lag systems discussed above, but there is a mathematical relationship between these concepts that we will discuss shortly. The autocovariance at lag $k$ is defined as, 
\begin{equation}
\label{eq:acv}
    \text{cov}(z_t, z_{t+k}) = \mathbb E[(z_t - \mu)(z_{t+k}-\mu)],
\end{equation}
where $\mathbb E$ is the expected value, and $\mu$ is the mean of the sequence. If $(z_t)$ is an infinite sequence, it needs a well-defined mean and variance for the autocovariance to be defined. If $(z_t)_{t=1}^N$ is a finite sequence, then $\mu$ can be estimated as the empirical sample mean, and the autocovariance can be estimated empirically with,
\begin{equation}
\label{eq:acv_empirical}
    \hat{\text{cov}}(z_t, z_{t+k}) = \frac{1}{N}\sum_{t=1}^{N-k}(z_t-\hat\mu)(z_{t+k}-\hat\mu), \qquad \hat\mu = \frac{1}{N}\sum_{t=1}^{N}z_t.
\end{equation}
The normalization uses $N$ rather than $N-k$ so that the same scaling is applied for all lags, which simplifies the definition since the normalization constants cancel in the ratio~\citep[Eq.~2.1.12]{Box-2016-TSA}.

The autocorrelation function (ACF) scales the autocovariance between -1 and 1 by normalizing by the variance of the process. The autocorrelation at lag $k$ is denoted $\rho_k$ and is defined as,
\begin{equation}
\label{eq:acf}
    \rho_k= \frac{\mathbb E[(z_t - \mu)(z_{t+k}-\mu)]}{\mathbb E[(z_t - \mu)^2]} = \frac{\text{cov}(z_t, z_{t+k})}{\text{var}(z_t)}
\end{equation}
A time series with no temporal dependencies would be expected to have zero autocorrelation for all lags. The white noise process, denoted $\epsilon_t$, is a time series of i.i.d. random terms with no temporal structure, and the ACF is zero for all lags. 

For a finite sequence $(z_t)_{t=1}^N$, the autocorrelation can be estimated empirically as,
\begin{equation}
\label{eq:acf_empirical}
    \hat\rho_k= \frac{\sum_{t=1}^{N-k}(z_t-\hat\mu)(z_{t+k}-\hat\mu)}{\sum_{t=1}^{N}(z_t-\hat\mu)^2} = \frac{\hat{\text{cov}}(z_t, z_{t+k})}{\hat{\text{var}}(z_t)}
\end{equation}

Autoregressive processes are discrete-time stochastic processes defined in terms of previous values of the series~\citep[Sect.~3.2]{Box-2016-TSA}. An autoregressive process of order $p$, denoted AR($p$), and is defined as,
\begin{equation}
\label{eq:ark}
    z_t = \sum_{k=1}^p \beta_k\,z_{t-k} + \epsilon_t,
\end{equation}
where $\beta_k$ are fixed parameters and $\epsilon_t$ is a white-noise process. For the ACF of an AR($p$) process to be well-defined, the parameters must satisfy certain constraints that ensure the process is stable~\citep[Sect.~3.2.3]{Box-2016-TSA}. 

The simplest autoregressive process is the AR(1), which is defined by a recursive relationship of the form
\begin{equation}
\label{eq:ar1}
    z_t = \beta\,z_{t-1} + \epsilon_t
\end{equation}
The theoretical ACF at lag $k$ can be derived analytically~\citep[Eq.~3.2.12]{Box-2016-TSA} and is given by
\[
\rho_k = \beta^k, \quad k\geq 0.
\]
Under the condition $|\beta|<1$, the ACF of the AR(1) process decays exponentially to zero as the lag increases.  

It is possible to fit the parameters of an AR($p$) process to data, but properties of the AR($p$) process can also be used to analyze the temporal structure of an observed time series. Autoregressive processes of different orders $p$ have different theoretical forms for their ACF. By comparing the empirical ACF of an observed time series to the theoretical ACF of an AR($p$) process, conclusions can be drawn about the underlying temporal dependencies. 

The plot of lag $k$ versus the ACF at lag $k$ is an important diagnostic tool for analyzing temporal dependencies in a time series. The partial autocorrelation function (PACF) measures the correlation between $z_t$ and $z_{t-k}$ after accounting for the influence of the intermediate lags $1,\dots,k-1$~\citep[Sect.~3.2.5]{Box-2016-TSA}. For positive $\beta$, the ACF of the AR(1) process begins at $\rho_0 = 1$ and then decreases monotonically toward zero as the lag increases. The PACF takes the value $\beta$ at lag 1 and is zero for all lags $k>1$. 

For a finite sample from a white-noise process with no temporal dependence, the true ACF and PACF are zero at all nonzero lags, but empirical estimates may deviate from zero due to sampling variability. In particular, the ACF at a single lag may appear nonzero purely by chance. Approximate 95$\%$ confidence bounds for the ACF under the white-noise assumption can be calculated and are the same for all lags. These bounds are commonly used as a statistical significance threshold to determine whether an observed ACF value at a given lag is larger than would be expected under the null hypothesis of zero temporal dependence. In diagnostic ACF plots, these approximate 95$\%$ bounds are typically shown as two horizontal lines. The same significance threshold is often applied to PACF plots.

We previously defined the time-lag system as $z_t = a z_{t-1} + (1-a)X_t$, with $0<a<1$. This system can be interpreted as an autoregressive process with an external input and is commonly referred to as an ARX(1) process. Unlike the pure AR(1) process, the ACF and PACF of an ARX(1) process depend on the statistical properties of the external input $X_t$. Without placing additional assumptions on $X_t$, there is therefore no general theoretical form for these functions. 

Nevertheless, the empirical ACF and PACF can still provide qualitative information about the persistence of the system. If the observed ACF and PACF resemble those of a pure AR(1) process, this suggests that the recursive dependence on the past state $z_{t-1}$ plays a substantial role in the temporal structure of the series. In this sense, similarity to the theoretical AR(1) ACF can be interpreted as evidence of persistence or memory in the system. This interpretation should be viewed as a heuristic guideline. For an empirical time series it is generally not possible to disentangle the contributions of the external input $X_t$ and the past state $z_{t-1}$ to the observed ACF and PACF. However, comparing the empirical autocorrelation structure to that of the AR(1) process provides a useful reference point for characterizing the degree of temporal persistence in the system.

\section{Transfer Learning}\label{sec:tl}

Transfer learning is a general term in ML for reusing knowledge learned in one context in another. The main practical motivation for using transfer learning is when a certain modeling problem has insufficient observed data to adequately train an ML model, but there is more available data for a related modeling problem. Another motivation is that by utilizing transfer learning, an ML model can learn deeper aspects of a system that are assumed to be general across different tasks. As a motivating example, consider the task of classifying images on whether they contain dogs and the related task of classifying images on whether they contain foxes. There will be many more available images of dogs to train ML models to classify the images. For a particularly rare species of fox, there may be too few available images to reliably train an ML classifier. Pretraining a classifier on images of dogs could help train a classifier for foxes. Dogs and foxes are relatively closely related and share many physical features. A classifier trained on images of dogs could learn a general set of features that are useful to identify foxes. 

\subsection{Transfer Learning Methods}

\citet{Pan-2010-STL} formulated transfer learning, and their terminology is widely used today. Let the domain of a problem be $\mathcal D = \{\mathcal X, P\}$, where $\mathcal X$ is a space of features and $P$ is the marginal distribution over $\mathcal X$. Let a task be $\mathcal T = \{\mathcal Y, f\}$, where $\mathcal Y$ is a space of response values and $f$ is a predictive function. Transfer learning defines a source domain and source task, which we will respectively denote $\mathcal D^{(s)}$ and $\mathcal T^{(s)}$, and a target domain and target task, which we will respectively denote $\mathcal D^{(r)}$ and $\mathcal T^{(r)}$. 

Transfer learning categorizes problems according to which aspects of the domain and task are held consistent between the source and the target~\citep{Pan-2010-STL}. In this thesis, we will briefly analyze a ``transductive'' transfer learning case for the empirical example with FMC. We will primarily consider transfer learning problems in which the domain is identical for both the source and the target, that is, $\mathcal D^{(s)} = \mathcal D^{(r)}$. This is common in applied environmental science, where features like the weather and geography are used to model many different local conditions. Further, we will consider cases in which the space of response variables is identical for the source and the target, that is, $\mathcal Y^{(s)} = \mathcal Y^{(r)}$. The difference lies in the predictive functions, with $f^{(s)} \neq f^{(r)}$. In the transfer learning literature, this is referred to as an inductive transfer learning problem. In our setting, $f$ is parameterized by an RNN of the form
\[
y_{t} = f(y_{t-1}, X_t).
\]
The source $f^{(s)}$ and the target $f^{(r)}$ are RNNs defined on the same input and response spaces. The predictive functions represent different dynamical systems. 

Additionally, this thesis focuses on parameter-based transfer learning~\citep{Pan-2010-STL}. The predictive functions are assumed to have the same structural form, but differ in their parameter values. In the neural network context, this implies that the predictive functions will have the same number and type of hidden layers with the same number of units. Hyperparameters will also be shared, such as the activation functions used and the learning rate and other optimization-related hyperparameters. These constraints are not necessary in general, since different tasks could be best modeled by different architectures. We fix the network architecture to isolate the effects of parameter transfer and avoid confounding architectural variation.

We use the phrase ``\textbf{pretrained}'' to describe a model that is fit using the source domain and task. Some researchers refer to this as ``initialization'', but this phrase already describes how the parameters of an ML model are initially set before gradient descent optimization proceeds. ``\textbf{Fine-tuning}'' refers to updating the parameters of a pretrained model by using data realizations from the target response space. Fine-tuning is often performed with certain constraints related to which parameters are allowed to be updated and which are held constant. This will be discussed in greater detail in the next section. The phrase ``\textbf{zero-shot}'' refers to a transfer learning case with no fine-tuning or parameter fitting , so predictions for the target task are generated without seeing any data from the target response space.

\subsection{Transfer Learning Neural Networks with Fine-Tuning}

The main parameter-based transfer learning technique in neural networks is through fine-tuning. Fine-tuning in neural networks consists of optimization of model parameters through gradient descent using training samples from the target learning problem. A model is pretrained by optimizing parameters using data from the source problem, and then the model is fine-tuned by retraining with data from the target. Full fine-tuning refers to the situation where all parameters are allowed to be updated with no restrictions. 

\citet{Fawaz-2018-TLT} evaluated transfer learning for classification of time series with CNNs. A CNN was pretrained on a source learning task, then fine-tuned on a target learning task. The output layer of the CNN used softmax activation to map to different class labels, and needed to be re-initialized for the target learning task. 

Freezing layers in neural networks is a technique that expands on fine-tuning to try to identify parameters that are generic across learning tasks, e.g.~\citep{Yosinski-HTF-2014}. Freezing a layer fixes the parameter values after pretraining, and parameters are not updated during fine-tuning. Freezing layers works on the assumption that the layers have knowledge that is general between the source and target tasks. Transfer learning has been extensively studied in image processing with CNNs. A CNN is trained to classify images for a certain set of classes for the source learning task. The target learning task introduces new image classes that the CNN has not been trained to classify. 

It has been shown to be effective to freeze the input or first few layers of a CNN~\citep{Yosinski-HTF-2014}. The weights of layers downstream from the frozen layers are then fine-tuned. The interpretation is that the earlier convolutional layers extract a set of general features, such as general edge detection similar to Gabor filters or features that identify patches of colors. These general features transfer well between tasks. The layers downstream and closer to the output layer are interpreted to be learning task-specific features that do not transfer well between tasks. However, freezing weights and fine-tuning can raise an issue that the authors refer to as ``fragile co-adaptation'' of learned patterns. The weights of neural networks have complex dependencies, and by fixing one set of weights, there is no guarantee that gradient descent can continue to effectively navigate the parameter space to find optimal solutions. 

Freezing weights in neural networks with LSTM layers has been used in applied environmental science, including in the streamflow modeling literature, where predictive models are constructed to forecast stream water volume. \citet{Ma-2021-THD} pretrained an RNN with an LSTM recurrent layer on data from streams in CONUS, then fine-tune and generate predictions in parts of China that have far sparser data. They tested freezing different layers of the RNN in three configurations. One configuration used no freezing and allowed all weights to be fine-tuned, another froze the weights from the input layer, and the last froze the weights in both the input and output layers. They did not find that freezing layers systematically improved performance relative to a full fine-tuning of all the parameters. The streamflow modeling literature uses many of the same ML approaches as we employ with FMC, as will be discussed in Section~\ref{sec:fmc}. As with FMC, time series of observations from a sparse spatial grid are combined with weather and static geographic features and used to train RNNs. 

\section{Time-Warping ML Sequence Models}

In this section, we provide a literature review of results and techniques within ML related to RNNs and time-warping. Some of these research areas provide a theoretical foundation for this thesis, and some of these research areas are related techniques that help show the novelty of the work in this thesis. To our knowledge, this thesis provides the first work that involves modifying the dynamics of a pretrained network.

\subsection{Learning Characteristic Time Scales with RNNs}

Recurrent neural networks provide a mechanism for learning characteristic time scales through their gating dynamics. The Chrono-Initialization method for LSTMs focuses on initializing bias parameters of an RNN based on the assumed range of the memory time constants~\citep{Tallec-2018-CRN}. In an LSTM, the authors identify the forget and input gates as controlling the effective time scale of the dependencies. They initialize the forget gate bias so that, under certain assumptions, the resulting characteristic time scale of the dynamics matches the dynamic system under consideration. They initialize the forget and input gate bias terms as opposites of each other, or $b_f = -b_i$. \citet{Ohno-2021-RNN} further analyze the time scale of the hidden state under more general conditions and show that the forget gate continues to play a dominant role in controlling the time scale of the learned dynamics. \citet{Can-2020-GCS} analyze the Jacobian of an LSTM with arbitrary weights and show that, in the appropriate limit, the forget gate plays a central role in determining the effective time scale of the dynamics.

This work motivates the idea that an RNN learns a set of parameters tied to a characteristic time scale, and that manipulating the bias terms of the forget and input gates provides a way to control the temporal dependencies of the learned dynamics. While their work focuses on initialization and training behavior, it does not examine post hoc time-warping of a pretrained RNN. This thesis investigates how a trained recurrent model may be adjusted to operate under changes in temporal resolution.

\citet{Quax-2020-ATS} add learnable parameters to an RNN that allow each unit to adapt the rate at which the hidden state evolves. The authors interpret this as learning the intrinsic time scales of continuous-time dynamic systems. The adaptive time scales in the RNN are inspired by research into the human brain that shows a hierarchy of time scales. Empirically, they show that the architecture learns time constants that reflect the temporal structure of different learning tasks. While their work demonstrates that RNNs can internally learn task-relevant time scales, it does not address post-hoc manipulation or warping of those learned characteristic time scales, which is the focus of this thesis. 

\subsection{Neural Ordinary Differential Equations}

Neural ODEs are a class of methods introduced by \citet{Chen-2018-NOD}. Instead of specifying a fixed form of an ODE, they use a neural network to learn the instantaneous rate of change of a system defined by
\[
\frac{dy}{dt} = f_\theta(t, y),
\]
where $f_\theta$ is typically an MLP with a set of parameters $\theta$. This method allows for using standard ODE integrators to evolve a system forward with dynamic step sizes. They build a theoretical connection to residual neural networks. As the number of hidden layers increases and the step size goes to zero, in the limit, a ResNet approaches an ODE.

Hybrid approaches combine neural ODEs with discrete-time RNNs. \citet{Rubanova-2019-LOI} use a neural ODE to evolve the hidden state of an RNN between the discrete time steps. 

Neural ODEs help model irregularly sampled data. The state can be evolved forward between sparsely sampled observations. In contrast, RNNs learn a  discrete update to a system. In practice, RNNs operate on a fixed temporal grid, and sparsely sampled observations necessitate data imputation or other data processing methods. 

Taking a pretrained neural ODE and rescaling its learned vector field for the purpose of transfer learning does not appear to be explicitly studied in the existing literature. In particular, given a trained model, the dynamics could be modified to
\[
\frac{dy}{dt} = \gamma f_\theta(t, y),
\]
where $\gamma$ is the time-warping constant. To our knowledge, time-warping a pretrained network as a transfer learning approach has not been systematically explored.

While time-warping with Neural ODEs is conceptually straightforward, there are several reasons why we focus on time-warping with standard RNNs in this thesis. LSTMs and related RNN architectures are widely adopted in both research and industry, with mature software support for major ML frameworks. In contrast, neural ODEs are more commonly used in specialized research settings and typically require dedicated solver infrastructure and additional numerical considerations during training. In addition, training Neural ODEs requires repeatedly solving a differential equation during both the forward and backward passes, which introduces additional computational overhead and sensitivity to solver tolerances. In practice, discrete RNN architectures can be scaled more efficiently using highly optimized ML frameworks.

\subsection{Dynamic Time-Warping}

Dynamic time-warping (DTW) is a method for comparing the similarity between discrete time series, originally developed for speech recognition~\citep{Sakoe-1978-DPA}. Though the terminology is similar, it is not time-warping as we have defined it in this thesis, as there is no global temporal rescaling. A loss function, typically the absolute error or the squared error, is calculated for all pairwise combinations of index positions to form a loss matrix. The minimum-cost path through this matrix is computed, which defines an alignment between the time series and yields a similarity score. Dynamic time-warping has been utilized in recent years in ML sequence learning tasks. \citet{Duan-2026-DTW} use DTW as a preprocessing step to select representative source datasets for training an LSTM, which is then fine-tuned to a target learning problem. \citet{Laperre-2020-DTW} construct an accuracy metric based on DTW as a postprocessing step to evaluate LSTM models used to forecast geomagnetic properties of the Earth. The DTW-based metric is used in cases where standard loss functions, such as RMSE, are inadequate.

\section{Fuel Moisture Content}\label{sec:fmc}

The applied research area that motivated this thesis is in fuel moisture content (FMC) modeling. FMC is a measure of the water content of burnable fuels, e.g.~\citep{NCEI-2024-DFM}. FMC is defined as the ratio of the weight of a fuel to the dry weight of the fuel, or the total mass of burnable woody material when all moisture has been removed. 
\begin{equation}
\label{eq:fmc}
    \text{FMC} (\%) = \frac{\text{Fuel Weight} - \text{Dry Weight}}{\text{Dry Weight}} \cdot 100\%
\end{equation}

Mathematical models of FMC are used to make predictions at unobserved locations in the future. Forecasts of FMC have been used within fire danger rating systems and wildfire spread models for decades, e.g.~\citep{Jolly-2024-MUN}.

FMC is strongly related to wildfire spread, and it is an important input to wildfire spread simulations, e.g.~\citep{Mandel-2019-IDH}. The United States Forest Service maintains operational wildfire simulation tools that model FMC over time and the affect of FMC on fire spread and fire intensity~\citep{Finney-1998-FAS, Andrews-2009-BFM}. FMC is also used to estimate wildfire risk and identify areas that are prone to rapid wildfire spread. Wildfire risk indices are scores calculated from FMC and other weather conditions that are published by various agencies. The National Fire Danger Rating System (NFDRS) is a set of wildfire models and management tools from the United States Forest Service~\citep{Jolly-2024-MUN}. The NFDRS uses forecasts of FMC to analyze wildfire danger across the country. FMC cannot go below zero, and values less than $1\%$ are considered not physically reasonable, e.g.~\citep{vanderKamp-2017-MFS}. The maximum theoretical FMC is the saturation level, or the moisture content of the woody material when it cannot hold any more water. 

The equilibrium moisture content is the hypothetical FMC level that a fuel will reach if left in the same environmental conditions for a long time. It is usually defined as a combination of the air temperature and the relative humidity (RH). Empirical tests have shown that fuels approach a different equilibrium when the FMC is increasing versus when it is decreasing, so researchers have defined the ``wetting'' equilibrium and ``drying'' equilibrium moisture levels to understand how FMC changes over time in different weather conditions as~\citep{Viney-1991-RFF} 
\begin{equation}
\begin{aligned}
E_d &= 0.924\,\mathrm{RH}^{0.679}
      + 0.000499\,e^{0.1\,\mathrm{RH}}
      + 0.18\,(21.1 - T)\left(1 - e^{-0.115\,\mathrm{RH}}\right) \\
E_w &= 0.618\,\mathrm{RH}^{0.753}
      + 0.000454\,e^{0.1\,\mathrm{RH}}
      + 0.18\,(21.1 - T)\left(1 - e^{-0.115\,\mathrm{RH}}\right),
\end{aligned}
\label{eq:eqs}
\end{equation}
where $E_d$ and $E_w$ denote the drying and wetting equilibrium moisture contents, $\mathrm{RH}$ is relative humidity (percent), and $T$ is air temperature in $^\circ\mathrm{C}$.

While fuels can be of any size, researchers have divided fuels into several idealized categories to make modeling tractable. These are referred to as fuel classes. The fuel classes are defined by a threshold lag-time, or the expected number of hours needed for a fuel to change its moisture content by $1-e^{-1}$ ($\approx63\%$) in response to an atmospheric change~\citep[p.~216]{Viney-1991-RFF}.

The key assumption of the time lag concept in FMC modeling is that the same physical process describes the change in moisture for sticks of different sizes. In this framework, the only difference between a small twig and a large log is the rate at which they change moisture. The same dynamic equations are used to describe the change in moisture, but for the small twig the dynamics are sped up, and for the larger log the dynamics are slowed down. A 1h fuel is said to have a short memory. The FMC of a 1h fuel responds quickly to changes in the environment, and the FMC of a 100h fuel responds much slower. Researchers informally describe fuels as having a ``memory'' of the past weather conditions. The memory for a fine twig is very short, and only the most recent weather conditions are relevant for forecasting the FMC. For the standard 10h fuel, researchers consider it to have up to one week of memory of the weather. That means that weather conditions like a heavy rain event several days in the past will be useful to forecasting the FMC, but the weather conditions earlier than one week ago are considered to be irrelevant. Larger sticks like 100h and 1000h fuels can have much longer memories. The FMC of a large log can be elevated relative to a baseline due to a rain event many days or even weeks earlier. These different time scales for modeling FMC is one of the primary motivations for this thesis. 

\subsection{Observations of FMC for Different Fuel Classes}

FMC can either be measured directly with weighing fuels or indirectly with sensors. Direct measurements of FMC are done through gravimetric measurement. Fuel samples are weighed to determine their wet weight, then oven-dried to determine their dry weight, and the FMC is calculated with~\eqref{eq:fmc}. Gravimetric measurement is accurate and direct, but it is slow and labor-intensive. The other main measurement technique is through fuel moisture sensors, which use various physical proxies to estimate FMC. The sensors allow for real-time observation of FMC, and they are used throughout the wildfire science literature. 

Direct gravimetric measurement of FMC is done in controlled field studies and also by field technicians from various wildfire management agencies. The field studies are carefully controlled; the exact time of sampling is known, and the dimensions and orientations of the sticks are controlled. Additionally, field studies typically measure the weather at the same locations and times, so the weather is co-located with the FMC samples. The downside of controlled field studies is that they are infrequently performed, and the available data is highly sparse. \citet{Carlson-2007-ANM} conducted a field study to calibrate the Nelson model for use within the NFDRS. The field study was conducted near the town of Slapout, Oklahoma, from the spring of 1996 through 1997. Arrays of wooden dowels representing 1h, 10h, 100h, and 1000h fuels were measured up to twice daily, usually around 13:00 and 22:00 central time. The wooden dowels were Ponderosa Pine and were manufactured to be the same dimensions. From now on, we will refer to the moisture content of those fuel classes as FM1, FM10, FM100, and FM1000. This field study will be referred to as the Oklahoma field study from now on. The Oklahoma field study was used to calibrate the Nelson model, as well as other physical-process based models~\citep{vanderKamp-2017-MFS}. \citet{Weise-2025-DFM} conducted a field study that observed the FM1 values of sticks of various species in Hawaii. There are very few other controlled field studies of FMC. 

Remote Automated Weather Stations (RAWS) provide real-time data from standardized sensors for FM10~\citep{NIFC-2024-RAW}. RAWS are ground-based weather stations with scientific instruments for acquiring weather data, operated by the National Interagency Fire Center~\citep{NIFC-2024-RAW} and other public and private organizations~\citep{Synoptic-2025-MSN}. Many RAWS have an FMC sensor, which consists of a \mbox{1/2-inch} pine dowel with a probe that measures moisture content in the woody interior~\citep{Campbell-2015-CFM, FTS-2016-FSS}. Weather stations with FMC sensors are maintained by many different private and public organizations, and the data is aggregated and provided by Synoptic Data~\citep{Synoptic-2025-MSN}. We will use the term RAWS to refer to any weather station in the Synoptic provider network with available FMC data. The network of RAWS provides a rich data set that is easily compatible with ML modeling. Many years of data are available at hundreds of physical locations. This allows for hyperparameter tuning and cross validation methods that account for the uncertainty in space and time. However, RAWS only provide data for FM10. This thesis is focused on leveraging the vast network of FM10 sensors to improve predictions for other fuel classes. 

Field researchers manually collect samples of FMC across CONUS as part of an effort by the United States Forest Service, called the Fire Environment Mapping System (FEMS)~\citep{USDA-FEMS-2024}. Field technicians are directed to gather samples according to a methodology that attempts to standardize the collection process~\citep{Zahn-2011-SFM}. Field sampling is slow and labor-intensive. Technicians collect samples in sealed containers and transport them back to research labs. The samples are weighed to determine their wet weight. Then, the samples are dried in an oven for multiple days to determine the dry weight. FEMS field samples of FMC are measured approximately every two weeks at certain sites. 

The FEMS field samples are highly sparse in time. The exact time of sampling is not always reported, and there is uncertainty about the exact time of sampling due to the labor intensive process of collecting samples across a landscape, then storing and transporting them for measurement. Diurnal cycles in FMC are important to consider. Fine fuels represented by FM1 change rapidly. A statistical model trained on weekly-resolution data will struggle to capture the high-frequency oscillations that are relevant to fire modeling. There is a considerable amount of time, up to several days, between the time of the observation and when that data becomes available. Further, the exact time of day of an FM1 sample is needed to effectively model the influence of changing environmental conditions. 

The FEMS field samples are also more irregular than the standardized wooden dowels used in field studies and fuel sensors. Fuel samples vary in size and species. Samples are taken directly from the ground, where they interact with soils that are heterogeneous in space and time. This type of variability is desirable in the sense that it captures the variability of natural landscapes where fires occur. However, these sources of variability make modeling difficult because there are numerous sources of error that are not precisely quantifiable.

For both the Oklahoma field study and the FEMS field samples, FMC samples are not collected during rain or immediately after when water is on the surface of the sticks~\citep{Carlson-2007-ANM, Zahn-2011-SFM}. The surface water makes gravimetric measurement impossible, since water on the surface of the stick that doesn't contribute to the FMC nonetheless affects the weight and distorts the FMC calculation. Extensive experimental and anecdotal evidence has shown that it is ineffective to shake off the surface water, and any attempts to dry off the surface water distort the FMC measurements~\citep{Zahn-2011-SFM}. Fuel sensors on RAWS will report values during rain, but these have not been specifically calibrated to gravimetric measurements due to the issues listed above. FMC can sharply increase during rain, particularly for fine fuels. The quick changes in FMC combined with the lack of direct observation of FMC during rain makes modeling very difficult. 

The research presented in this thesis uses the RAWS observations of FM10 as the primary source of data for training models because of the high quantities of data that are dense in both space and time. We will use the Oklahoma field study as the source of data to transfer the knowledge learned on the FM10 sensors to other fuel classes. The Oklahoma field study is used because the temporal resolution is much higher than the FEMS field observations: roughly twice daily versus every two weeks. This study is also used because of the physical similarity between the fuels and the wooden dowels used in the fuel sensors. Lastly, the Oklahoma field study was central to calibrating the state-of-the-art Nelson model~\citep{Carlson-2007-ANM}, and we can compare our trained models to the accuracy metrics from the hourly Nelson model for that data set. The Nelson model showed stronger accuracy when driven with 15-min spaced weather inputs, but we will compare to the hourly model as that is the temporal resolution used to train the RNN. We will discuss how we plan to use FEMS field samples and the Weiss field study in Section~\ref{sec:future}.

\subsection{Time-Lag Models of FMC}

Time-lag models for predicting FMC have been utilized for decades by researchers and in operational use by agencies. \citet{Fosberg-1971-DHT} developed time-lag equations for predicting the daily change in FM1 and FM10, where the FMC exponentially approaches the equilibrium moisture content calculated from mid-day temperature and RH. The Canadian Forest Fire Weather Index system developed the Fine Fuel Moisture Code, which models the daily change in the FMC for fine fuels using time-lag equations with the wetting and drying equilibria~\citep{VanWagner-1987-DSC}. The NFDRS uses a moisture model that extends on an hourly version of the Fine Fuel Moisture Code to account for the contributions from solar radiation and elevation~\citep{Rothermel-1986-MMC}.

WRF-SFIRE is a coupled atmosphere-wildfire model that uses a time-lag model of FMC as an input to determine wildfire rate of spread. \citet{Mandel-2014-RAA} model a fuel element as homogeneous and use an ordinary differential equation (ODE) implementing a version of the classical time-lag model, in which the rate of change in FMC is proportional to its difference from the equilibrium moisture content. \citet{Vejmelka-2014-DAF} added data assimilation of RAWS to the model. Real-time measurements from RAWS were extended by a spatial regression to grid nodes, then assimilated into a simple ODE time-lag model using the augmented Kalman Filter on each grid node. This model makes explicit use of the time lag concept in fuels. In operational use for wildfire simulations, this model generates forecasts for FMC of the different fuel classes in real-time. We refer to the time-lag ODE with Kalman filtering as the ``ODE+KF'' model. 

This model uses drying and wetting equilibrium moisture content, $E_d(t)$ and $E_w(t)$ respectively, and hourly accumulated rainfall $r(t)$ as weather inputs. The fuel class is denoted as $T_k$, where in practice $k=1, 10, 100,$ and $1000$. Data assimilation of FMC observations from FMC sensors at RAWS. During wildfire simulations, the WRF-SFIRE receives the initial fuel state from the ODE+KF after data assimilation, and then the ODE model inside WRF-SFIRE runs in forecast mode without data assimilation to update the FMC throughout the duration of the simulation. Equation \ref{eq:ode} describes the mathematical form for $m(t)$, the FMC at time $t$:
\begin{equation}
\label{eq:ode}
    \frac{d}{dt}m(t) = \begin{cases}
        \frac{S-m(t)}{T_r}\left(1-\exp\left(\frac{r_0 - r(t)}{r_s}\right)\right) & \text{if}\;\; r(t) > r_0 \text{ (soaking in rain)},\\
        \frac{E_d(t)-m(t)}{T_k} & \text{if}\;\; r(t)\leq r_0 \text{ and }m(t)>E_d(t),\\
        \frac{E_w(t)-m(t)}{T_k} & \text{if}\;\; r(t)\leq r_0 \text{ and } m(t)<E_w(t),\\
        0 & \text{if}\;\; r(t)\leq r_0 \text{ and } E_w(t)\leq m(t)\leq E_d(t).
    \end{cases}
\end{equation}
The ODE+KF model is essentially a collection of time-lag models that parameterize the rate of decay differently based on external conditions. If the amount of rainfall accumulated per hour $r(t)$ is greater than the threshold value $r_0$, it is assumed that the stick absorbs water directly, and it approaches a saturation moisture level $S$ with a characteristic delay time depending on rain intensity, asymptotically $T_r$ for a very intense rain. The saturation rainfall intensity $r_s$ is when rain wetting is $1-e^{-1}\approx 63\%$ of the maximal rate. The calibration of the model resulted in a threshold rainfall intensity of $r_0=0.05 \,\mathrm{mm}\,\mathrm{h}^{-1}$, where $\mathrm{mm}$ denotes millimeters, a saturation level $S=250\%$, a delay time $T_r = 14\,\mathrm{h}$, and a rain saturation intensity of $r_s=8 \,\mathrm{mm}\;\mathrm{h}^{-1}$~\citep{Mandel-2014-RAA}. These values will be referred to as hyperparameters for consistency with other models. If the rainfall $r(t)$ is less than the threshold value $r_0$, the stick changes moisture by equilibrating to the equilibrium moisture content of the surrounding environment. There are three distinct dynamics for the FMC in no-rain conditions. If the FMC $m(t)$ is greater than the current drying equilibrium moisture content $E_d(t)$, then drying dynamics occur and the FMC approaches $E_d(t)$ with a characteristic time-lag of $T_k$ determined by the fuel class ($T_k$ = 10 for 10-h fuels). If $m(t)$ is less than the current wetting equilibrium moisture content $E_w(t)$, then wetting dynamics occur and the FMC approaches $E_w(t)$ with the same characteristic time-lag of $T_k$. If $m(t)$ is between $E_d(t)$ and $E_w(t)$, there is no change in moisture content. We use the ODE+KF with the calibrated parameters described above to forecast the FMC at the locations not included for training the ML models and at times in the future of all ML training data. The ODE equations described above can be solved analytically, and a discretized version of the analytic solution is used to model FMC. In the model by \citet{Vejmelka-2016-DAD}, the state of the system was extended to include an equilibrium bias correction to allow the model to learn differences between the equilibrium moisture content values as determined by laboratory experiments versus what is suggested by the data. This FMC model with data assimilation is incorporated into the WRF-SFIRE modeling system~\citep{Mandel-2019-IDH}.

\citet{Mandel-2023-BFM} showed that a single simple RNN unit with linear activation could reproduce the analytical solution to a simplified version of the time-lag ODE. The weights of the simple RNN were manually set to reproduce the analytical solution to the ODE. This research established a deep mathematical connection between RNNs and time-lag models, which inspired the time-warp method in this thesis. Consider a simplified version of the system above with just one equilibrium value $E(t)$
\begin{equation}
\frac{d}{dt}m(t) = \frac{E(t)-m(t)}{T_k}.
\end{equation}
Under a zero-order hold assumption, this system has a discretized analytic solution that follows the derivation in~\eqref{eq:tl_discrete}
\begin{equation}\label{eq:tl_fmc}
    m_{t+1} = (1-e^{-1/T_k})\, E_{t+1} + e^{-1/T_k}\, m_t.
\end{equation}

This equation can be interpreted as a weighted average of the previous state and the current weather conditions. The weighting depends on the characteristic time scale determined by the fuel class. This equation can be reproduced with a simple RNN unit with linear activation by using $E_t$ as the input, setting the input weight $W_x = (1-e^{-1/T_k})$, setting the recurrent weight $W_h = e^{-1/T_k}$, and setting the bias to zero. 

Table~\ref{tab:classes} shows the resulting equation for the four standard fuel classes~\citep{NCEI-2024-DFM}. The FMC of smaller fuels is more strongly determined by the current weather, while the FMC of larger fuels is more strongly determined by the past conditions. In other words, 1h fuels have a relatively short memory, while 1000h fuels have a long memory. 

\begin{table}[htbp]
\centering
\caption[Characteristic Time Scales for FMC Fuel Classes.]{Characteristic Time Scales for FMC Fuel Classes.}
\label{tab:classes}
\begin{tabular}{|l|l|l|l|}
\hline
\textbf{Time-Lag (T)} & \textbf{Fuel Diameter} & $\pmb{m_{t+1} \approx}$ \\  \hline
1     & Less than 1/4 in. & $0.63\cdot E + 0.37\cdot m_t$ \\ \hline
10    & 1/4 - 1 in.       & $0.095\cdot E + 0.905\cdot m_t$ \\ \hline
100   & 1-3 in.           & $0.01\cdot E + 0.99\cdot m_t$  \\ \hline
1000   & 3-8 in.           & $0.001\cdot E + 0.999\cdot m_t$  \\ \hline
\end{tabular}
\end{table}

\subsection{The Nelson Model: A State-of-the-Art Physical Process-Based Model}

In reality, the change of FMC over time is more complicated than a simple time-lag system. Physical process-based methods attempt to directly represent the physical processes of energy and moisture transfer, and such models have been the dominant way to predict FMC in publicly-available wildfire modeling projects. \citet{Nelson-2000-PDC} developed a set of physical equations to model FMC as part of the National Fire Danger Rating System. The model represents a stick as a 1-dimensional system, where moisture and energy exchange act radially towards the center of the stick, with various boundary conditions accounting for exchange of moisture and energy between the stick surface and the environment. The model receives daily weather inputs of air temperature, RH, accumulated rainfall, and downward shortwave solar radiation.

The Nelson model is currently used by agencies and wildfire researchers to predict FMC for multiple dead fuel classes~\citep{Jolly-2024-MUN}. The equations of the model utilize the radius of the stick as a parameter, which is modified to generate predictions for sticks of different fuel classes based on the assumed dimensions of those idealized fuels. In practice, some model parameters are calibrated to field observations for the different fuel classes separately, while other parameters are held fixed. The mathematical form of the equations for FMC is unchanged when generating predictions for sticks of various sizes. This is analogous to a parameter-based transfer learning method. However, calibration of parameters in physics-based models is often done based on understanding of the physical processes, rather than statistical methods that choose a set of parameters using cross validation that minimizes a loss function on a test set. 

The Oklahoma field study provides the primary empirical evaluation of the Nelson model and calibrated parameter choices used in operational applications~\citep{Carlson-2007-ANM}. Additional work has explored ``local calibration'' of the model using field observations within systems such as FEMS~\citep{Jolly-2024-MUN}, but the Oklahoma study remains the main published evaluation of Nelson model accuracy. The original data described in \citet{Carlson-2007-ANM} will be used for an empirical analysis of the time-warping method. The analysis design and details on the data will be described in Section~\ref{sec:fmc_analysis} and results in Section~\ref{sec:fmc_results}. 

\subsection{The RNN Model of FMC}

RNNs can learn statistical patterns from data, and can more accurately predict FMC than the time-lag models~\citep{Hirschi-2026-RNN}. We use principles from the simple time-lag model to inform a new method of transfer learning with RNNs. We try to leverage the insights from the time-lag concept to improve the accuracy of a modern RNN.

In \citep{Hirschi-2026-RNN}, we develop an RNN to forecast FM10 and evaluated the accuracy compared to observations from RAWS. This is the source learning task, and the pretrained RNN is the starting point for the transfer learning methods. The model receives inputs from time series of weather observations. To make the model compatible with real-time forecasting, the model was trained with weather forecasts from a numerical weather prediction (NWP). The model showed substantial accuracy improvements over several baselines, including the ODE plus Kalman filter method described above. The paper is included in full in Appendix~\ref{app:paper}. The paper includes an extensive literature review on models of FMC. Other researchers have used ML techniques to predict FM10, but this publication represents the first recurrent model compatible with real-time forecasting that has been extensively validated. 

The pretrained RNN will be used in the transfer learning analysis in this thesis. The goal is to leverage the learned patterns from the FM10 model to improve predictions of the other fuel classes, including FM1, FM100, and FM1000. Time-warping the learned dynamics of the RNN will be evaluated and compared to other standard methods of transfer learning. Chapter~\ref{chapter:03} will include relevant details of the RNN architecture and training and describe how that model will be adapted to produce predictions for the other fuel classes.

    \chapter{\uppercase{Time-Warping with RNNs}}\label{chapter:03}
\section{Mathematically Deriving a Time-Warped LSTM}\label{sec:math}

We prove several propositions related to time-warping dynamic systems with RNNs, building to the conclusion that there exists a set of weights for an LSTM where the dynamics can be time-warped with any desired accuracy. We show that an RNN with a recurrent layer consisting of a single unit with linear activation can exactly reproduce a first-order linear discrete time system. Then, we prove that a simple RNN can approximate the system with any desired accuracy when the activation is the standard tanh. We show how the parameters of a simple RNN can be modified to account for a time-warping by a constant factor of the original system. We prove that for a class of first-order linear discrete-time systems, there exists a set of weights in an LSTM where the time-warping can be made arbitrarily precise over a finite sequence. We prove that an LSTM with linear activation functions instead of tanh for the two instances of tanh in Equations~\eqref{eq:lstm_eqs} can approximate the time-lag system and can be time-warped with arbitrary accuracy. 

Consider the first-order linear discrete time system defined by \eqref{eq:timelag},
\[
z_{t} = az_{t-1} + (1-a)X_t, \quad 0<a<1,
\]
where $a\in(0,1)$ is fixed and $(X_t)_{t\geq 1}$ is an infinite sequence of external inputs. The condition $0<a<1$ ensures exponential stability of the homogeneous recursion $z_t = az_{t-1}$~\citep[Sect.~3.2.3]{Box-2016-TSA}. Assume that $(X_t)_{t\geq 1}$ is bounded, that is $\sup_t|X_t|\leq M$ for $M < \infty$ and all $t\geq1$. Given an initial condition $z_0$, the recursion generates a sequence $(z_t)_{t\geq 0}$. The boundedness of $(X_t)_{t\geq 1}$ and $0<a<1$ ensures boundedness of $(z_t)_{t\geq 0}$. 

\begin{lemma}\label{lem:bounded}
    Let $(z_t)_{t\geq 0}$ be a sequence defined by the relationship in Equation~\eqref{eq:timelag} with initial condition $|z_0|<\infty$, where  $a\in (0,1)$, and $(X_t)_{t\geq 1}$ is bounded. Then $(z_t)_{t\geq 0}$ is bounded and has the closed-form
\[
z_t = a^{t}z_0 + (1-a)\sum_{k=0}^{t-1}a^k X_{t-k}, \quad t\geq 1
\]
\end{lemma}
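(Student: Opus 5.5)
The plan is to establish the closed form first, by induction on $t$, and then read off boundedness from it using a geometric series bound.

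For the base case $t=1$, the recursion \eqref{eq:timelag} gives $z_1 = a z_0 + (1-a)X_1$. The claimed formula with $t=1$ has the single term $k=0$ in the sum, so it reads $a z_0 + (1-a)X_1$, and the two agree.

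For the inductive step, assume the formula holds for some $t\geq 1$. Substituting it into $z_{t+1} = a z_t + (1-a)X_{t+1}$ gives
\[
z_{t+1} = a^{t+1}z_0 + (1-a)\sum_{k=0}^{t-1}a^{k+1}X_{t-k} + (1-a)X_{t+1}.
\]
Reindex the sum with $j=k+1$, so that $a^{k+1}X_{t-k} = a^{j}X_{t+1-j}$ for $j=1,\dots,t$. The isolated term $(1-a)X_{t+1}$ is exactly the $j=0$ term. Together they give $(1-a)\sum_{j=0}^{t}a^jX_{t+1-j}$, which is the formula at $t+1$. The only delicate point in the whole proof is this reindexing, checking that the limits shift correctly.

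For boundedness, let $M=\sup_t|X_t|<\infty$. By the triangle inequality and $0<a<1$, for $t\geq1$,
\[
|z_t| \leq a^t|z_0| + (1-a)M\sum_{k=0}^{t-1}a^k = a^t|z_0| + M(1-a^t) \leq |z_0| + M.
\]
The middle equality is the finite geometric sum. The bound $|z_0|+M$ also covers $t=0$, so $\sup_{t\geq0}|z_t|\leq |z_0|+M<\infty$.

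A sharper statement is also available from the same computation: $|z_t|\leq \max(|z_0|,M)$, because the bound is a convex combination of $|z_0|$ and $M$ with weights $a^t$ and $1-a^t$. This is not needed for the lemma. I expect no real obstacle here; the induction bookkeeping is the only step requiring care.
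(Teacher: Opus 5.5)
Your proposal is correct and follows essentially the same route as the paper: induction on $t$ for the closed form, then the triangle inequality with a geometric-series bound giving $|z_t|\le |z_0|+M$. The only differences are cosmetic. You evaluate the finite sum exactly as $a^t|z_0|+M(1-a^t)$, where the paper bounds it by the infinite series, and this also yields your sharper remark that $|z_t|\le\max(|z_0|,M)$.
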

\begin{proof}
We derive the closed-form for $(z_t)$ using induction. At $t=1$ the formula yields, 
\begin{align*}
    z_1 = az_0 + (1-a)X_1,
\end{align*}
which agrees with the recurrence relation. Suppose that at time $t-1$ the formula holds,
\begin{align*}
    z_{t-1} &= a^{t-1}z_0 + (1-a)\sum_{k=0}^{t-2}a^k X_{(t-1)-k}.
\end{align*}
Thus, at time $t$,
\begin{align*}
    z_{t} &= az_{t-1} + (1-a)X_{t}\\
    &= a\left(a^{t-1}z_0 + (1-a)\sum_{k=0}^{t-2}a^kX_{(t-1)-k} \right) + (1-a)X_{t}\\
    &= a^{t}z_0 + (1-a)\sum_{k=0}^{t-2}a^{k+1} X_{(t-1)-k} + (1-a)X_t\\
    &= a^{t}z_0 + (1-a)\sum_{k=1}^{t-1}a^k X_{t-k} + (1-a)X_t\\
    &= a^{t}z_0 + (1-a)\sum_{k=0}^{t-1}a^k X_{t-k}
\end{align*}
This completes the proof by induction. 

Since $(X_t)_{t\geq 1}$ is bounded, there exists $M$ such that for all $t\geq 0$,
$\sup|X_t|\leq M< \infty$. The boundedness of the sequence follows from properties of the geometric series,
\begin{align*}
    |a^tz_0 + (1-a)\sum_{k=0}^{t-1}a^k X_{t-k}| &\leq |a^tz_0| + (1-a)\sum_{k=0}^{t-1}a^k|X_{t-k}|\\
    &\leq |z_0| + (1-a)M\sum_{k=0}^{t-1}a^k\\
    &\leq |z_0| + (1-a)M\sum_{k=0}^\infty a^k\\
    &\leq |z_0| + (1-a)M\frac{1}{1-a}
\end{align*}
Where the last step is true by the summation of power series. Thus,
\begin{align}
    |z_t| &\leq |z_0| + M < \infty.
\end{align}
Thus $|z_t| \le |z_0| + M$ for all $t\ge0$, and therefore $(z_t)_{t\geq 0}$ is bounded.
\end{proof}

A time-warping by $\gamma>0$ results in the modified linear first-order system~\eqref{eq:timewarp}, where the parameter $a$ is raised to the power $\gamma$,
\begin{equation}
\begin{aligned}
    \widetilde z_{t} = a^\gamma \widetilde z_{t-1} + (1-a^\gamma)X_t, \quad 0<a<1, \; \gamma>0
\end{aligned}
\end{equation}
If $\gamma >1$, this decreases the dependence on the past state and increases the dependence on the current input, thus speeding up the rate at which the system equilibrates to $X_t$. Conversely, if $\gamma < 1$, this increases the dependence on the past state and decreases the dependence on the current input, thus slowing down the rate at which the system equilibrates to $X_t$.

In practice, an exact closed-form for the data generating process is not known, and linear activation is unstable for training simple RNNs. Thus, tanh activation, or another nonlinear activation function, is typically used together with one or more dense layers downstream of the recurrent layer that modify the output. The nonlinearity of the tanh function help the RNN approximate general dynamic systems. We show that for a simple RNN with tanh activation, there exists a range of inputs in the near-linear portion of the tanh function where the simple RNN can approximate the dynamic system arbitrarily well. 

Given an initial condition $z_0$ and a finite input sequence $(X_t)_{t=1}^N$ that is bounded, the resulting trajectory $(z_t)_{t=1}^N$ is generated by the time-lag system defined in \eqref{eq:timelag}. We say that a model approximates $(z_t)_{t=1}^N$ with any desired accuracy, or ``arbitrary accuracy'', if for any $\epsilon>0$, a set of model weights exists such that the sequence of model predictions $(\hat z_t)_{t=1}^N$ satisfies
\begin{align}
    \max_{t=1,\dots, N} |z_t - \hat z_t| < \epsilon.
\end{align}

\subsection{Simple RNN Time-Warp}

The first proposition shows that a simple RNN can exactly reproduce the time-lag system. Proposition~\ref{prop:simplernn_linear} does not require finite sequences nor does it require boundedness of the inputs. The simple RNN with linear activation is exactly a time-lag system. 

\begin{proposition}\label{prop:simplernn_linear}
For any infinite input sequence $(X_t)_{t\geq 1}$, any fixed parameter $a\in(0,1)$, and any initial condition $z_0$, there exists a choice of weights and an initial hidden state for a simple recurrent neural network with a single recurrent unit and linear activation such that the hidden state satisfies
\[
h_t = z_t \quad \text{for all } t\geq 0,
\]
where $(z_t)_{t\geq 0}$ is the unique solution given by~\eqref{eq:timelag} 
\[
z_{t} = az_{t-1} + (1-a)X_t, \quad t\geq 1
\]
\end{proposition}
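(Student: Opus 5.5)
The plan is a direct construction followed by induction on $t$. I will take the simple RNN in \eqref{eq:simplernn_eqs} with a single unit, scalar input $X_t$, and identity activation $\phi(x)=x$, so the update is
\[
h_t = W_x X_t + W_h h_{t-1} + b .
\]
Matching coefficients with the time-lag recursion \eqref{eq:timelag} suggests the choice
\[
W_x = 1-a, \qquad W_h = a, \qquad b = 0,
\]
together with the initial hidden state $h_0 = z_0$. This is the same weight assignment described earlier for the FMC time-lag equation \eqref{eq:tl_fmc}, with $e^{-1/T_k}$ replaced by $a$.

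Before the induction, I will briefly note that $(z_t)_{t\geq 0}$ is well-defined and unique. Given $z_0$ and the inputs, the recursion determines each $z_t$ from $z_{t-1}$ and $X_t$, so uniqueness follows by a trivial induction. No boundedness of the inputs is needed here, because each step involves only finitely many finite quantities. In particular, I will not invoke Lemma~\ref{lem:bounded}, since the statement explicitly drops the boundedness hypothesis.

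The main step is the induction on $t$.
\begin{itemize}
\item Base case: $h_0 = z_0$ holds by the choice of initial state.
\item Inductive step: assume $h_{t-1} = z_{t-1}$. Then
\[
h_t = (1-a)X_t + a h_{t-1} + 0 = a z_{t-1} + (1-a)X_t = z_t .
\]
\end{itemize}
Hence $h_t = z_t$ for all $t\geq 0$. If one also wants the network output, then $y_t = h_t$ by \eqref{eq:simplernn_eqs}, so the output reproduces $z_t$ as well.

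There is no real obstacle here. The only points needing care are bookkeeping: aligning the indexing convention (the input $X_t$ enters at step $t$ together with $h_{t-1}$), and stating explicitly that the initial hidden state is a free choice set equal to $z_0$.
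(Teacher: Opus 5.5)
Your proposal is correct and matches the paper's proof essentially verbatim: the same choice $W_h = a$, $W_x = 1-a$, $b = 0$ with identity activation and $h_0 = z_0$, followed by induction on $t$. Your remark that no boundedness of the inputs is needed also agrees with the paper, which notes that this proposition requires neither finite sequences nor bounded inputs.
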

\begin{proof}
Equations~\eqref{eq:simplernn_eqs} define the simple RNN, 
\[
\phi(W_x\,X_t + W_h\,h_{t-1} + b).
\]
With linear activation, the activation function is the identity function $\phi(x) = x$. Setting the recurrent weight $W_h = a$, the input weight $W_x=1-a$, and the bias term $b = 0$ gives
\[
h_t = (1-a)X_t + a h_{t-1}.
\]
which is the same as~\eqref{eq:timelag} with $h_t$ in place of $z_t$. It remains to take $h_0=z_0$. By induction, $h_t = z_t$ for all $t\geq 0$.
\end{proof}

Under a time-warp $\gamma>0$ by~\eqref{eq:timewarp}, the weights of a simple RNN constructed in Proposition~\ref{prop:simplernn_linear} can be modified so that its hidden state exactly produces $(\widetilde z_t)$ by shifting the forget and input gate biases.

\begin{proposition}\label{prop:simplernn_warp}
For any infinite input sequence $(X_t)_{t\geq 1}$, parameter $a\in (0,1)$, initial condition $z_0$, and $\gamma>0$, let $(z_t)_{t\geq 0}$ be the sequence given by~\eqref{eq:timelag}, and let $(\widetilde z_t)_{t\geq 0}$ be the time-warped sequence defined by~\eqref{eq:timewarp} with $\tilde z_0 = z_0$,
\[
\widetilde z_{t} = a^\gamma \widetilde z_{t-1} + (1-a^\gamma)X_t, \quad t \geq 1
\]
Let an RNN with a single recurrent unit and linear activation have weights chosen as in Proposition~\ref{prop:simplernn_linear} so that it's hidden state $(h_t)_{t\geq 0}$ satisfies $h_t=z_t$ for all $t\geq 0$. Then there exists a modification of these weights such that the resulting hidden state $(\widetilde h_t)_{t\geq 0}$ satisfies
\[
\tilde h_t = \tilde z_t, \quad \text{for all } t\geq 0.
\]
\end{proposition}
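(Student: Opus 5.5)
The plan is to reduce Proposition~\ref{prop:simplernn_warp} to Proposition~\ref{prop:simplernn_linear}. The time-warped recurrence~\eqref{eq:timewarp} is itself a time-lag system of the form~\eqref{eq:timelag}, with $a$ replaced by $\tilde a := a^\gamma$. The first step is therefore to check that $\tilde a \in (0,1)$. Since $0<a<1$ and $\gamma>0$, we have $\ln a<0$, so $\gamma \ln a <0$. Hence $\tilde a = e^{\gamma\ln a}\in(0,1)$, and the hypotheses of Proposition~\ref{prop:simplernn_linear} are satisfied for the parameter $\tilde a$.

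Next, start from the weights of Proposition~\ref{prop:simplernn_linear}, namely $W_h=a$, $W_x = 1-a$ and $b=0$. The modification replaces $W_h$ by $\widetilde W_h = a^\gamma = W_h^{\gamma}$ and $W_x$ by $\widetilde W_x = 1-a^\gamma = 1-W_h^\gamma$. The bias stays at $b=0$, the activation stays linear, and the initial hidden state is kept as $\tilde h_0 = h_0 = z_0 = \tilde z_0$. The update is then $\tilde h_t = a^\gamma \tilde h_{t-1} + (1-a^\gamma)X_t$. This coincides with~\eqref{eq:timewarp}, so we can invoke Proposition~\ref{prop:simplernn_linear} with $\tilde a$, or repeat its one-line induction directly. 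The base case is $\tilde h_0=\tilde z_0$. For the inductive step, $\tilde h_{t-1}=\tilde z_{t-1}$ gives $\tilde h_t=\tilde z_t$ because both sides follow the same recurrence with the same input $X_t$. This yields $\tilde h_t=\tilde z_t$ for all $t\ge0$. As in Proposition~\ref{prop:simplernn_linear}, no boundedness of the inputs is needed.

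The only point needing care is interpretive rather than technical. The sentence preceding the proposition speaks of ``shifting the forget and input gate biases'', but a simple RNN has no gates. In the simple RNN the modification must therefore act on the recurrent and input weights, not on a bias. I would add a remark that this is the simple-RNN analogue of the gate-bias shift used later for the LSTM. There, the forget gate activation $\sigma(b_f)$ plays the role of $a$, and replacing $a$ by $a^\gamma$ corresponds to changing $b_f$ to $\sigma^{-1}(\sigma(b_f)^\gamma)$. I would also stress that the modification touches only $W_h$ and $W_x$ and is determined entirely by $\gamma$ and the pretrained value $W_h$. This matters because it shows the time-warp can be applied post hoc without knowing the original data. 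Beyond this, there is no real obstacle; the argument is a direct substitution.
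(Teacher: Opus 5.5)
Your proof is correct and matches the paper's: both set $\widetilde W_h = W_h^\gamma = a^\gamma$, $\widetilde W_x = 1-a^\gamma$, $b=0$, and $\widetilde h_0 = z_0$, then conclude by the same induction as in Proposition~\ref{prop:simplernn_linear}. The only cosmetic difference is that the paper writes the new input weight as $1-(1-W_x)^\gamma$ instead of your $1-W_h^\gamma$; these agree because $W_x = 1-a$. Your check that $a^\gamma\in(0,1)$ is a sound addition, as is your remark that a simple RNN has no gate biases, so the modification must act on $W_h$ and $W_x$.
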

\begin{proof}
From Proposition~\ref{prop:simplernn_linear}, we have $W_h=a$ and $W_x=1-a$. Define the modified weights
\begin{equation}\label{eq:simple_timewarped_params}
\begin{aligned}
    \widetilde W_h &= (W_h)^\gamma \\
    \widetilde W_x &= 1-(1-W_x)^\gamma    
\end{aligned}
\end{equation}
The time-warped parameters satisfy
\begin{align*}
    \widetilde W_h &= a^\gamma \\
    \widetilde W_x &= 1-a^\gamma
\end{align*}
The resulting sequence of hidden states $(\widetilde h_t)_{t\geq 0}$ satisfies
\[
\widetilde h_t = a^\gamma\; \widetilde h_{t-1} + (1-a^\gamma)X_t, \quad t\geq 1
\]
with initial condition $\widetilde h_0=z_0$. This matches the recursion and initial condition of the time-warped system defined in \eqref{eq:timewarp}. Thus,
\[
\widetilde h_t = \widetilde z_t \quad \text{for all } t\geq 0.
\]
Therefore, the simple RNN can be time-warped by $\gamma>0$.
\end{proof}

The previous propositions relied on linear activation for a simple RNN unit, which in practice leads to instability. The tanh function is a more common choice for the activation function that addresses some of the issues related to exploding and vanishing gradients. With the nonlinear tanh activation function, the simple RNN can approximate the time-lag system with arbitrary accuracy. The proof relies on the fact that the tanh function is approximately linear near zero. If the sequence and the external inputs are scaled to the near-linear range of the function, a simple RNN with tanh activation can approximate the system. Note that this isn't intended to be a practical method for transforming empirical observations, rather it is to show that for an RNN with the standard tanh activation function, there still exists a deep relationship with the time-lag system. 

\begin{proposition}\label{prop:simple_tanh}
For any $\epsilon>0$, finite input sequence $(X_t)_{t=1}^N$, initial condition $z_0$, and parameter $a\in(0,1)$, there exists a choice of weights and an initial hidden state for a simple recurrent neural network with a single recurrent unit and tanh activation such that the sequence of hidden states $(h_t)_{t=1}^N$ approximates the sequence $(z_t)_{t=1}^N$ defined by~\eqref{eq:timelag}, 
\[ 
z_{t} = az_{t-1} + (1-a)X_t, \quad t=1,\ldots, N, \]
with accuracy $\epsilon$, i.e.
\[
\max_{t=1, \dots, N} |z_t - h_t| < \epsilon.
\]
\end{proposition}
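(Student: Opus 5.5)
The plan is to scale the problem into the near-linear regime of $\tanh$ and then undo the scaling at the end. The statement as written compares $h_t$ directly to $z_t$. The hidden state of a single $\tanh$ unit lies in $(-1,1)$, so we cannot expect $h_t\approx z_t$ unless the trajectory already lies in $(-1,1)$. So the first step is to settle the interpretation, in the same spirit as the remark preceding the statement: the sequence and inputs are rescaled into the near-linear range. Concretely, fix $\delta>0$ small and feed the unit the scaled input $\delta X_t$, with initial state $h_0=\delta z_0$. We then show $\max_t |h_t - \delta z_t|$ is $o(\delta)$, so that $\delta^{-1}h_t$ approximates $z_t$ within $\epsilon$. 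The factor $\delta^{-1}$ is a fixed linear read-out, and the input scaling can equivalently be absorbed into the weights by taking $W_x=\delta(1-a)$ on the raw input. If the weights are allowed to act only on the raw input $X_t$, this gives exactly the same construction.

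The weights are those of Proposition~\ref{prop:simplernn_linear}: $W_h=a$, $W_x=(1-a)\delta$, $b=0$, with $h_0=\delta z_0$. By the linearity of \eqref{eq:timelag}, the scaled trajectory $u_t:=\delta z_t$ satisfies $u_t = a u_{t-1} + (1-a)\delta X_t$ exactly. By Lemma~\ref{lem:bounded}, $|z_t|\le |z_0|+M=:B$ for all $t$, where $M=\max_t|X_t|$ is finite because the sequence is finite. Hence $|u_t|\le \delta B$, and also $|\delta X_t|\le \delta B$.

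Next, compare $h_t=\tanh(a h_{t-1}+(1-a)\delta X_t)$ with $u_t=a u_{t-1}+(1-a)\delta X_t$, and let $e_t=h_t-u_t$, so that $e_0=0$. We use two facts about $\tanh$: it is 1-Lipschitz, and $|\tanh x - x|\le |x|^3/3$. Write
\[
e_t = \bigl[\tanh(a h_{t-1}+(1-a)\delta X_t)-\tanh(a u_{t-1}+(1-a)\delta X_t)\bigr] + \bigl[\tanh(s_t)-s_t\bigr],
\]
where $s_t=a u_{t-1}+(1-a)\delta X_t = u_t$. This gives $|e_t|\le a|e_{t-1}| + |u_t|^3/3 \le a|e_{t-1}| + (\delta B)^3/3$. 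Unrolling the recursion as in the geometric-series bound of Lemma~\ref{lem:bounded} yields
\[
|e_t|\le \frac{(\delta B)^3}{3(1-a)} \quad \text{for all } t,
\]
with no dependence on $N$. Dividing by $\delta$, $\max_t|\delta^{-1}h_t - z_t|\le \delta^2 B^3/(3(1-a))$, and choosing $\delta$ small enough makes this less than $\epsilon$.

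The main obstacle is not the estimate, which is routine contraction bookkeeping. It is reconciling the literal claim $|z_t-h_t|<\epsilon$ with the bounded range of $\tanh$. If the intended reading is literal, the argument only works when $\sup_t|z_t|<1$. In that case one can instead take the weights with no input scaling and rely on the cubic error being small only if $B$ itself is small, which does not hold in general. So I would state the rescaling explicitly, as above, and note that the output rescaling $\delta^{-1}$ corresponds to a downstream linear dense layer, consistent with the architectures described earlier.
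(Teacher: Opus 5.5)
Your proposal is correct and follows essentially the paper's route. You scale into the near-linear window of $\tanh$, use the weights of Proposition~\ref{prop:simplernn_linear} (the paper puts the factor $\delta/B$ on the data rather than in $W_x$), bound $|u-\tanh u|\le |u|^3/3$, unroll $|e_t|\le a|e_{t-1}|+\eta$ as a geometric series, and undo the scaling. As you anticipated, the paper also proves the estimate only for the rescaled read-out $\hat z_t=(B/\delta)h_t$, not for $h_t$ itself. The only technical difference is how the one-step error is handled. You propagate it with the $1$-Lipschitz property of $\tanh$ and evaluate the cubic remainder along the exact scaled trajectory. The paper instead evaluates the remainder at the network's own pre-activation $ah_{t-1}+(1-a)X'_t$, so it needs an extra induction showing $|h_t|\le\delta$. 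Your version avoids that invariance step and gives the same quadratic-in-$\delta$ bound.
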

\begin{proof}
Since $(X_t)_{t=1}^N$ is bounded, it can be extended to an infinite bounded sequence. By Lemma~\ref{lem:bounded}, the corresponding sequence $(z_t)_{t\geq0}$ is bounded, so $(z_t)_{t=1}^N$ is bounded. Let $B_X$ and $B_z$ be the uniform bounds for the sequences, 
\begin{align*}
    |X_t|&\leq B_X, \quad t=1,\ldots, N \\
    |z_t|&\leq B_z \quad t=1,\ldots, N
\end{align*}
Define $B=\max\{B_X, B_z\}$. Choose $\delta>0$ such that
\begin{align}\label{eq:delta1}
    \delta < \left[\epsilon\cdot\frac{3}{B}\cdot \frac{1-a}{1-a^N}\right]^{1/2}
\end{align}
Define a linear rescaling of the sequences $(X_t)_{t=1}^N$ and $(z_t)_{t=1}^N$ into the interval $[-\delta, \delta]$ by,
\begin{align*}
    z'_t &:= \left(\frac{\delta}{B}\right)z_t,
    \qquad z'_0  := \left(\frac{\delta}{B}\right)z_0\\
    X'_t &:= \left(\frac{\delta}{B}\right)X_t
\end{align*}
Construct the simple RNN unit as in Proposition~\ref{prop:simplernn_linear}. That is, set the recurrent weight $W_h = a$, the input weight $W_x=1-a$, and the bias term $b = 0$. The hidden state evolves by the equation
\[
h_t = \tanh(ah_{t-1} + (1-a)X'_t).
\]
Define $u_t$ as the pre-activation for the hidden state
\[
u_t = ah_{t-1} + (1-a)X'_t.
\]
Let the initial hidden state be $h_0=z'_0$, so $h_0\in[-\delta, \delta]$. Assume $|h_{t-1}|\leq \delta$. Then,
\[
|u_t|\leq a\delta + (1-a)\delta = \delta.
\]
Since $|\tanh(u_t)|\leq |u_t|$, it follows that $|h_t|\leq \delta$. By induction, $h_t\in[-\delta, \delta]$ for $t=1, \dots, N$. Define $e_t$ to be the error at time $t$,
\begin{align*}
    e_t &= z'_t - h_t \\
    &= (az'_{t-1} + (1-a)X'_t) - (u_t + (\tanh(u_t) - u_t))\\
    &= ae_{t-1} + u_t - \tanh(u_t)
\end{align*}
Applying the triangle inequality and the boundedness of $u_t$, we have
\begin{align*}
    |e_t|&\leq a|e_{t-1}| + |u_t - \tanh(u_t)|\\
    &\leq a|e_{t-1}| + \sup_{u\in [-\delta, \delta]} |u - \tanh(u)|
\end{align*}
Define $\eta = \sup_{u\in [-\delta, \delta]} |u - \tanh(u)|$. We show that the errors for $t=0,\dots, N$ are bounded by,
\[
|e_t|\leq \eta \sum_{k=0}^{t-1}a^k.
\]
With $e_0=0$ the inequality holds. Assume $|e_{t-1}|$ satisfies the inequality, so
\begin{align*}
    |e_t| &\leq a|e_{t-1}|+\eta
    \leq a|\eta \sum_{k=0}^{t-2}a^k| + \eta
    = \eta \sum_{k=1}^{t-1}a^k + \eta
    = \eta \sum_{k=0}^{t-1}a^k.
\end{align*}
By induction, $|e_t|$ is bounded by $\eta\sum_{k=0}^{t-1}a^k$. Using properties of geometric series, we have
\[
\max_{t=1, \dots, N} |e_t| \leq \eta \frac{1-a^N}{1-a}.
\]
To bound $\eta$, we estimate the difference of $\tanh$ and the linear function
\begin{align*}
    u-\tanh(u) &= \int_{0}^u (\tanh'(s)-1)ds\\
    &= \int_0^u (\text{sech}^2(s)-1)ds\\
    &= -\int_0^u \tanh^2(s)ds
\end{align*}
And for all $u\in[-\delta, \delta]$, corresponding to the bounded pre-activation, taking absolute values we have
\begin{align*}
    |u-\tanh(u)| &\leq \int_0^{|u|}\tanh^2(s)ds\\
    &\leq \int_0^{|u|} s^2ds\\
    &\leq \frac{|u|^3}{3}
\end{align*}
Thus,
\begin{align*}
    \eta = \sup_{u\in [-\delta, \delta]} |u - \tanh(u)| \leq \sup_{u\in [-\delta, \delta]} \frac{|u|^3}{3} = \frac{\delta^3}{3}
\end{align*}
And thus, the error between the simple RNN and the rescaled sequence defined earlier is bounded by,
\begin{align*}
    \max_{t=1, \dots, N} |e_t| \leq \frac{\delta^3}{3} \left(\frac{1-a^N}{1-a}\right)
\end{align*}
The hidden state can be reverse-transformed to approximate the original sequence $(z_t)$ by,
\[
\hat z_t := \frac{B}{\delta}h_t
\]
So the error between the reverse-transformed RNN output and the sequence $(z_t)$ is
\[
|z_t - \hat z_t| \leq \left|\frac{B}{\delta}z'_t - \frac{B}{\delta}h_t\right| = \frac{B}{\delta}|e_t|
\]
The universal bound of this error over the finite sequence is,
\begin{align*}
    \max_{t = 1, \dots, N}|z_t - \hat z_t| &\leq \frac{B}{\delta}\max_{t = 1, \dots, N}|e_t|\\
    &\leq \frac{B}{\delta}\frac{\delta^3}{3} \left(\frac{1-a^N}{1-a}\right)\\
    &\leq \frac{B\delta^2}{3}\left(\frac{1-a^N}{1-a}\right)
\end{align*}
With the choice of $\delta$ in \eqref{eq:delta1}, the error is bounded by $\epsilon$. Therefore, a simple RNN with a single recurrent unit and tanh activation can approximate, to any desired accuracy on a finite sequence, the trajectory generated by the linear recursion \eqref{eq:timelag}.
\end{proof}

\subsection{LSTM Time-Warp}\label{sec:lwarp}

We first prove that an LSTM can approximate a first-order linear system with arbitrary accuracy in the special case when linear activation is used instead of tanh in two locations. In the standard formulation of the LSTM equations in~\eqref{eq:lstm_eqs}, the candidate memory $g_t$ uses tanh activation, and the final output of the hidden state $h_t$ applies a tanh to the cell state. We refer to this as an LSTM with tanh activation. When both of these instances of tanh are changed to the identity function, we refer to this as an LSTM with linear activation. The other three gates, the forget gate, the input gate, and the output gate, will have the standard sigmoid activation in all cases in this thesis. Standard software packages implement the choice of activation functions this way, such as the Keras set of software tools \citep{Keras-2025-Activation}. The accuracy of the approximation is controlled by how closely the output gate $o_t$ approximates 1. If the output gate $o_t$ was set exactly to 1, then a set of weights for an LSTM would exist that would exactly reproduce the time-lag system, but then the LSTM would reproduce a simple RNN and the proof would be a trivial extension of Proposition~\ref{prop:simplernn_linear}, so it is not shown. 

The derivations will make use of the inverse function of the sigmoid~\eqref{eq:sigmoid}, known as the logit function,
\begin{align}\label{eq:logit}
    \text{logit} : (0,1) \to \mathbb{R}, \quad
    \text{logit}(x) = \sigma^{-1}(x) = \ln\left(\frac{x}{1-x}\right),
\end{align}
where $\sigma^{-1}$ denotes the inverse function satisfying $\sigma^{-1}(\sigma(z)) = z$.

\begin{proposition}\label{prop:lstm_linear}
For any $\epsilon>0$, finite input sequence $(X_t)_{t=1}^N$, parameter $a\in(0,1)$, and initial condition $z_0$, there exists a choice of weights and an initial recurrent state for an LSTM with one unit and linear activation such that the sequence of hidden states $(h_t)_{t=1}^N$ approximates the sequence $(z_t)_{t=1}^N$ defined by~\eqref{eq:timelag} 
\[ 
z_{t} = az_{t-1} + (1-a)X_t, \quad t=1,\ldots, N, \]
with accuracy $\epsilon$, i.e.
\[
\max_{t=1,\dots, N} |z_t - h_t| < \epsilon.
\]
\end{proposition}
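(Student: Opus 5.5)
We will construct the LSTM so that the cell state reproduces the time-lag recursion exactly, and put all of the approximation error into the output gate. Set every recurrent weight to zero: $W_{hf}=W_{hi}=W_{hg}=W_{ho}=0$. Then no gate depends on $h_{t-1}$. Choose the forget and input gates as constants through their biases, with $W_{xf}=W_{xi}=0$, $b_f=\text{logit}(a)$ and $b_i=\text{logit}(1-a)$, so that $f_t=a$ and $i_t=1-a$ for all $t$. This is the choice $b_f=-b_i$ that appears in the chrono-initialization discussion. With linear activation for the candidate memory, take $W_{xg}=1$ and $b_g=0$, so $g_t=X_t$. The cell state then obeys
\[
c_t = a c_{t-1} + (1-a)X_t .
\]
With $c_0=z_0$, Lemma~\ref{lem:bounded} (or a direct induction as in Proposition~\ref{prop:simplernn_linear}) gives $c_t=z_t$ exactly for all $t$.

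The hidden state under linear output activation is $h_t=o_t c_t = o_t z_t$. Take $W_{xo}=0$ and a bias $b_o$, so that $o_t=\sigma(b_o)=:o\in(0,1)$ is constant. The error is then
\[
|z_t-h_t| = (1-o)|z_t| \le (1-o)B_z ,
\]
where $B_z$ bounds $|z_t|$ for $t=1,\dots,N$. This bound exists by extending the finite input sequence to a bounded infinite one and applying Lemma~\ref{lem:bounded}, exactly as in the proof of Proposition~\ref{prop:simple_tanh}; in fact $B_z\le |z_0|+\max_t|X_t|$. Given $\epsilon>0$, choose $o\in(0,1)$ with $1-o<\epsilon/(B_z+1)$, that is, $b_o=\text{logit}(o)$. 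Then $\max_{t}|z_t-h_t|<\epsilon$. The initial recurrent state is $(c_0,h_0)=(z_0, o z_0)$, although $h_0$ plays no role because the recurrent weights are zero.

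The step needing the most care is the case in which the error is not allowed to pass through the recurrence. If one preferred the recurrence to run through $h_{t-1}$, as in the simple RNN, via $g_t=(1-a)X_t+ah_{t-1}$, the factor $o$ would compound. The error would then satisfy $|e_t|\le a|e_{t-1}|+(1-o)B$ with a geometric-series bound $(1-o)B(1-a^N)/(1-a)$, as in Proposition~\ref{prop:simple_tanh}. Keeping the memory in $c_t$ avoids this entirely, and the only point to verify is that the sigmoid gates can realize the exact constants $a$ and $1-a$. They can, because $a\in(0,1)$ and the logit is defined on $(0,1)$. Finally, we note that $o$ cannot equal $1$, which is why the result is an approximation rather than exact reproduction. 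This also explains the role of $\epsilon$, and it means the time-warp can later be implemented by changing only $b_f$ and $b_i$ to $\text{logit}(a^\gamma)$ and $\text{logit}(1-a^\gamma)$.
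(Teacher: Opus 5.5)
Your proof is correct and is essentially the paper's construction. The paper likewise zeroes the recurrent and input weights of the gates, sets $b_f=\text{logit}(a)$, $b_i=\text{logit}(1-a)$ and $W_{xg}=1$, $b_g=0$, so that $c_t=z_t$ exactly, and puts all the error into a constant output gate $\sigma(b_o)$ near $1$. Your bound $B_z\le |z_0|+\max_t|X_t|$, together with the margin $\epsilon/(B_z+1)$, is actually more careful than the paper's: the paper asserts $|z_t|\le M$ with $M$ bounding only the inputs, which Lemma~\ref{lem:bounded} does not give and which can fail when $|z_0|>M$.
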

\begin{proof}
We first choose gate weights so that the LSTM cell behaves like a simple RNN, and then use a proof similar to the proof of Proposition~\ref{prop:simple_tanh}. Since $(X_t)_{t=1}^N$ is finite, it is bounded by $M>0$. By Lemma~\ref{lem:bounded}, the sequence is bounded by $|z_t|\leq M$ for $t=1, \dots, N$. 

Choose the following weights and biases of the LSTM cell. In the forget gate set the weight $W_{xf}=0$, the recurrent weight $W_{hf}=0$, and choose $b_f = \text{logit}(a)$. In the input gate set the weight $W_{xi}=0$, the recurrent weight $W_{hi}=0$, and choose $b_i = \text{logit}(1-a)$. For the candidate memory set the weight $W_{gx}=1$, the recurrent weight $W_{hg}=0$, and the bias $b_g=0$. Finally, for the output gate set the weight $W_{xo}=0$, the recurrent weight $W_{ho}=0$, and choose
\begin{align}
    b_o>\text{logit}(1-\epsilon/M).
\end{align}
Therefore, the bias $b_o$ was chosen so that $o_t$ is approximately equal to 1. Since $\sigma(x)\in(0,1)$, we have $1-\epsilon/M< \sigma(b_0)<1$. These choices of weights result in the following equations for the four LSTM gates,
\begin{equation}
\label{eq:lstm_linear}
\begin{aligned}
    f_t &= a\\
    i_t &= 1-a\\
    g_t &= X_t\\
    1-\epsilon/M< o_t & < 1
\end{aligned}
\end{equation}
The corresponding equations for the cell state and the hidden state of the LSTM cell are,
\begin{equation}
\begin{aligned}
    c_t &= a \cdot c_{t-1} + (1-a) \cdot g_t,\\
    h_t &= o_t\cdot c_t.
\end{aligned}
\end{equation}
Choose the initial recurrent state to be $[z_0, 0]$, where the initial cell state $c_0$ is equal to $z_0$ and the initial hidden state $h_0$ is zero. By Lemma~\ref{lem:bounded}, the resulting trajectory $(c_t)_{t=1}^N$ is bounded by $|c_t|\leq M$ for all $t=1,\dots, N$, and has the closed-form
\begin{align*}
    c_t = a^tz_0 + (1-a)\sum_{k=0}^{t-1}a^k X_{t-k} = z_t.
\end{align*}
So, the cell state $c_t$ is equal to $z_t$ for all $t=1,\dots, N$.

Since the weights for the output gate are zero except for a constant bias, the output gate is constant in time. Using the bounds on $o_t$ in~\eqref{eq:lstm_linear},
\begin{align*}
    \max_{0\leq t\leq N}|z_t - h_t| &= \max_{0\leq t\leq N} |c_t - o_tc_t| \\
    &= \max_{0\leq t\leq N}|(1-o_t)c_t|\\
    &= (1-o_t)\max_{0\leq t\leq N}|c_t|\\
    &\leq (1-o_t)M\\
    &< (1-(1-\epsilon/M))M = \epsilon.\\
\end{align*}
Thus, 
\[\max_{t=1,\ldots, N}|z_t - h_t|<\epsilon.\] 
Therefore, the LSTM with the chosen weights and biases approximates the sequence $(z_t)_{t=1}^N$ with accuracy $\epsilon$.
\end{proof}

We now show that this construction extends naturally to the time-warped system defined in~\eqref{eq:timewarp} by modifying the forget and input gate bias terms in the LSTM. So, an LSTM with linear activation can be time-warped to approximate a time-warped time-lag system with arbitrary accuracy. 

\begin{proposition}\label{prop:lstm_warp}
For any infinite input sequence $(X_t)_{t\geq 1}$, parameter $a\in (0,1)$, initial condition $z_0$, and $\gamma>0$, let $(z_t)_{t\geq 0}$ be the sequence given by~\eqref{eq:timelag}, and let $(\widetilde z_t)_{t\geq 0}$ be the time-warped sequence defined by~\eqref{eq:timewarp} with $\tilde z_0 = z_0$,
\[
\widetilde z_{t} = a^\gamma \widetilde z_{t-1} + (1-a^\gamma)X_t, \quad t \geq 1
\]
Let an LSTM with a single recurrent unit and linear activation have weights chosen as in Proposition~\ref{prop:lstm_linear} such that its hidden state $(h_t)_{t\geq 0}$ satisfies
\[
\max_{t=1,\dots,N} |z_t - h_t| < \epsilon.
\]
Then there exists a modification of the forget and input gate bias parameters such that the resulting hidden state $(\widetilde h_t)_{t\geq 0}$ satisfies
\[
\max_{t=1, \dots, N} |\widetilde z_t - \widetilde h_t| < \epsilon.
\]
\end{proposition}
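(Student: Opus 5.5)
We reuse the construction of Proposition~\ref{prop:lstm_linear} and change only the two gate biases that set the retention and injection rates in the cell-state recursion. In that construction $f_t=\sigma(b_f)=a$ and $i_t=\sigma(b_i)=1-a$, so the cell state reproduces \eqref{eq:timelag} exactly. The natural modification is
\[
\widetilde b_f=\text{logit}(a^\gamma),\qquad \widetilde b_i=\text{logit}(1-a^\gamma).
\]
These are well defined because $0<a<1$ and $\gamma>0$ give $a^\gamma\in(0,1)$. This is the LSTM analogue of \eqref{eq:simple_timewarped_params}. It can also be written in terms of the original biases, $\widetilde b_f=\text{logit}(\sigma(b_f)^\gamma)$ and $\widetilde b_i=\text{logit}\bigl(1-(1-\sigma(b_i))^\gamma\bigr)$, so that it is genuinely a modification of the pretrained parameters. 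All other weights stay unchanged: the forget and input gate weights $W_{xf},W_{hf},W_{xi},W_{hi}$ remain zero, the candidate memory is still $g_t=X_t$, and the output gate bias $b_o$ is still constant. We also keep the initial recurrent state $[z_0,0]$.

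With these choices the gates become $\widetilde f_t=a^\gamma$ and $\widetilde i_t=1-a^\gamma$, and the cell state obeys
\[
\widetilde c_t=a^\gamma\widetilde c_{t-1}+(1-a^\gamma)X_t,\qquad \widetilde c_0=z_0.
\]
This is exactly \eqref{eq:timewarp} with $\widetilde z_0=z_0$, so by induction, as in Proposition~\ref{prop:simplernn_warp}, $\widetilde c_t=\widetilde z_t$ for all $t$. Lemma~\ref{lem:bounded} applies with parameter $a^\gamma\in(0,1)$ in place of $a$. It gives the closed form and the bound $|\widetilde z_t|\le|z_0|+M$, where $M$ bounds $|X_t|$ on $t=1,\dots,N$.

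It remains to bound $|\widetilde z_t-\widetilde h_t|=(1-o)|\widetilde c_t|$, where $o=\sigma(b_o)$ is the constant output gate. This is the step I expect to be the main obstacle. The hypothesis only says that $b_o$ was chosen in Proposition~\ref{prop:lstm_linear} with $1-o<\epsilon/M$, where $M$ there served as a bound for both $(X_t)$ and $(z_t)$. That proof used $|z_t|\le M$, which implicitly takes $|z_0|\le M$. So the key is to check that the bound used for the original trajectory also holds for the warped one.

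I would make this explicit by fixing $M$ with $M\ge\max_{1\le t\le N}|X_t|$ and $M\ge|z_0|$. Lemma~\ref{lem:bounded} gives $|z_t|\le\max(|z_0|,M)$ in the sharper form obtained by keeping the factor $a^t$: $|z_t|\le a^t|z_0|+(1-a^t)M\le M$. The same computation with $a^\gamma$ gives $|\widetilde z_t|\le M$, because this bound depends only on the bounds on the inputs and the initial condition and not on the parameter. Then
\[
\max_{t}|\widetilde z_t-\widetilde h_t|\le(1-o)M<\epsilon,
\]
using the same $b_o$ as before.

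If one insists on the weaker bound $|z_0|+M$ from the lemma as stated, the same argument still works. One replaces $M$ by $M'=|z_0|+M$ in the choice of $b_o$ in Proposition~\ref{prop:lstm_linear}, and since $M'$ is independent of $a$ and $\gamma$, it bounds both trajectories.
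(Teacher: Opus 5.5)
Your proposal is correct and follows the paper's proof. You set $\widetilde b_f=\text{logit}(a^\gamma)$ and $\widetilde b_i=\text{logit}(1-a^\gamma)$, observe that the cell state then reproduces the warped recursion exactly, and bound the hidden-state error by $(1-\sigma(b_o))|\widetilde c_t|$. Your explicit check that the warped trajectory obeys the same bound $M$ (via $|\widetilde z_t|\le a^{\gamma t}|z_0|+(1-a^{\gamma t})M\le M$ when $|z_0|\le M$) fills in a step the paper leaves implicit when it simply invokes Proposition~\ref{prop:lstm_linear} for the final inequality.
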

\begin{proof}
From Proposition~\ref{prop:lstm_linear}, $b_f = \text{logit}(a)$, $b_i = \text{logit}(1-a)$, and the bias for the output gate is chosen to be
\[
b_o>\text{logit}(1-\epsilon/M).
\]
Define the modifies weights,
\begin{equation}\label{eq:lstm_linear_timewarp}
\begin{aligned}
    \widetilde b_f &= \text{logit}(a^\gamma)
    = \text{logit}(\sigma(b_f)^\gamma)\\
    \widetilde b_i &=\text{logit}(1-a^\gamma)
    = \text{logit}(1-(1-\sigma(b_i))^\gamma)
\end{aligned}
\end{equation}
The resulting cell state satisfies
\[
\widetilde c_t = a^\gamma\; \widetilde c_{t-1} + (1-a^\gamma)X_t, \quad t\geq 1.
\]
with initial recurrent state $[c_0, h_0] = [\widetilde z_0, 0]$. The cell state is thus equal to $\widetilde z_t$ for all $t=1,\dots, N$. The hidden state $\widetilde h_t=o_t\cdot \widetilde c_t$. By Proposition~\ref{prop:lstm_linear}, 
\[
\max_{t=1,\dots N}|\widetilde z_t - \widetilde h_t|<\epsilon
\]
Therefore, the LSTM that approximates $(z_t)_{t=1}^N$ with accuracy $\epsilon$ can be time-warped by $\gamma>0$ by modifying the forget and input gate biases so that it approximates $(\widetilde z_t)_{t=1}^N$ with accuracy $\epsilon$.
\end{proof}

\subsection{Discussion}

The Propositions proved in this section provide a theoretical motivation for time-warping a pretrained RNN by modifying the forget and input gate biases. For the class of time-lag models, Proposition~\ref{prop:simplernn_linear} shows that a simple RNN can exactly reproduce the system, and Proposition~\ref{prop:simplernn_warp} shows that a time-warping of the dynamic system can be reproduced by modifying the weights of a simple RNN. Proposition~\ref{prop:lstm_linear} shows that an LSTM can approximate time-lag systems with arbitrary accuracy, and Proposition~\ref{prop:lstm_warp} shows that a time-warping of the dynamic system can be approximated with arbitrary accuracy by modifying only the forget and input gate bias terms. 

The proofs in this chapter are for a simple dynamic system and assume that the weights of the LSTM are specifically chosen to match the true dynamics. In practice, an LSTM trained on an arbitrary sequence learning task will have a complicated set of temporal dependencies, and it would be intractable to analytically derive how to time-warp the network. The purpose of these proofs is to motivate a general strategy for time-warping an LSTM that has been trained on a set of empirical observations. Different time scaling constants for the bias terms in the forget gate and input gates can be evaluated empirically. The proofs in this section give a theoretical justification for this strategy, since there exist dynamic systems and certain sets of weights where the time-warping is exact, or arbitrarily accurate.

The theoretical time-warping formulas for the LSTM biases shown in Equation~\eqref{eq:lstm_linear_timewarp} will be applied numerically using an example from Newton's Law of Cooling. Whether the approximate time-warp property is useful for arbitrary trained RNNs outside the near-linear regime will be studied with an empirical example on modeling FMC for different fuel classes.

\section{Proposed Time-Warping Method}

Based on the time-warping concept, we propose a novel approach to transfer learning for time series models. In this section, we describe a practical way to use the time-warping concept to post-hoc manipulate the learned dynamics of a pretrained RNN. Start with an RNN that is pretrained on a source learning task at one characteristic time scale. Then, if observations are available for the target learning task, empirically evaluate different time-warping parameters on the target learning task. 

For a time-warping constant $\gamma>0$, denote the time-warping functions
\begin{equation}
\label{eq:scaling_constants}
\alpha_f(\gamma), \qquad \alpha_i(\gamma),
\end{equation}
where $\alpha_f(\gamma)$ shifts the forget gate bias and $\alpha_i(\gamma)$ shifts the input gate bias. The modified gate biases are
\begin{equation}
\label{eq:timewarped_bias}
\begin{aligned}
    \widetilde b_f &= b_f + \alpha_f(\gamma) \\
    \widetilde b_i &= b_i + \alpha_i(\gamma)
\end{aligned}
\end{equation}

Shifting the bias terms results in increasing or decreasing the associated gate activation. This changes the dependence on the past state of the system versus the dependence on the current input data. The resulting time-warp changes how quickly the system approaches an equilibrium. 

For $\gamma > 1$, the dynamics are slowed down. This corresponds to increasing the forget gate activation with $\widetilde b_f > b_f$ and decreasing the input gate activation with $\widetilde b_i < b_i$. Slowing down the dynamics can be achieved by shifting the bias terms with $\alpha_f(\gamma) >0$ and $\alpha_i(\gamma)<0$. 

For $\gamma < 1$, the dynamics are sped up. This corresponds to decreasing the forget gate activation with $\widetilde b_f < b_f$ and increasing the input gate activation with $\widetilde b_i > b_i$. Speeding up the dynamics can be achieved by shifting the bias terms with $\alpha_f(\gamma) <0$ and $\alpha_i(\gamma)>0$. 

Note that the expected signs of the time-warping functions~\eqref{eq:scaling_constants} are based on theoretical considerations, under the assumptions that the data generating process is a time-lag system and the weights of the LSTM lead to well-approximating the system. For an arbitrary pretrained LSTM, there is no guarantee that the time-warping functions that lead to more accurate predictions will match the theoretical expectations. 

The propositions above considered the time-lag system~\eqref{eq:timelag}, where the time-warping could be made arbitrarily accurate with a scalar-valued, multiplicative scaling factor that is constant in time. For an LSTM layer with $N$ units, the time-warping functions map a scalar-valued $\gamma$ to a vector of length $N$, or $\alpha_f, \alpha_i: \mathbb R\to \mathbb R^N$. In practice, we will shift the bias parameters by adding a time-warping factor, since if a bias parameter ends up as zero, it will not be possible to shift it with a multiplicative time-warp. We define a time-warping function as ``static'' if it is constant in time and ``homogeneous'' if it applied the same bias shift for each unit in an LSTM layer. In contrast, a ``time-varying'' time-warping function could make shift the gate biases by different amounts over time, and a ``heterogeneous'' time-warping function could assign a different value to different units within the LSTM layer. 

This thesis focuses on static and homogeneous cases, but we introduce the time-varying and heterogeneous cases to generalize the method and allow for future research developments. The time-warping functions are a single number in the static and homogeneous case, so we denote them as the ``time-warping parameters''. The empirical analysis with FMC will explore how to optimize these time-warping parameters. 

For a given time-warping constant $\gamma$, the parameters $\alpha_f$ and $\alpha_i$ can be optimized using a grid search if observations are available for the target learning task. Table~\ref{tab:twarp_algorithm} describes an algorithm for optimizing the time-warping parameters with no fine-tuning, so only two parameters are fit to data that shift the forget and input gate bias terms.

\begin{table}[htbp]
\centering
\captionsetup{labelformat=empty}
\caption[Algorithm 1: Time-warping parameter grid search for LSTM]{\textbf{Algorithm 1:} Grid search over time-warping parameters (forget and input gate bias shifts) for a pretrained LSTM under fixed temporal scaling, with model selection based on training performance and final evaluation on a test set.}
\label{tab:twarp_algorithm}
\begin{tabular}{p{0.95\linewidth}}
\hline
\textbf{Setup:} Pretrained LSTM with bias vectors $(b_f, b_i)$; training set $\mathcal{D}_{\text{train}}$; test set $\mathcal{D}_{\text{test}}$; fixed temporal scaling $\gamma$; specified ranges of scalar candidate values for $\alpha_f$ and $\alpha_i$. \\
\\
\textbf{1.} Construct a grid of candidate values for $\alpha_f$ and a grid of candidate values for $\alpha_i$. \\
\textbf{2.} Form all possible pairs $(\alpha_f, \alpha_i)$ by combining each value of $\alpha_f$ with each value of $\alpha_i$. \\
\\
\textbf{3.} For each pair $(\alpha_f, \alpha_i)$: \\
\hspace*{1em} \textbf{3.1} Shift LSTM biases: $\widetilde{b}_f = b_f + \alpha_f$, $\widetilde{b}_i = b_i + \alpha_i$. \\
\hspace*{1em} \textbf{3.2} Generate predictions and evaluate performance on $\mathcal{D}_{\text{train}}$. \\
\textbf{4.} Select the pair $(\alpha_f, \alpha_i)$ that gives the highest training performance. \\
\\
\textbf{5.} Construct the final model using the selected $(\alpha_f, \alpha_i)$. \\
\textbf{6.} Generate predictions and evaluate performance on $\mathcal{D}_{\text{test}}$. \\
\\
\textbf{Output:} Test performance and selected bias shifts $(\alpha_f, \alpha_i)$. \\
\\
\textbf{Note:} This procedure can be repeated across multiple random initializations and data splits to assess robustness. \\
\hline
\end{tabular}
\end{table}

Fine-tuning can be performed by utilizing a validation set. Table~\ref{tab:twarp_finetune_algorithm} describes optimizing the time-warping parameters while utilizing fine-tuning. Note that the motivation for this method is in a transfer learning context where observations for the target learning task are sparse, so fine-tuning risks overfitting to a small data set. Time-warping with no fine-tuning, by contrast, is less prone to overfitting since it fits just two parameters and shifts a small subset of the LSTM layer parameters. 

\begin{table}[htbp]
\centering
\captionsetup{labelformat=empty}
\caption[Algorithm 2: Time-warping parameter grid search with fine-tuning for LSTM]{\textbf{Algorithm 2:} Grid search over time-warping parameters (forget and input gate bias shifts) for a pretrained LSTM under fixed temporal scaling, with fine-tuning on a training set, model selection based on validation performance, and final evaluation on a test set.}
\label{tab:twarp_finetune_algorithm}
\begin{tabular}{p{0.95\linewidth}}
\hline
\textbf{Setup:} Pretrained LSTM with bias vectors $(b_f, b_i)$; training set $\mathcal{D}_{\text{train}}$; validation set $\mathcal{D}_{\text{val}}$; test set $\mathcal{D}_{\text{test}}$; fixed temporal scaling $\gamma$; specified ranges of scalar candidate values for $\alpha_f$ and $\alpha_i$. \\
\\
\textbf{1.} Construct a grid of candidate values for $\alpha_f$ and a grid of candidate values for $\alpha_i$. \\
\textbf{2.} Form all possible pairs $(\alpha_f, \alpha_i)$ by combining each value of $\alpha_f$ with each value of $\alpha_i$. \\
\\
\textbf{3.} For each pair $(\alpha_f, \alpha_i)$: \\
\hspace*{1em} \textbf{3.1} Shift LSTM biases: $\widetilde{b}_f = b_f + \alpha_f$, $\widetilde{b}_i = b_i + \alpha_i$. \\
\hspace*{1em} \textbf{3.2} Fine-tune the modified LSTM on $\mathcal{D}_{\text{train}}$, using $\mathcal{D}{\text{val}}$ to monitor training. \\
\hspace*{1em} \textbf{3.3} Generate predictions and evaluate performance on $\mathcal{D}_{\text{val}}$. \\
\\
\textbf{4.} Select the pair $(\alpha_f, \alpha_i)$ that gives the highest validation performance. \\
\\
\textbf{5.} Use the corresponding fine-tuned model to generate predictions and evaluate performance on $\mathcal{D}_{\text{test}}$. \\
\\
\textbf{Output:} Test performance and selected time-warping parameters $(\alpha_f, \alpha_i)$. \\
\\
\textbf{Note:} Fine-tuning requires a validation set for model selection. This procedure can also be repeated across multiple random initializations and data splits to assess robustness. \\
\hline
\end{tabular}
\end{table}

\section{Simulation Analysis with Newton's Law of Cooling}

\subsection{Problem Overview}

Newton's law of cooling is an ODE that models the rate at which the temperature of a liquid equilibrates to its environment. It is a foundational ODE that is used throughout introductory mathematics and physics courses. The canonical form of the ODE describes the change in temperature $T$ of some object over time $t$, when the liquid is exposed to the ambient temperature from the environment $T_a$, with
\begin{align}
\label{eq:newton}
    \frac{dT(t)}{dt} = -k(T(t)-T_a).
\end{align}
Equation~\eqref{eq:newton} relies on a number of simplifying assumptions. The constant $k$ is often referred to as the ``cooling constant". This parameter is used to approximate many specific physical properties of the system, such as the thermal mass of the object, the heat conductivity of the object and the surrounding ambient medium, the surface area of the object, etc. Other assumptions include that the modeled object has a uniform temperature, the ambient temperature of the environment is constant, the object itself does not change the temperature of the environment, etc. 

Modern physical modeling of heat transfer still uses the basic principles from Newton's Law of Cooling, either in constructing boundary conditions or in linear approximations of more complex systems. In many physical systems, the simplifying assumptions of Newton's Law are reasonable. Within wildfire science, the temperature of a small stick is often modeled as a single value, and the surrounding air is modeled as a constant background~\citep{Rothermel-1972-MMP}. In other areas of physical modeling, it is reasonable to consider the background temperature as fixed if you are modeling the change of temperature of an object that is embedded within a medium like the air, the soil, or the water.

The solution to Equation~\eqref{eq:newton}, using initial conditions $T(0)=T_0$, is
\begin{align}
\label{eq:newton_sol}
    T(t) =  (1-e^{-kt})T_a + e^{-kt} T_0.
\end{align}
Equation~\eqref{eq:newton_sol} describes the temperature of an object over time as it equilibrates with the ambient temperature of the environment. The rate of change is described by a smooth exponential. The constant $k$ controls the exponential rate of change $e^{-kt}$, and $1/k$ can be denoted the characteristic lag time.

\subsection{Demonstrating Time-Warping an LSTM with Newton's Law}

As an empirical demonstration of the time-warp scaling terms, we simulate solutions to Newton's Law with varying values of the cooling constant $k$. In each case, the initial temperature is 50$^\circ$, the ambient environment temperature is a constant 20$^\circ$, and the system is evolved forward for 50 time steps from the initial state. These values were chosen arbitrarily. A Jupyter Notebook for this demonstration is included in the software repository associated with the thesis where these parameters can be varied. Note that Newton's Law of Cooling is invariant to the temperature scale, so the simulation temperatures could be degrees Celsius or Kelvin or Fahrenheit. We construct a grid of 10 values for $k$ from a minimum of 0.05 to a maximum of 0.5, then solve for the deterministic trajectory for each value. Figure~\ref{fig:kcurves} shows the temperature curves for varying values of $k$. For each value of $k$, the system approaches the same equilibrium value of 20$^\circ$. The parameter $k$ controls the rate at which the system approaches the equilibrium; as $k$ increases the temperature reaches the equilibrium quicker.

\begin{figure}[htbp]
\centering
\includegraphics[width=12 cm]{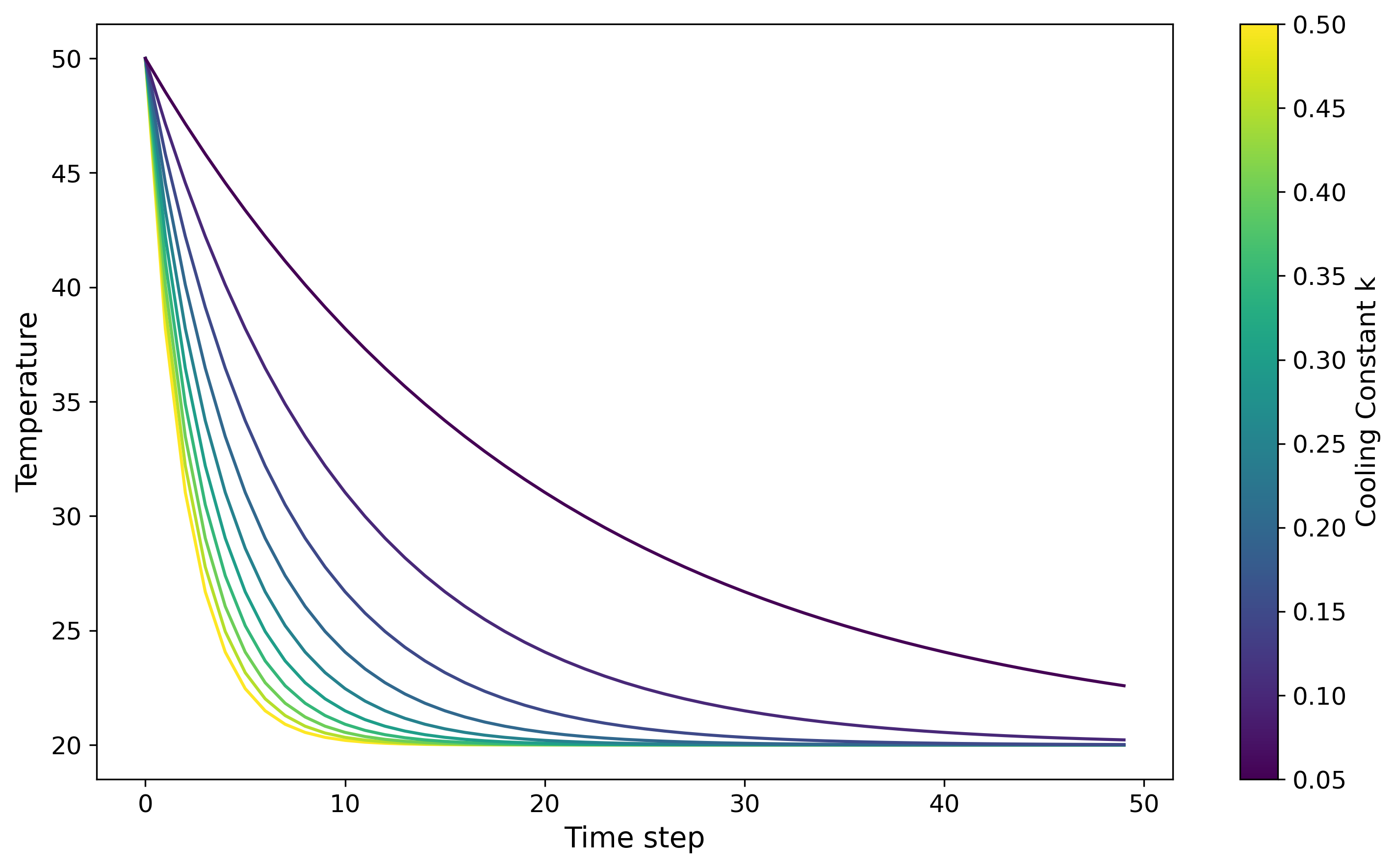}
\caption[Simulated temperature curves for varying cooling constants.]{Deterministic trajectories from Newton's Law of Cooling for varying cooling constants. Varying values of $k$ change the characteristic time scale. The initial temperature and the ambient temperature of the environment are kept constant in each case.}
\label{fig:kcurves} 
\end{figure} 

Using a simple RNN with a single recurrent unit, the system can be initialized for a starting value of $k$. Then, the time-warping can be made exact using the Equations~\eqref{eq:simple_timewarped_params}. We demonstrate that for the simple RNN, the approximation error is essentially computational rounding error. 

The LSTM case is nontrivial and involves approximation error. An LSTM with a single recurrent unit is constructed to approximate the system using a starting value of $k=0.5$ using~\eqref{eq:lstm_linear}. The output gate controls the error of approximation; the closer the output gate is to 1, the smaller the approximation error. 

We pick an error tolerance of $\epsilon=0.003$. By construction, $M=50$ bounds all the curves from above. In this case, we set the output gate bias to be 10. The sigmoid function evaluated at 10 is $\sigma(10)\approx 0.99995$.
\[
10 > \text{logit}(1-0.003/50)\approx 9.7
\]
We verify that using the output gate bias of $b_o=10$ constrains the approximation error for the LSTM compared to the true system to less than $\epsilon=0.003$.

Then, the LSTM is time-warped by modifying the bias terms with~\eqref{eq:lstm_linear_timewarp}. Section~\ref{sec:newton_results} will demonstrate that this results in a small approximation error. The approximation error can be made as small as desired by increasing the output gate activation as shown in Proposition~\ref{prop:lstm_linear}.

\section{Overview of Empirical Analysis with Fuel Moisture Classes}\label{sec:fmc_analysis}

The main empirical analysis in this thesis is based on testing transfer learning methods for generating FMC forecasts for different fuel classes. The data source used in this analysis is the original data from \citet{Carlson-2007-ANM}, which has been used to calibrate multiple FMC models, as described in Section~\ref{sec:fmc}. A copy of the original data for this study was provided via personal correspondence by Dr. Van der Kamp. In a personal correspondence, Dr. Carlson informed us that the original hardware copy of the data had been destroyed, and he knows of no other copy of the data. The paper reported 1,237 observations of FM1, 1,237 observations of FM10, 874 observations of FM100, and 877 observations of FM1000. In the data received from Dr. Van der Kamp, there were 1,233 observations of FM1, 1,232 observations of FM10, 871 observations of FM100, and 874 observations of FM1000. We are unable to reconcile the slight discrepancies in the observation counts, and no other backup of the original data exists to our knowledge. \citet{Carlson-2007-ANM} calibrated the Nelson model to the field observations and produced accuracy metrics, including the $R^2$ score and the bias, or mean error. They report the accuracy metrics for the overall sample and for the observations where the FMC is less than $30\%$. 

The pretrained RNN was trained on observations from FM10 sensors and with weather inputs from NWP forecasts. Figure~\ref{fig:rnn_fmc} shows the RNN model architecture~\citep{Hirschi-2026-RNN}. This model architecture will be fixed when comparing the transfer learning methods. From now on, we will refer to ``the RNN'' as this specific model architecture, and the parameters within it will be modified by the transfer learning methods. The RNN consists of a single LSTM layer with 64 units, two dense layers with 32 and 16 units, respectively, and an output layer with one dense unit. In total, this RNN architecture consists of 21,825 trainable parameters.

\begin{figure}[htbp]
\centering
\includegraphics[width=12 cm]{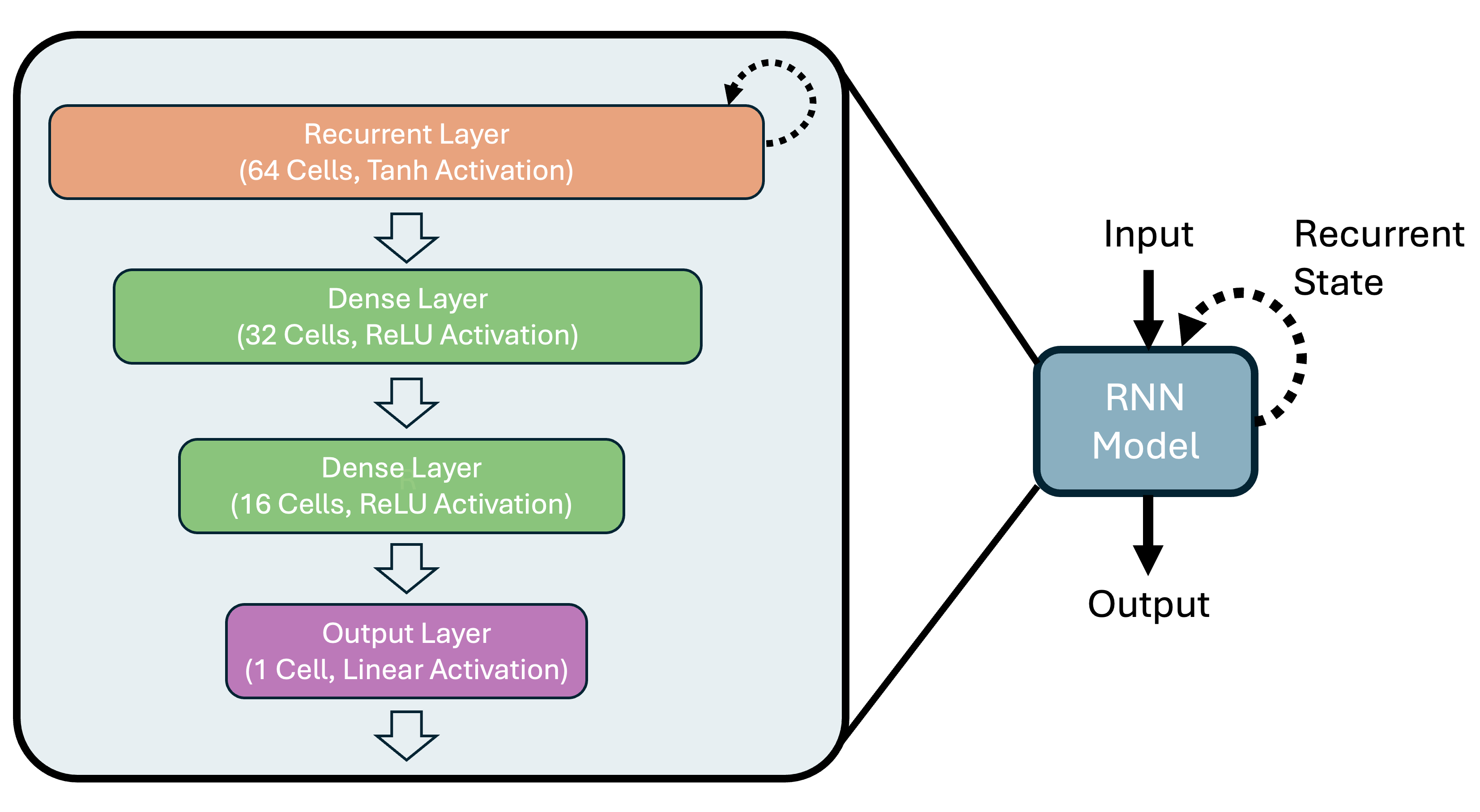}
\caption[RNN model architecture, pre-trained on FM10 from RAWS.]{RNN model architecture, pre-trained on FM10 from RAWS. An RNN with one LSTM layer and three subsequent dense layers.}
\label{fig:rnn_fmc} 
\end{figure} 

The RNN was developed and evaluated using a spatiotemporal cross-validation procedure. The training set, validation set, and test set were ordered in time so the RNN predictions are an extrapolation in time, with the test set in the future relative to the training set. The observation locations of observations were randomly assigned to the different data splits to account for spatial variability. A sample of locations were held out during training. Thus, the test set represents predictions that are at new locations and in the future. Training was controlled with an early-stopping procedure with patience of 5 to prevent overfitting. The hyperparameters for the RNN were tuned by training a set of candidate models in 2022 and choosing the model architecture that led to the most accurate forecasts in 2023. Then, the model was reinitialized and trained on data from 2023 and used to forecast all of 2024. The final accuracy metrics were presented on 2024. 

We use a ``final'' version of the RNN that is trained on all available FM10 observations from RAWS from 2023 through 2024. The training uses a validation set solely to control the early-stopping procedure, not to tune any other hyperparameters. The validation set consists of a random sample of $10\%$ of observation locations, with data from the last 48-hour period of 2024. Training samples were constructed from all other 48-hour sequences in the study area and time period. Batches of training data were constructed with random samples of the training sequences. In this case, we do not split off a test set of data, since the forecast accuracy was already estimated with an extensive spatiotemporal cross validation analysis. The sets of test data will come from the target learning task: data from the Oklahoma field study.

To account for the variability in the model training, we replicate the training process for the RNN 100 times. Each replication uses the same architecture and the same set of data: all available FM10 observations from RAWS within the Rocky Mountain region from 2023 through 2024. The replications utilize different random seeds that change the random split of stations into the training set versus the validation set and also randomize the order of samples used in training. The replications also change the random initial weights for the RNN. In summary, the replications are used to characterize the uncertainty of the training process due to initial weights, sample order, and the training set and validation set split. 

The Oklahoma field study is divided into a training set, a validation set, and a test set based on time. All observations are all from the same spatial location. Thus, there is no randomness in the cross validation split. Table~\ref{tab:all_vars} shows summary statistics for the hourly weather from the Slapout Mesonet weather station over the same time period as the observed FMC. A table with corresponding summary statistics from the source learning task, with data from all the Rocky Mountain region from 2023 through 2024, is found in Table 1 in the paper that is included as Appendix~\ref{app:paper}. Table~\ref{tab:fmc_summary} shows the summary statistics for the four fuel classes over the same time period. Notice that as the characteristic time-lag increases for the fuel classes, the typical FMC values decrease. Larger fuels tend to have lower average moisture content. 

\begin{table}[htbp]
\caption[Oklahoma Field Study Weather Data Summary.]{Oklahoma Field Study Weather Data Summary. Weather variables and geographic features are from the Slapout Mesonet station. The time period is from March 26, 1996 through the end of 1997.\label{tab:all_vars}}
\small
\begin{tabularx}{\textwidth}{l l p{4cm} p{1.5cm} p{2.5cm}}
\toprule
\textbf{Variable} & \textbf{Units} & \textbf{Description} & \textbf{Mean} & \textbf{(Low, High)} \\
\midrule
Drying Equilibrium & $\%$ & Derived from RH and temperature. & 18.02 & (1.29, 60.56) \\
\midrule
Wetting Equilibrium & $\%$ & Derived from RH and temperature. & 16.53 & (0.61, 56.10) \\
\midrule
Solar Radiation & $\text{W\,m}^{-2}$ & Downward shortwave radiative flux. & 208.41 & (0, 1177) \\
\midrule
Wind Speed & $\text{m\,s}^{-1}$ & Wind speed at 10m. & 2.52 & (0.40, 8.61) \\
\midrule
Rain & $\text{mm\,h}^{-1}$ & Rain accumulation over the hour. & 0.08 & (0.00, 42.17) \\
\midrule
Hour of Day & hours & Hour of day, UTC time. & 11.50 & (0, 23) \\
\midrule
Day of Year & days & Day of year, UTC time. & 201.70 & (1, 366) \\
\midrule
Elevation & meters & Height above sea level. & 774.00 & - \\
\midrule
Longitude & degree & X-coordinate & -100.26 & - \\
\midrule
Latitude & degree & Y-coordinate & 36.60 & - \\
\bottomrule
\end{tabularx}
\end{table}

\begin{table}[htbp]
\centering
\begin{tabular}{lccc}
\hline
Fuel Class & N. Observations & Mean & (Low, High) \\
\hline
FM1    & 1,233 & 15.33 & (0.0, 109.2) \\
FM10   & 1,232 & 15.02 & (1.6, 64.3) \\
FM100  & 871   & 13.44 & (5.4, 35.7) \\
FM1000 & 874 & 11.37 & (4.7, 27.9) \\
\hline
\end{tabular}
\caption[Oklahoma Field Study FMC Data Summary]{Oklahoma Field Study FMC Data Summary. Observations are made with gravimetric measurements on arrays of wooden dowels. The time period is from March 26, 1996, through the end of 1997.}
\label{tab:fmc_summary}
\end{table}

We split the Oklahoma field data into a training set, a validation set, and a test set. The earliest observation was on March 26, 1996, and the study conducted observations through 1997. The data consist of a single time series for each fuel class. All observations are from a single spatial location near Slapout, Oklahoma. The data splitting is therefore only based on time. The training set is defined to be one full year of data starting from the earliest observation. We use one full year so that, in principle, any models trained on this data could learn patterns of seasonal variation. Then, the validation set and test set are constructed by splitting the remaining times evenly. Final accuracy metrics will be calculated only on the test set so that the models are evaluated on their ability to forecast in time. The test set consists of about four months of data, from August of 1997 through December of 1997. Chapter~\ref{chapter:conclusions} will discuss the limitations of this data set and why it suggests that transfer learning is useful. 

\begin{table}[htbp]
\centering
\begin{tabular}{lccc}
\hline
 & Training Set & Validation Set & Test Set \\
\hline
Start Date (UTC) & 1996-03-26, 23:00 & 1997-03-27, 00:00 & 1997-08-13, 12:00 \\
End Date (UTC)   & 1997-03-26, 23:00 & 1997-08-13, 11:00 & 1997-12-30, 22:00 \\
\hline
\end{tabular}
\caption[Temporal data splits for Oklahoma field data.]{Temporal definition of the training, validation, and test splits for the Oklahoma field data. All dates are reported in UTC.}
\label{tab:data_split_summary}
\end{table}

\begin{table}[htbp]
\centering
\begin{tabular}{lccc}
\hline
N. Observations & Training Set & Validation Set & Test Set \\
\hline
Weather & 8,761 & 3,348 & 3,347 \\
FM1     & 704   & 258   & 271   \\
FM10    & 704   & 258   & 270   \\
FM100   & 481   & 184   & 206   \\
FM1000  & 482   & 183   & 209   \\
\hline
\end{tabular}
\caption[Observation counts by temporal split for Oklahoma field data.]{Number of observations in the training, validation, and test splits for each variable in the Oklahoma field data. Note that not all analyses utilize the validation set.}
\label{tab:data_split_counts}
\end{table}

Several analyses are used to asses the time-warping transfer learning method. First, we evaluate the accuracy of the predictions from the pretrained RNN compared to the observed FM10 from the Oklahoma field study. Then, we evaluate the empirical distributions of the forget gate and input gate bias parameters for the pretrained model across 100 replications. This analysis is to provide evidence that the RNN is learning a stable set of parameters, and that the time-warping method is not being done on an arbitrary set of weights. Lastly, we evaluate several transfer learning methods to with the target learning tasks of predicting FM1, FM100, and FM1000. These are the fuel classes where no sensor data exists, and there is only sparse experimental data available. 

\citet{Carlson-2007-ANM} reported accuracy metrics for the Nelson model for each fuel class in the study period. The accuracy metrics for the Nelson model do not include uncertainty estimates. We will compare the RNN accuracy metrics to the Nelson model. However, the entire study period from 1996 through 1997 was used to estimate the accuracy of the Nelson model, with no data splitting into training, validation, and testing periods. Direct comparison to these metrics is possible for zero-shot transfer learning for FM10, which will be described in the next section. The other analyses utilize fitting or fine-tuning of some sort, and will split the data as described in Table~\ref{tab:data_split_summary}. We will still compare the accuracy to the Nelson model with the qualification that this isn't an exact comparison. Two static ML baselines, a simple linear regression and a static XGBoost, are fit to the data and results are presented in Appendix~\ref{app:extra}. These models are used to show that the more complex RNNs built with transfer learning are improving upon a simple baseline. 

\section{Zero-Shot Transfer Learning for FM10}

Before evaluating any transfer learning methods that change the target learning task, we evaluate whether the RNN that was pretrained on fuel sensor observations and inputs from NWP forecasts is still accurate when used with data from the Oklahoma field study. We generate predictions of FM10 using the ground-based weather observations and compare them to the observed FM10 values. We wish to verify that the error is within the estimated forecast error bounds reported in the published study~\citep{Hirschi-2026-RNN}, and then we compare the resulting accuracy to the state-of-the-art Nelson model that was calibrated on the Oklahoma field study. 

There are a couple of things to consider when deploying the pretrained RNN in this case. The FM10 observations were collected with different measuring devices, and the weather inputs are from different sources. The measurements from the fuel sensors on RAWS are intended to produce the same results as the gravimetric measurement technique that was used in \citet{Carlson-2007-ANM}. Both measurement methods use ponderosa pine dowels elevated 30~cm off the ground~\citep{Carlson-2007-ANM, Campbell-2015-CFM}. However, the fuel sensors indirectly measure FMC through physical proxies, and there will be differences relative to the direct gravimetric measurements \citep{Campbell-2015-CFM, FTS-2016-FSS}. Additionally, the RNN was trained using weather inputs from numerical weather models, and in this case we use inputs from ground-based weather observations from Slapout, Oklahoma, along with the longitude, latitude, and elevation reported for that station by the Oklahoma Mesowest network~\citep{OKMeso-2026-MSI}. 

This is a type of transfer learning problem. The feature space is the same, but the different measurement devices for the weather correspond to different marginals over that space. The response variable space is also the same in this case, but the distributions of the response variable conditional on the weather could be different due to measurement noise or bias. This is sometimes referred to as transductive learning~\citep{Pan-2010-STL}. We will analyze the accuracy of the zero-shot transfer learning using the data from Slapout, Oklahoma. As will be shown in Section~\ref{sec:fmc_results}, the error of the RNN forecasts compared to the gravimetric measurements is within the bounds estimated in the paper when you account for the FMC sensors' response to heavy rain events. Zero-shot transfer learning for FM10 is used to establish that the forecasts from the RNN can be compared to reality in the field data, and we will ignore the transfer learning effects from the source of the weather and the measurement device when analyzing the time-warp method.

We will generate predictions of FM10 across the entire study period from March of 1996 through December of 1997. The data is not split into training and test sets, since no training is being done. The entire period acts as the test set for the transfer learning problem in this case. The predictions are generated for each of the 100 RNN replications to estimate the uncertainty in the accuracy metrics. 

\section{Stability of the Learned Gate Bias Structure for the Pretrained FM10 Model}

For an LSTM layer with 64 units, both the forget and input gate biases are random vectors of length 64. Further, they are permutation invariant. The particular index order of an LSTM unit is arbitrary. The optimization process of the RNN parameters is random. The uncertainty in the final bias values comes from randomness in the initial weights, randomness in the order of training samples, and in the case of the spatiotemporal training used with the FMC model, randomness in the set of physical locations used for the training versus validation sets. Before we test the method of time-warping by directly modifying the forget and input gate bias parameters, we provide evidence that the biases are stable across the replications. In principle, with an arbitrary LSTM and an arbitrary set of training samples, there is no guarantee that the forget and input gate biases will converge to a consistent distribution or a shared learned structure.

Using the 100 replications of the source RNN, we analyze whether the gate-bias quantities targeted by the time-warp ($b_f$ and $b_i$) have a stable distribution under replication. The purpose of this analysis is to increase confidence that the time-warp produces a consistent shift in a learned timescale  across replications.

Summary statistics and visual examination of the empirical distribution of bias parameters are used to conclude that there is adequate evidence that the forget and input gate bias parameters have a consistent structure across replications. This analysis is meant to provide evidence that the bias parameters are consistent across replications, and that reasonable conclusions can be drawn about what effect the time-warp method has on the resulting network dynamics. A more complete treatment of this topic would relate to convergence of 1-dimensional distributions, which will not be dealt with in this thesis. 

\section{Comparison of Transfer Learning Methods for Predicting FMC}

The main application of this research is to produce accurate predictions of FMC for fuel classes with sparsely observed data. This thesis proposes a new method of transfer learning through time-warping the forget and input gate bias parameters in a pretrained LSTM. This new method of transfer learning will be compared to several baseline methods of transfer learning. 

The domain is the same for the FMC learning tasks. The space of features $\mathcal X$ is the 10-dimensional vector space defined by the set of predictors. The set of 10 predictors is listed in Table~\ref{tab:all_vars}, with summary statistics that apply to the Oklahoma field study. A table with summary statistics from the source learning task, with data from all the Rocky Mountain region from 2023 through 2024, is found in Table 1 in the paper that is included as Appendix~\ref{app:paper}. The marginal distribution over $\mathcal X$ reflects the combinations of the predictors that tend to occur in the study domain. The source task is predicting FM10. The target tasks are predicting FM1, FM100, and FM1000. In each case, the response variable space $\mathcal Y$ is taken to be $\mathbb R$. FMC is physically nonnegative and bounded above by properties of the woody material. But the RNN maps onto the real line, and physical constraints are enforced implicitly through the training data.

An important difference between the source learning task and the target tasks is the sparsity of the observed FMC data. The source task utilized hourly observations of FM10 from a large network of physical locations. The RNN was built as a sequence-to-sequence model, where sequences of weather inputs are mapped to sequences of FM10 predictions. A sequence length of 48-hours was used during training, as that is the maximum possible forecast time for the HRRR weather model and limits the forecast window for a real-time operation. For the target learning task, the FMC was measured at most twice daily. The weather was still recorded hourly, so the input structure is the same. We keep the model architecture the same, where the RNN produces sequences of predictions with hourly resolution. The accuracy when comparing to the Oklahoma field observations is computed using the RNN predictions at the corresponding times.

Two transfer learning methods will be evaluated that utilize the time-warping method, with and without fine-tuning. These will be compared to established transfer learning techniques. Table~\ref{tab:tl_list} shows a summary of the transfer learning methods that will be used for this analysis. Section~\ref{sec:tl} provides a literature review on these baseline methods. The baseline methods that utilize fine-tuning use the same training procedure, with different initialization of parameters and different sets of parameters that are allowed to be updated during training. The accuracy metrics will be the $R^2$, the bias, and the RMSE. The $R^2$ coefficient of determination is calculated as one minus the ratio of the mean squared error to the variance of the observed data. A value of 1 indicates perfect agreement between the observed data and the model. The bias and the RMSE are interpretable in units of percent FMC. The bias is defined as the mean of the observed values minus the modeled values, and is unrelated to the notion of ``bias'' used as an RNN parameter. Positive bias indicates underprediction, and negative bias indicates overprediction. RMSE measures the typical magnitude of prediction errors by taking the square root of the mean squared difference between observed and modeled values. The RMSE is the preferred metric in much of the ML literature. The RMSE is calculated by averaging the squared errors over all times in the testing period, then applying the square root after. In~\cite{Hirschi-2026-RNN}, the estimated forecast accuracy metrics were an RMSE of 3.02~$\pm$~1.12 and a bias of -0.19~$\pm$~0.33. These were calculated on a large dataset across the Rocky Mountain region for all of 2024. 

\begin{table}[htbp]
\centering
\caption[Transfer Learning Methods Comparison for FMC Prediction]{Transfer learning methods comparison for FMC prediction.}
\label{tab:tl_list}
\begin{tabularx}{\textwidth}{|>{\RaggedRight\arraybackslash}p{3.5cm}|>{\RaggedRight\arraybackslash}X|>{\RaggedRight\arraybackslash}X|}
\hline
\textbf{Method Name} & \textbf{Details} & \textbf{Literature Status} \\
\hline
No Transfer
& Random initialization and direct training on the target learning task.
& Standard supervised training baseline. \\
\hline
Full Fine-Tuning
& Model pretrained on a source task and all parameters updated on the target task.
& Standard transfer learning baseline. \\
\hline
Freeze Recurrent Layer
& Recurrent weights fixed while dense/output layers are retrained on the target task.
& Common transfer learning strategy, widely used for CNNs. \\
\hline
Freeze Dense/Output Layers
& Dense/output layers fixed while input and recurrent layers are retrained.
& Reverse-freezing baseline; rarely explored in literature. \\
\hline
Time-Warping
& Parameters for shifting the forget and input gate biases are fit, without additional training.
& Novel to this thesis. \\
\hline
Time-Warping + Fine-Tuning
& Time-warp initialization followed by full fine-tuning on the target task.
& Novel to this thesis. \\
\hline
\end{tabularx}
\end{table}

The data structure requires some modifications to the training process for the RNN when fine-tuning. The loss can only be calculated at a sparse set of times. During training (fine-tuning), we keep the sequence length of 48-hours to match the source training methodology. Sequences of weather inputs are mapped to 48-hour sequences of FMC outputs. Then, the loss is calculated by comparing to the FMC observations from the Oklahoma field study which happen to lie in the 48-hour prediction window. Since the FMC of the different fuel classes were observed at most twice daily, this corresponds to about 4 response values for the prediction window. Those 4 response values are used to calculate the loss, and the rest of the RNN predictions that don't line up in time with an FMC observation are not used to calculate loss.

In each transfer learning method that utilizes fine-tuning, there is the risk of overfitting. Observations for the target learning tasks all come from a single location in Oklahoma and from a relatively short time period. As mentioned before, this is the only publicly-available data set we are aware of that observes the different standard fuel classes under controlled experimental conditions and with temporal resolution less than one day. When we conduct fine-tuning in the scenarios described above, a validation set is used with an early-stopping procedure to control for overfitting. This will control for the RNN overfitting to a specific sequence in time. However, there is no way to verify whether the fine-tuned models generalize to other spatial locations. In Section~\ref{sec:future}, we will discuss planned future research with the FEMS field observations that will address this issue. 

We also evaluated two static, or non-recurrent models, using the same set of predictors as the RNN. The static models include a simple linear regression and an implementation of an XGBoost model with the same architecture used for the large forecast analysis in the published paper~\citep{Hirschi-2026-RNN}. These models are fit to observations from the training period and validation period combined for each fuel class, since no hyperparameter tuning is being done. The final accuracy metrics are calculated against observed FMC values in the test set. These models are less accurate than the other methods, and the results are included in Appendix~\ref{app:extra}. The static ML models are used as reference baselines to verify that the methods with RNNs and transfer learning provide meaningful improvements.

For the analyses presented in this thesis, the primary focus will be on the Time-Warping method. In that scenario, only two parameters are fit to data: the two parameters for time-warping the gate biases. These two parameters shift 128 out of over 21,000 trainable parameters in the RNN. The risk of overfitting is less than for the other methods that utilize fine-tuning. Further, the Time-Warping method provides a possible way to generate FMC predictions for sticks of arbitrary sizes that are not one of the standard fuel classes that are routinely observed. 

\subsection{No Transfer Learning Baseline}

As a baseline of comparison, the ``No Transfer'' method trains the RNN directly on the target tasks with no pretraining. The parameters of the RNN are randomly initialized, then the model is trained using only response value observations from the target task and predictors that correspond to those FMC observations in space and time. The RNN is fit to the training set with a validation set controlling the early-stopping procedure. The trained model is used to predict FMC in the test period and the predictions are evaluated against the observed values.

We hypothesize that the No Transfer method will perform relatively poorly due to the sparsity of data for the other fuel classes. However, even if that were not the case, we would have less confidence in a models ability to produce reliable FMC forecasts across the country, since the model cannot learn any spatial relationships from the data. To make this method feasible for comparison, we reduce the set of predictors from 10 to 7, excluding the three static predictors of latitude, longitude, and elevation. In practice, including features that have zero variability, like geographic features from a single location, leads to slow training and poor convergence for models trained with gradient descent. 

\subsection{Full Fine-Tuning}

The ``Full Fine-tuning'' method will fine-tune the RNN and allow all model parameters to be updated. The method will start with the RNN pretrained to predict FM10. Then, all weights in the RNN will be updated using samples from the training set, with the validation set controlling early-stopping. Then, the trained model will be used to predict FMC in the test period and evaluated against the observed values.

The Full Fine-Tuning method is the most simple type of transfer learning. We hypothesize it will perform worse than the more complex methods. However, this needs to be tested empirically. Freezing layers can lead to sub-optimal fine-tuning results due to the issue of fragile co-adaptation, since parameters in a neural network often have complex dependencies across the network architecture. 

\subsection{Fine-Tuning with Frozen Recurrent Layer}

Two scenarios will proceed just like the Full Fine-tuning method, with the difference that certain neural network layers will be frozen and not allowed to update during fine-tuning. The ``Freeze Recurrent Layer'' method will fine-tune the RNN while fixing all parameters in the recurrent layer, allowing the parameters in the dense and output layers to be updated. 

The Freeze Recurrent Layer method is used since it is standard in the transfer learning literature with CNNs and image classification, e.g.~\citep{Yosinski-HTF-2014}. The assumption is that the early layers of a network extract a set of latent features, and the downstream layers match the latent features with a particular class label. This has shown to be useful in the context of CNNs, but it is less explored with recurrent layers. In the context of RNNs, the recurrent layer would act as a generic temporal feature extractor. Then the subsequent dense layers learn how to combine different temporal features into a prediction. We hypothesize this method will perform poorly relative to the other methods. The recurrent layer learns a set of parameters that is used to evolve a dynamical system forward in time. If the characteristic time scale of the system changes, we expect that the parameters of the recurrent layer would need to be updated to maintain accuracy. 

\subsection{Fine-Tuning with Frozen Dense Layers}

The ``Freeze Dense/Output Layers'' method will fine tune the RNN and fix the parameters of the dense and output layers, allowing the parameters of the recurrent layer to be updated. 

The Freeze Dense/Output Layers method has not been explored extensively to our knowledge. \citet{Ma-2021-THD} evaluated freezing the input layers and output layers together. We hypothesize that freezing the output layers alone could be preferable. This method tests the hypothesis that the recurrent layer extracts a set of standardized signals with a characteristic time scale, and changing the time scale of the task requires modifying the dynamics within the recurrent layer. Fixing the input layer weights seems sub-optimal in the context of sticks of different sizes. Smaller sticks have less surface area than larger sticks, and would be expected to interact differently with environmental conditions like wind and rain that act on the surface of the fuel. Further, this method assumes that the mapping learned by the dense layers is invariant between tasks. While this method is used as a baseline of comparison for the time-warp technique, it shares a motivation with the time-warping technique. Fine-tuning the parameters of the recurrent layers is meant to accomplish the same thing as the time-warping, but in a purely data-driven way rather than through direct modification of the input and forget gates that is motivated by theoretical considerations.

\subsection{The Time-Warping Method}

The ``Time-Warping'' method modifies the bias parameters in the forget and input gates. We fit time-warping constants for each fuel class to data from the training set using Algorithm 1 described in detail in Table~\ref{tab:twarp_algorithm}. For each target learning task, we construct a grid of candidate parameters for shifting the forget and input gate biases. The candidate time-warping parameters shift the bias parameters for each fuel class separately. We select the time-warping parameters that lead to the best resulting model accuracy when evaluated against observations from the target learning task in the training period. The Time-Warping method splits the Oklahoma field data into a training period and a testing period. We do not use a validation set for this method, so the training and validation periods shown in Table~\ref{tab:data_split_summary} are combined into a single training period.

For each fuel class, we construct a grid of 25 possible values for the forget and input gate time-warp parameters. We consider a grid of 25 evenly spaced values ranging from -5 to 5 for both the forget and input gate time-warping parameters.  All possible combinations of the two time-warping parameters are considered, resulting in 625 possible parameter configurations (equal to 25 squared). The same grid of parameters is searched for FM1, FM100, and FM1000 separately. 

This will be applied to the entire recurrent layer to uniformly shift the bias terms for the forget gates and input gates. The optimal values of $\alpha_f$ and $\alpha_i$ are chosen via grid search. The set of time-warping parameters that leads to the lowest root mean squared error (RMSE) for the training set will be selected. The final estimate for model accuracy for the method will be calculated by generating forecasts for the target task in the test period and comparing them to the observed values. The time-warp method does not utilize fine-tuning. So, two parameters are fit to data that are used to shift 128 out of over 21,000 trainable parameters in the pretrained RNN. 

As will be discussed in Section~\ref{sec:future}, this method could be used to provide zero-shot transfer learning predictions of fuel classes that are never observed. We wish to be able to generate FMC forecasts for a hypothetical 50h fuel, a stick with a size between a standard 10h fuel and a 100h fuel, or sticks of any size. That will necessitate using zero-shot methods, since to our knowledge, no data exists for these fuel sizes. In this thesis, we optimize the temporal scaling parameters with training and validation sets of data. But in future research, we hope to generate zero-shot predictions for sticks of arbitrary sizes.

\subsection{Time-Warping plus Fine-Tuning}

The Time-Warping plus Fine-Tuning Method will modify the bias terms for the forget and input gates, then fine-tune and allow all model variables to be updated. The optimization process uses Algorithm 2 as described in Table~\ref{tab:twarp_finetune_algorithm}.

This method uses a validation set to control early-stopping. Training samples are constructed for the fine tuning step. The hyperparameters related to training will be reused from the source RNN, which are described in detail in the paper which is attached in Appendix~\ref{app:paper}. Training samples will be 48-hour sequences of the set of 10 predictors. The main difference from the source learning task is that for each fuel class, there is only a univariate time series of weather and FMC observations. We build samples with a rolling window approach, using a 48-hour window and a ``stride-length'' of 12. This means the first sample will start at the first available observation of FMC in the training period. The weather inputs will be the next 48 hours from that date. Since the FMC is measured at most twice daily, the associated response vector for the first sample will be at most 4 observations, with mostly missing values within the 48-hour sequence. The loss will be calculated across the 4 available observations. The next sample will start 12 hours from the start of the first sample. The weather data for the next 48 consecutive hours forms the input. The associated response vector will be any available FMC observations within the shifted 48-hour window. This rolling window process helps create more samples, and a nonzero stride length helps reduce the similarity between samples. Additionally, the rolling window results in the sparse FMC values appearing at different index positions along the 48-hour sequence, to avoid the situation where it overfits to FMC being at a particular set of index positions. The rolling window approach resulted in 727 samples out of 8,761 possible hours from the training period. If non-overlapping samples had been used without a rolling window, there would have been at most 183 samples (8,761 divided by 48). The resulting training set of weather variables is a tensor with shape: \texttt{(727, 48, 10)}. This same methodology will be used for all baseline transfer learning methods that use fine-tuning. 

Intuitively, the Time-Warping plus Fine-Tuning method should perform better since it combines pretraining, time-warping, and fine-tuning to leverage all of the available knowledge. There is the risk that fine-tuning will overfit to the specific location and time of the Oklahoma field study, but there is also the risk that fine-tuning could degrade the performance relative to the Time-Warping method without fine-tuning. Due to the issue of fragile co-adaptation of learned weights~\citep{Yosinski-HTF-2014}, this method cannot be assumed to perform the best and must be tested empirically. It is possible that time-warping could improve prediction accuracy, but also shift the parameters into a space that leads to poorer convergence in the gradient descent during the fine-tuning steps.

\section{The Effect of Time-Warping on the Learned Dynamics of FMC Models}

The goal of the time-warping method is to change the learned dynamics of the RNN.To investigate whether modifying the forget and input gate biases actually changed the characteristic time scales of the RNN, we analyze the bias shift parameters across 100 replications for the Time-Warping method, with no fine-tuning. Then, we summarize the numerical approximations of the derivatives of the predictions. Lastly, we analyze the ACF and PACF of the time-warped RNN predictions to empirically analyze the temporal dependencies in the resulting time series. 

Recall the simple linear ODE for FMC shown in Equation~\eqref{eq:tl_fmc}. Table~\ref{tab:classes} shows how time-warping would modify the parameters of the linear ODE. If the equation started as a model of FM10, then was time-warped to be an model of FM1, the time-warping would decrease the dependence on the previous state of the system and increase the dependence on the current weather input. If an LSTM had a set of weights that approximated this system as described in Section~\ref{sec:lwarp}, the time-warping would decrease the forget gate bias and increase the input gate bias. Recall the slightly counterintuitive interpretation of the forget gate. The forget gate activation is multiplied with the previous cell state. Activations close to 1 correspond to retaining most of the previous cell state $c_{t-1}$, and activations close to zero correspond to the next cell state depending mostly on the current candidate memory. So, negatively shifting the forget gate bias makes the system ``forget more''. Correspondingly, for time-warping to an FM1 model, the input gate bias would be increased. Time-warping to FM100 and FM1000 systems would have the opposite effect on the gate biases. The forget gate bias would be increased and the input gate bias would be decreased. 

The theoretical time-warping of an LSTM described in Section~\ref{sec:lwarp} describes a situation where the exact dynamics are a linear time-lag system, and the LSTM weights are specifically chosen to approximate the system. For an arbitrary LSTM trained on real data, there is no guarantee that the empirical time-warping parameters will correspond neatly to the linear approximation. Nevertheless, we will analyze the time-warping parameters that were fit to data for the FMC models to see if they directionally match our expectations. Then, the ACF and PACF of the resulting predictions for the different fuel classes will be compared to analyze whether there is evidence that the temporal dependencies have shifted.

    \chapter{\uppercase{Results and Discussions}}\label{chapter:04}

\section{Newton's Law of Cooling}\label{sec:newton_results}

We demonstrate the theoretical results for time-warping an LSTM using a case with idealized dynamics. We simulated solution curves from Newton's Law of Cooling with different cooling constants $k$, which control the characteristic time scale of the system. We initialize a simple RNN and then an LSTM to approximate the solution for one value of $k$, then time-warp the weights using Equations~\eqref{eq:lstm_linear_timewarp}. The simulated temperatures ranged from $20$ to $50$ units. 

The maximum of the absolute value of the difference between the simulated temperature curves and the simple RNN after time-warping was approximately $3 \times 10^{-5}$. The error between the true temperature curves and the simple RNN predictions is driven by computational rounding effects. These computational errors include the evaluation of the exponential $e^{-k}$. Then, the iterative process of generating the RNN predictions compounds these computational errors. Thus, the dynamic system can be exactly represented with the simple RNN, up to computational rounding error. 

In addition to the computational rounding errors that exist for the simple RNN, the errors for the LSTM in this case are also driven by the sigmoid function in the output gate, which approximates one. The maximum absolute error for the LSTM with linear activation was approximately $0.00221$. By construction, all curves were bounded above by $M=50$. Using the bounds derived in Proposition 4, an output gate bias of $b_o=10$ should have constrained all modeling errors to be less than $\epsilon=0.003$. The simulation study demonstrates that the choice of output gate bias successfully constrained the numerical errors to within the expected error tolerance. Therefore, the LSTM can approximate the dynamic system to any desired accuracy. 

Figure~\ref{fig:kcurves} shows the true temperature curves for varying values of the cooling constant $k$. The temperature curves that result from the simple RNN predictions and the LSTM predictions are all visually indistinguishable from this plot. 

\section{Transfer Learning for Fuel Classes}\label{sec:fmc_results}

The results from zero-shot transfer learning for FM10 and the stability of the learned gate bias structure are used to establish that transfer learning to other fuel classes is justified. The transfer learning results will be organized by fuel class for direct comparison. The reported accuracy metrics for the Nelson model will be shown for the different fuel classes, but they are only directly comparable to the FM10 zero-shot predictions, since in this case predictions are generated for the entire study period of the Oklahoma field study~\citep{Carlson-2007-ANM}.

Then we analyze the stability of the learned gate bias structure. The Time-Warping method will be evaluated with 100 replications of the source RNN. The learned gate bias parameters will vary across replications. But if they vary wildly without signs of stable convergence, that weakens the theoretical motivation for the time-warping method. For the pretrained FM10 RNN, we show that the bias structure is relatively stable across replications. 

The accuracy metrics for predicting FM1, FM100, and FM1000 will be presented for the different transfer learning techniques. The accuracy metrics are presented only on the test set for these fuel classes, which is from August of 1997 through December of 1997. We will compare the resulting accuracy metrics to those for the Nelson model for the other fuel classes, but caution needs to be taken when interpreting the results, as the RNN predictions are only on a subset of available times. The main focus for the transfer learning analysis is whether the Time-Warping method can provide accuracy results consistent with established techniques while retraining only a small fraction of the source model weights. 

Lastly, we analyze the temporal structure of the predictions from the Time-Warping method. While the accuracy metrics at predicting FMC are of primary interest, we wish to provide statistical evidence that the time-warping method changes the temporal structure of the resulting model predictions. The best-fit time-warping parameters are analyzed and compared to theoretical expectations. We present results from the empirical ACF and PACF of the predictions from the Time-Warping method.

\subsection{Zero-Shot Transfer Transfer Learning for FM10 Results}

Using ground-based weather observations of the variables listed in Table~\ref{tab:all_vars}, along with the geographic variables for the Slapout station~\citep{OKMeso-2026-MSI}, the RNN was used to predict hourly FM10 from March 26, 1996, at 23:00 through December 30, 1997, 22:00, all in Central Time. These times were converted to UTC to align with the data used to train the RNN from RAWS and the HRRR. This resulted in 15,466 individual FMC predictions. The Oklahoma field study collected 1,232 observations of FM10 over this time period. This process was repeated for each of the 100 RNN replications to account for the uncertainty in the final trained model.

Table~\ref{tab:model_comparison} shows the accuracy metrics for the zero-shot predictions and the Nelson model for all observations and for FM10 values filtered to less than $30\%$. \citet{Carlson-2007-ANM} do not report RMSE values, and it is not possible to calculate those values without access to the original predictions. Additionally, the accuracy metrics for the Nelson model do not include uncertainty estimates. We include the RMSE value for the RNN predictions, as it is the preferred metric of many ML researchers. The metrics for the RNN represent the mean over 100 replications; the plus-or-minus bounds are one standard deviation of the metric.

\begin{table}[htbp]
\caption[Accuracy metrics for FM10 model compared to field observations in Oklahoma.]{Accuracy metrics for FM10 Zero-Shot RNN Predictions and Nelson model~\citep[Table 4]{Carlson-2007-ANM}. Comparisons of the models are presented for all FM10 values ($n=1,232$) and for FM10 $\leq 30\%$ ($n=1,134$). RNN metrics are reported as mean $\pm$ standard deviation across 100 replications.}
\label{tab:model_comparison}
\centering
\begin{tabular}{lcc|cc}
\multicolumn{1}{c}{} 
& \multicolumn{2}{c}{All observed FM10} 
& \multicolumn{2}{c}{Observed FM10 $\leq$ 30\%} \\
\hline
Metric 
& Nelson (hourly) & RNN 
& Nelson (hourly) & RNN \\
\hline
$R^2$ & $0.79$ & $0.56\pm0.02$ & $0.61$ & $\;\;\,0.65\pm0.06$ \\
Bias (\%)  & $-0.9$ & $0.91\pm0.52$ & $-1.1$ & $-0.74\pm0.53$ \\
RMSE (\%) & - & $6.79\pm0.16$ & - & $\;\;\,3.21\pm0.25$ \\
\hline
\end{tabular}
\end{table}

Figure~\ref{fig:fm10_scatters} shows a scatter plot of the observed FM10 versus the predicted FM10 from the zero-shot transfer RNN. The RNN predictions are from the model with the median RMSE in the test period across 100 replications. A specific model is used instead of an aggregation over the 100 replications to prevent distorting or smoothing out any patterns that could be expected from a single model training. The median RMSE was $3.16\%$ for FM10 values less than or equal to $30\%$.

\begin{figure}[htbp]
\centering
\begin{minipage}[c]{0.8\textwidth}
  \centering
  \includegraphics[width=\textwidth]{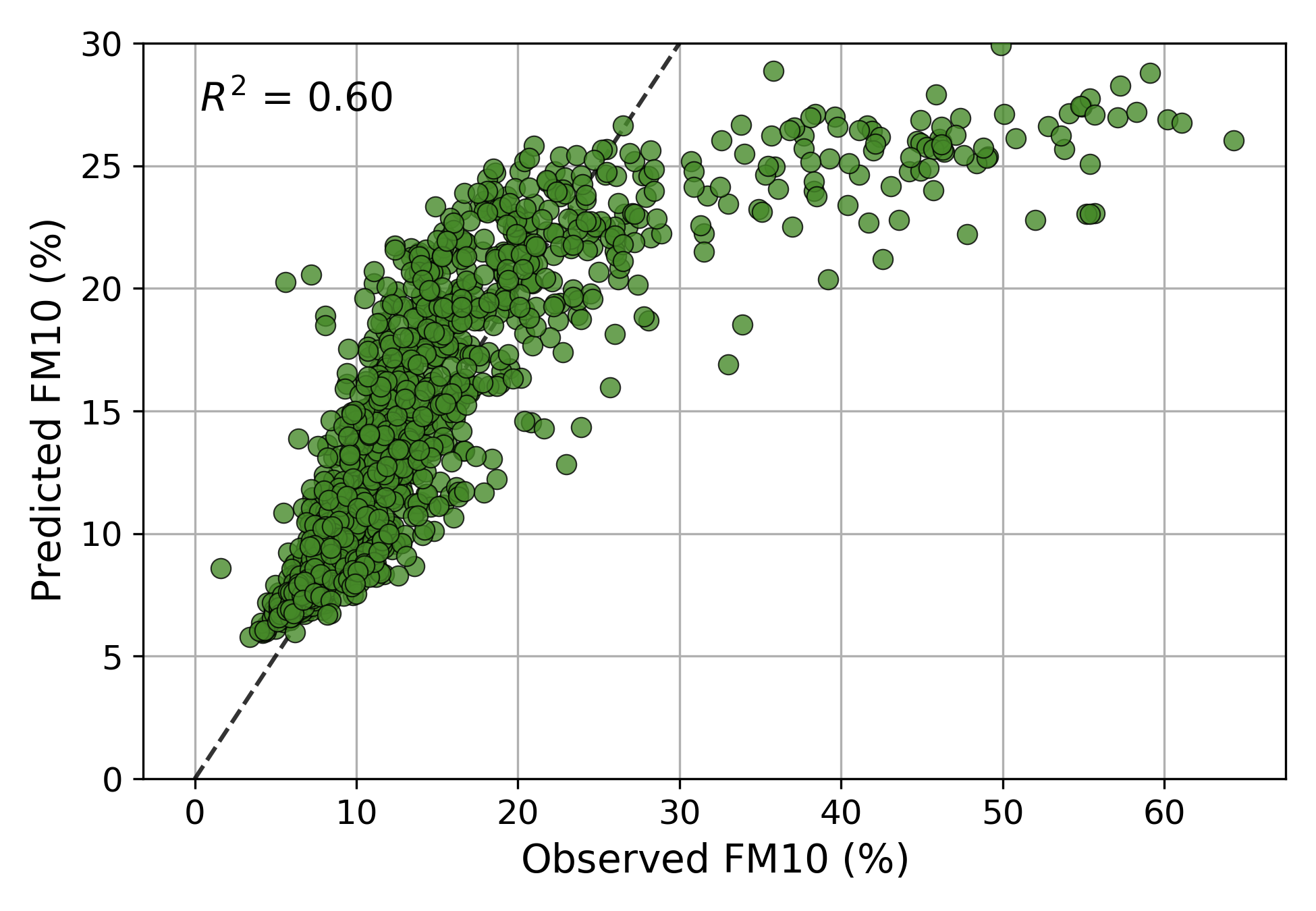}
\end{minipage}

\vspace{0.5cm}

\begin{minipage}[c]{0.8\textwidth}
  \centering
  \includegraphics[width=\textwidth]{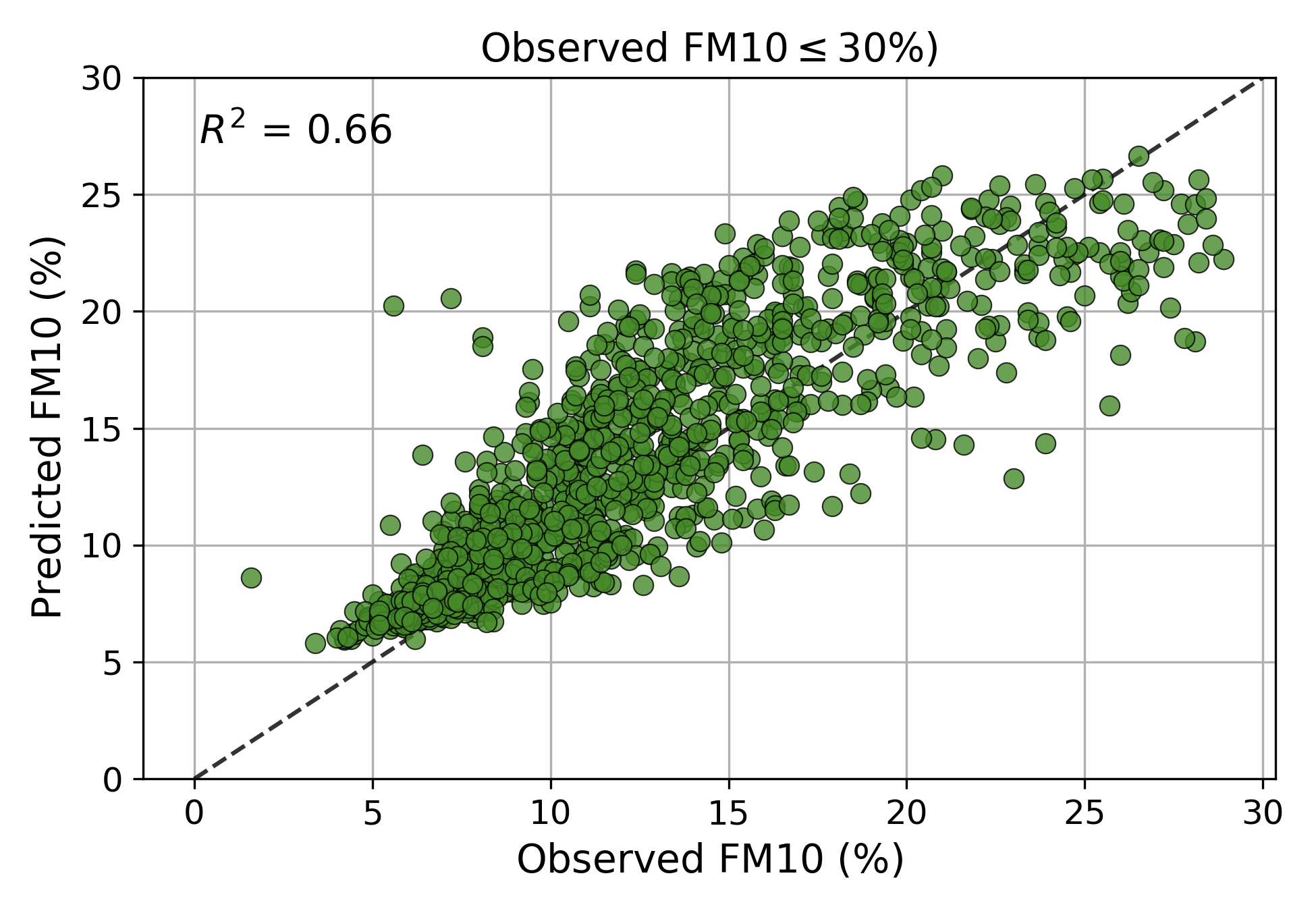}
\end{minipage}

\caption[
Observed vs Predicted FM10 for Zero-Shot Transfer RNN
]{
Observed vs Predicted FM10 for Zero-Shot Transfer RNN. The plotted RNN model is the set of weights that corresponds to the median RMSE on the test set out of 100 replications. Note the different x-axis ranges in the plots. 
(a) Top: All FM10 observations~(n=1,232). The zero-shot RNN predictions substantially underestimate the FM10 for wet fuels. 
(b) Bottom: FM10 observations filtered to less than or equal to $30\%$~(n=1,134). The zero-shot RNN predictions are much more accurate, though the model is overestimating the FM10 for very dry fuels and underestimating the FM10 for wetter fuels. 
}
\label{fig:fm10_scatters}
\end{figure}

For all FM10 values, the RNN performs substantially worse than the Nelson model in terms of $R^2$. We believe this is driven almost entirely by the artifacts of the FM10 fuel sensors. Some stations in the Synoptic data network report the manufacturer of the fuel sensor, and the majority are manufactured by FTS. These sensors appear to have a maximal response to rain, where the FM10 values are under 28$\%$. Figure~\ref{fig:ts_BAWC2} shows one week of sensor measurements from a station near Green Mountain, Colorado, southwest of Denver, Colorado. The RAWS identification number is BAWC2, and it is located at latitude and longitude coordinates (39.379440, -105.338330). The wetting and drying equilibria and the hourly rain are derived from the HRRR weather forecasts at that location and time. The equilibria are calculated from the air temperature and the relative humidity according to formula~\eqref{eq:eqs}. During the week from May 7, 2024, through May 14, there were a series of rain events that were forecasted by HRRR. The corresponding FM10 values from the sensor are under 28$\%$. We believe this is an artifact from the FTS sensor. Figure~\ref{fig:ts_zeroshot} shows the weather, FM10 field observations, and RNN zero-shot predictions from the week of March 27, 1996, through April 3 in Oklahoma. The RNN predictions are from the model replication with the median RMSE. There is a large rain event in the middle of that week, and the FM10 just after the rain is elevated up to nearly 50$\%$. The RNN predictions, however, reach a peak of about 27$\%$. We conclude that the RNN has learned this pattern from the sensor training data. This leads to large modeling errors for those times. However, an FM10 of 27$\%$ and an FM10 of $50\%$ are both very wet relative to the theoretical moisture of extinction for most fuels. The 13 Anderson fuel categories are widely used by wildfire researchers, and the values of the moisture of extinction for different fuels are almost all below 25$\%$~\citep{Anderson-1982-ADF}. While the modeling error for the observed FM10 of $50\%$ is large, the RNN is correctly predicting that the moisture is elevated above levels that are dangerous for rapid wildfire spread. That is why many FMC researchers choose to filter values below a high threshold of about $30\%$, as in \citet{Carlson-2007-ANM}. Further, the RNN can still in principle learn the relevant dynamics over time. That is, the RNN could learn to predict a near-constant maximal value of about 27$\%$ during times that the FM10 in reality is elevated due to rain.

\begin{figure}[htbp]
\centering
\includegraphics[width=14 cm]{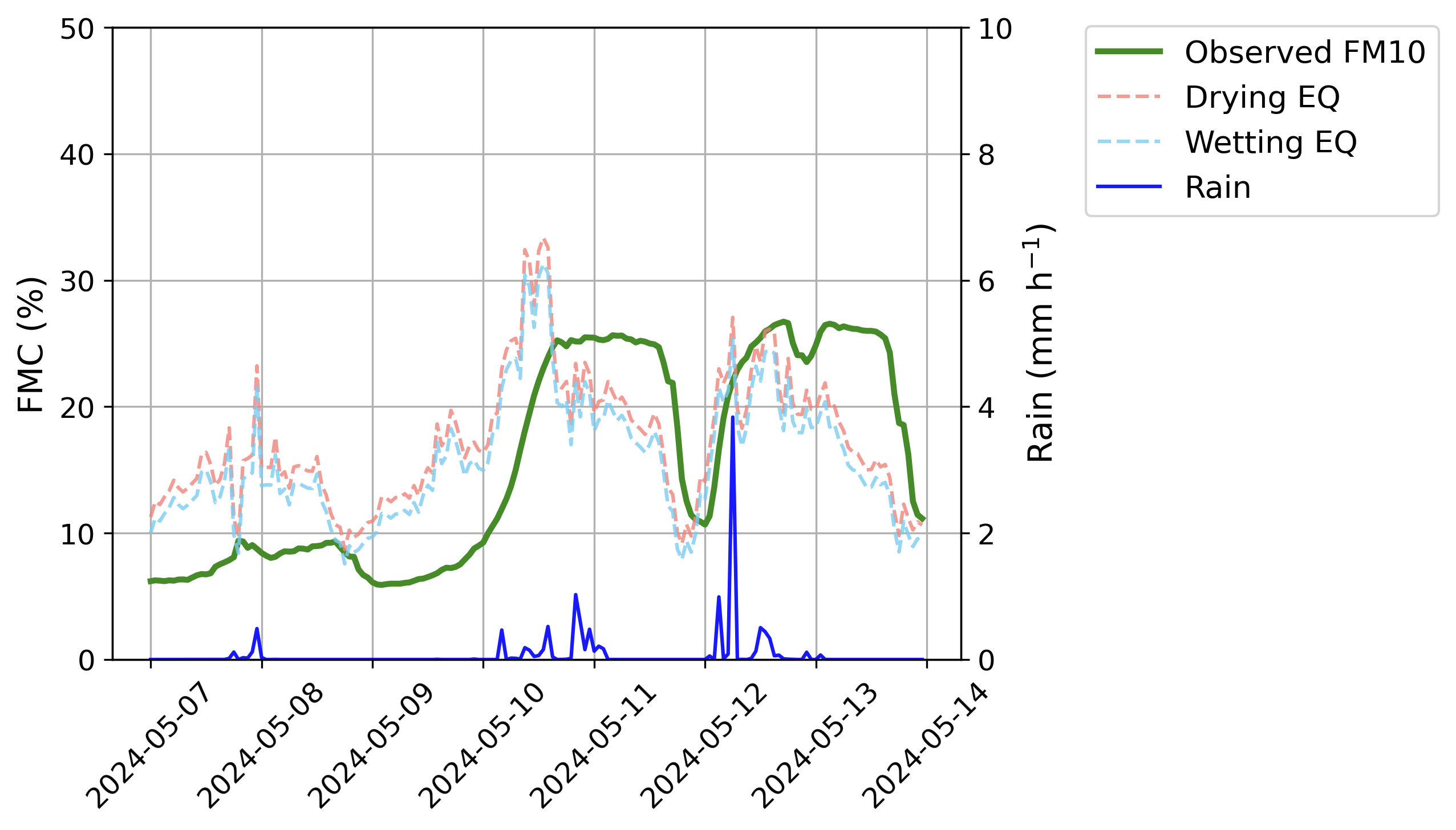}
\caption[Time series of weather and FM10 observations from a RAWS in Colorado.]{One week of FM10 sensor measurements and corresponding HRRR weather at BAWC2, southwest of Denver. The wetting and drying equilibria are derived from the forecasted air temperature and RH. The sensor appears to have a maximal response to rain, with the FM10 reaching a maximum value of around 27$\%$.}
\label{fig:ts_BAWC2} 
\end{figure} 

\begin{figure}[htbp]
\centering
\includegraphics[width=14 cm]{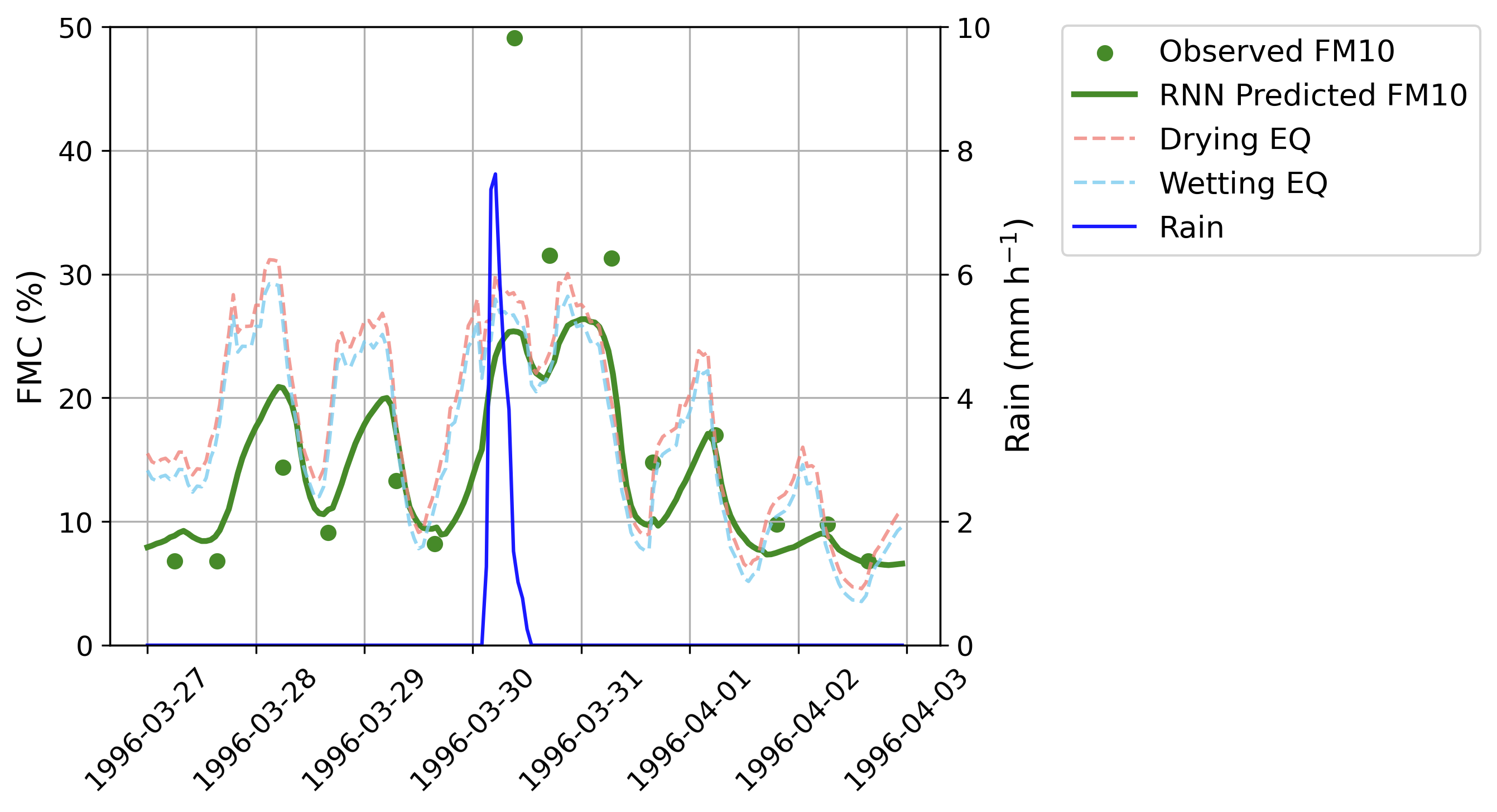}
\caption[RNN Zero-shot predictions with time series of weather and FM10 observations from Oklahoma field data.]{One week of FM10 field measurements in Oklahoma, the corresponding ground-based weather observations, and the RNN zero-shot predictions. The RNN predictions are from the model replication with the median RMSE in the test set. The RNN predictions reach a maximum of about 27$\%$, similar to the RAWS training data.}
\label{fig:ts_zeroshot} 
\end{figure} 

A large-scale analysis was conducted in a previous publication to estimate the forecast accuracy for the RNN. The analysis evaluated the forecast accuracy over all of 2024 in the Rocky Mountain region. The overall RMSE was 3.02~$\pm$~1.12 and the bias was -0.19~$\pm$~0.33~\citep{Hirschi-2026-RNN}. The confidence bounds were defined as one standard deviation over 500 replications. The replications varied the weather stations that were included in the training versus test set to account for the spatial variability of the weather stations. Also, the random seeds were changed for the replications. The RNN was initialized with different random parameters to account for the uncertainty introduced by initialization and gradient descent. The RMSE of $3.21\pm0.25\%$ across 100 replications for the zero-shot RNN forecast is within the estimated uncertainty bounds from the paper, so we consider this to confirm the utility of the RNN in this context. The bias of $-0.74\pm0.53\%$ falls partially outside the estimated uncertainty bounds from the paper. Further, in this case the bias is systematically less than zero, while in the paper the uncertainty bounds had zero within it. Therefore, we conclude the RNN is under-predicting the 10h dead FMC in the Carlson field data to a greater degree than for the fuel sensors. However, the bias value of $-0.74\%$ is still relatively small, and it is smaller in absolute terms than the bias for the Nelson model. 

As a baseline of comparison, Table~\ref{tab:fm10_static} shows the same accuracy metrics for a static XGBoost and a linear regression model. The static models perform better than the RNN in terms of RMSE when including very wet fuels, since they were trained directly on the Oklahoma field observations and do not suffer from the issues related to the FM10 sensors. The $R^2$ metrics are comparable for all FM10 observations. However, the RNN was substantially more accurate than the static models when FM10 was filtered to less than or equal to $30\%$. The accuracy metrics for the static models confirm that the zero-shot FM10 RNN is more accurate than simple ML baselines. 

Zero-shot transfer learning for FM10 confirms that the RNN is reliably transferring knowledge in this case. The input weather data came from numerical weather models in the source domain and from ground-based instruments in the target domain. The observations of FM10 were from fuel sensors in the source task and from gravimetric measurements in the target task. While the accuracy of the RNN was poor when considering wet conditions over $30\%$, the RNN was highly accurate when filtering to FM10 values less than $30\%$. The accuracy metrics for the RNN were slightly better than the Nelson model for those values of FM10. The Nelson model was also specifically calibrated to the gravimetric observations from the Oklahoma field study. The RNN was not trained with FM10 values from that exact location, nor with any gravimetric measurements at all. Thus, the RNN was disadvantaged relative to the Nelson model in multiple ways. We conclude that the zero-shot RNN was at least as accurate as the Nelson model for operationally relevant values of FM10 for the given study location and times. The Nelson model remains the state-of-the-art FMC model used by researchers and agencies, so this is an encouraging analysis that points to the potential operational use of the RNN.

\subsection{Stability of the Learned Gate Bias Structure for the Pretrained FM10 Model - Results}\label{sec:stable}

This section analyzes the 100 replications of the pretrained FM10 RNN. This is the model from the source learning task, and no transfer learning has been applied yet. We analyze the resulting model accuracy and the distributions of the forget and input gates to provide evidence that the RNN replications show a stable learned gate bias structure. 

The fitting accuracy of the RNN was calculated by comparing the predicted FM10 values at the end of training to the observed FM10 from the training samples and from the validation set for each of the 100 replications. We denote this as the fitting accuracy to contrast it with the forecast accuracy, which is a more rigorous model evaluation that requires spatiotemporal extrapolation. Table~\ref{tab:rnn_rep_errs} shows the fitting accuracy metrics for the training set and validation set averaged over the 100 replications. The range of the RMSE is the minimum and maximum observed RMSE across the replications. The number of observations is denoted $N$, and we report the mean and standard deviation of the number of training observations used. The value $N$ is the individual number of hours of sensor observations that were used, and it varies across replications due to splitting the sampling locations into training and validation sets. Specific locations have different numbers of available observations over time. Sensors occasionally break or stop reporting for whatever reason and need to be serviced, and sometimes new RAWS are added to the network so they will not have a full year of data for their first year of operations. 

\begin{table}[htbp]
\centering
\caption[Fitting accuracy of the FM10 RNN across replications.]{Fitting accuracy of the FM10 RNN across 100 replications.}
\label{tab:rnn_rep_errs}
\begin{tabular}{ll}
\toprule
Metric & Values \\
\midrule
RMSE (Mean $\pm$ SD) 
& 2.81 $\pm$ 0.49 \\

RMSE Range (Low, High) 
& (2.63, 2.91) \\

N (Mean $\pm$ SD) 
& 65,691,366 $\pm$ 589,543 \\
\bottomrule
\end{tabular}
\end{table}

The empirical RMSE across 100 replications was 2.81~$\pm$~0.49. The estimated forecast RMSE for the RNN was 3.02~$\pm$~1.12. The fitting RMSE is within the estimated bounds of the forecast RMSE. Similarly, the validation set RMSE across 100 replications was 2.47~$\pm$~1.67, which is also consistent with the estimated forecast accuracy. On average, the training set utilized over 65 million observations of FM10 for training. Contrast that with the Oklahoma field study, which had 1,232 observations of FM10 and even less for FM100 and FM1000 fuels.

The consistency of the fitting accuracy across replications is one piece of evidence that the learned structure of the RNN is stable. If we observed drastically different fitting accuracies for the RNN across replications, then we could not conclude that the RNN is converging to a stable set of learned dynamics. It is still possible that the similar fitting accuracy metrics could be produced by RNNs with drastically different weights, but the consistency of the fitting accuracy is a necessary condition for the stability of the learned dynamics. The fitting accuracy across replications increases our confidence that the time-warp method can be deployed on the bias terms of the LSTM without worrying that the bias values are completely arbitrary. Further, we conclude that there is no strong evidence that the RNN is being overfit to the training set. This increases our confidence that the RNN is learning deeper aspects of the FM10 dynamics and can be reliably deployed for the purposes of transfer learning. 

\begin{table}[htbp]
\centering
\caption[Summary Statistics for Empirical Distributions of Gate Biases]{Summary Statistics for Empirical Distributions of Gate Biases.}
\label{tab:bias_stability}
\begin{tabular}{lccc}
\toprule
Metric 
& Forget Gate Bias 
& Input Gate Bias 
& Difference \\
\midrule
\makecell{Mean of per-model means \\ ($\pm$ SD across reps)}
& 1.14 $\pm$ 0.04 
& 0.11 $\pm$ 0.08 
& 1.03 $\pm$ 0.10 \\
\midrule
\makecell{Range of per-model means \\ (low, high) }
& (1.08, 1.32) 
& (-0.45, 0.24) 
& (0.90, 1.56) \\
\midrule
\makecell{Mean of per-model SDs \\ ($\pm$ SD across reps) }
& 0.23 $\pm$ 0.15 
& 0.39 $\pm$ 0.17 
& 0.47 $\pm$ 0.22 \\
\midrule
\makecell{Range of per-model SDs \\ (low, high) }
& (0.14, 1.37) 
& (0.21, 1.42) 
& (0.25, 1.91) \\
\bottomrule
\end{tabular}
\end{table}

Table~\ref{tab:bias_stability} shows summary statistics for the gate biases and their per-unit difference. Variability in the first two moments across replications is modest relative to their magnitude for the forget gate bias, input gate bias, and their difference. In each case, the across-replication standard deviation of the per-model mean is smaller than the mean itself, and the same holds for the per-model standard deviation. This indicates that both the central tendency and overall spread of the forget gate bias distribution are consistent across replications. The input gate bias distributions are more variable, but still reasonably consistent. Therefore, the learned bias structure exhibits stability in its low-order summaries, even though fine-scale distributional differences exist.

Next, we visually examine the distributions of the forget and input gate biases across replications. Figure~\ref{fig:lstm_bias_hist} shows the distribution of the mean bias terms across 100 replications, which are relatively consistent. The full distributions are shown in Appendix~\ref{app:extra} and are visualized using Gaussian kernel density estimation to provide a smooth representation for comparison. The replications are sorted by the sample variance of the distribution. Figures~\ref{fig:bf_hist},~\ref{fig:bi_hist}, and~\ref{fig:diff_hist} show the full empirical distribution across 100 replications of the forget gate bias, the input gate bias, and the difference between the two. The difference is calculated by subtracting the input gate bias from the forget gate bias for each individual unit. The global mean is plotted as a dashed vertical line. The distributions show variability, but generally they have the same central tendency and similar spread. The distribution of the difference $(b_f - b_i)$ shows a consistent pattern. In principle, there is no requirement that these quantities align so cleanly. Due to permutation invariance, the shapes of the distributions in Figures~\ref{fig:bf_hist} and~\ref{fig:bi_hist} do not imply that Figure~\ref{fig:diff_hist} will show a stable pattern across replications. 

\begin{figure}[htbp]
\centering
\begin{minipage}{0.48\textwidth}
    \centering
    \includegraphics[width=\linewidth]{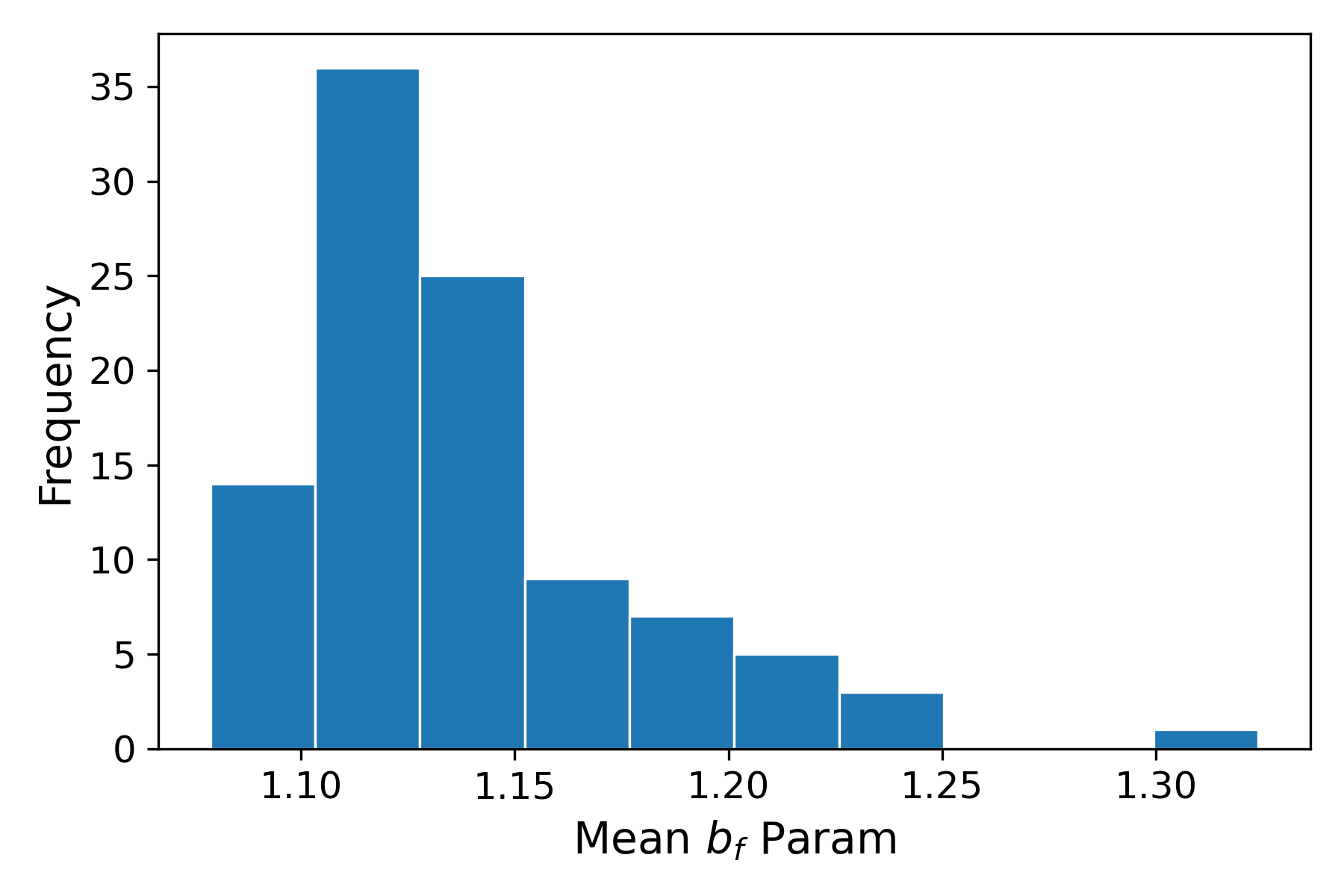}
\end{minipage}
\hfill
\begin{minipage}{0.48\textwidth}
    \centering
    \includegraphics[width=\linewidth]{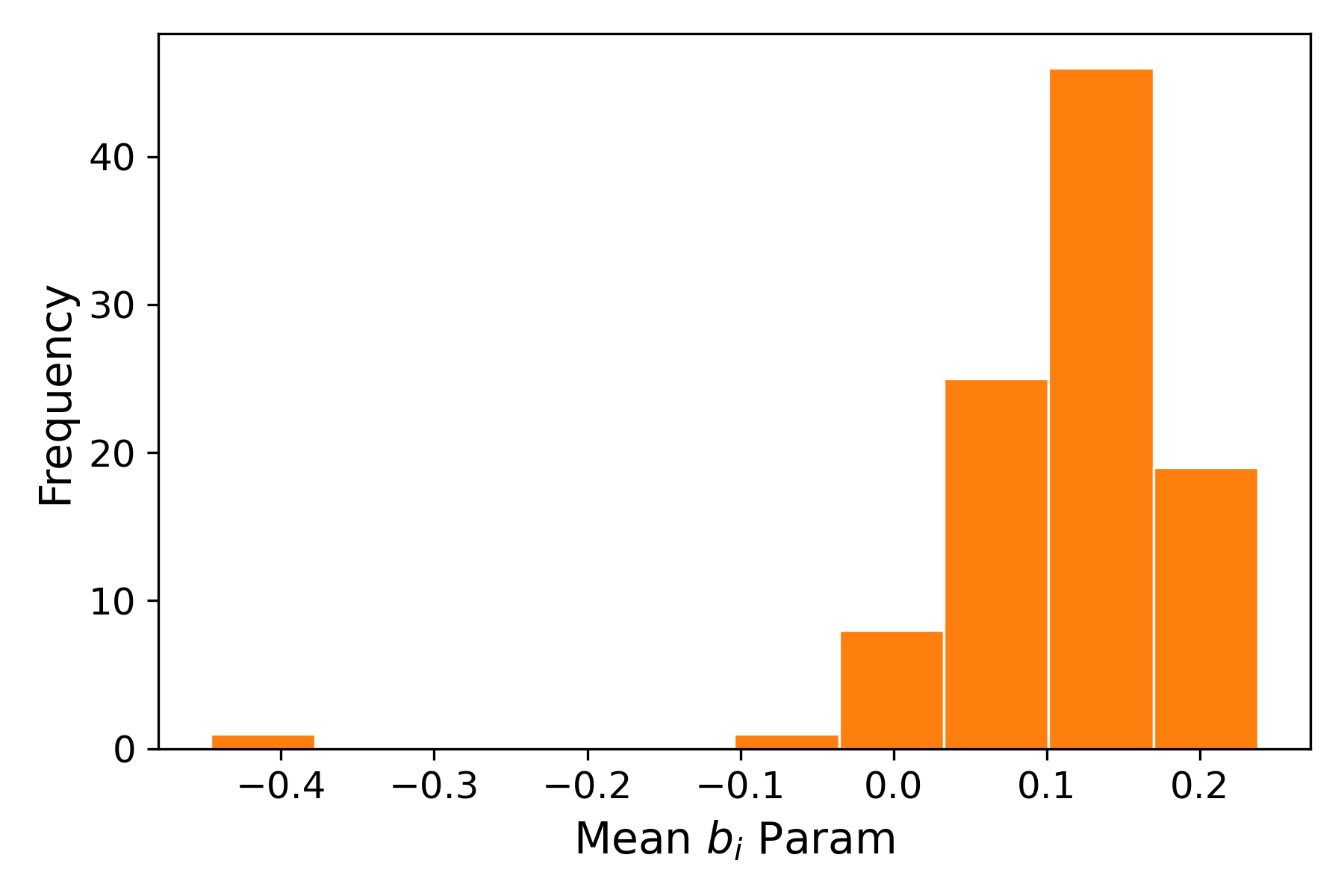}
\end{minipage}
\caption[Distribution of mean LSTM gate biases across replications.]{Distribution of the mean LSTM forget-gate ($b_f$) and input-gate ($b_i$) bias parameters across 100 training replications. In each replication, the mean bias across the 64 units in the LSTM layer is computed, producing one value per run. The histograms therefore summarize the distribution of these mean biases over the 100 replications.}
\label{fig:lstm_bias_hist}
\end{figure}

However, a formal statistical test rejects the null hypothesis that the distributions are identical. The k-sample Anderson–Darling test is a nonparametric procedure used to determine whether multiple samples arise from a common underlying distribution~\citep{Scholz-1987-KSA}. The test assumes that the samples are independent and drawn from continuous populations under the null hypothesis of a common distribution; the test statistic is based on a weighted measure of the squared differences between the empirical distribution functions of the samples and the pooled distribution, placing greater emphasis on discrepancies in the distribution tails. The k-sample Anderson–Darling test rejects the hypothesis that all replications share an identical distribution for $b_f$, $b_i$, and their difference, with p-values less than 0.001 in each case. This indicates that at least one replication’s empirical distribution differs.

The empirical distributions for the forget and input gates and their per-unit difference provide statistical evidence that the RNN is learning a stable set of parameters. This lends credibility to the method of time-warping the bias parameters for this particular RNN. Formal statistical testing nevertheless indicates that at least one replication exhibits a detectably different empirical distribution, suggesting that while the bias structure is broadly consistent, it is not identical across all runs.

\subsection{FM1 Transfer Learning Results}

The testing period for FM1 was from August 1997 through December 1997. A total of 271 observations were collected over this period, of which 247 had FM1 values less than or equal to $30\%$. Table~\ref{tab:fm1_all} shows the accuracy metrics on the test set for the different transfer learning approaches, and Figure~\ref{fig:fm1_rmse} shows a visual comparison of the RMSE statistics. Table~\ref{tab:fm1_le30} shows the same metrics with FM1 filtered to values less than or equal to $30\%$, and Figure~\ref{fig:fm1_30_rmse} shows a visual comparison of the corresponding RMSE statistics. Both tables also include the corresponding metrics for the Nelson model for comparison. The RMSE and bias are interpretable in units of percent FMC. A positive bias indicates underprediction, while a negative bias indicates overprediction.

\begin{table}[htbp]
\centering
\caption[FM1 Accuracy Metrics - Transfer Learning Comparison.]{Accuracy metrics for FM1 RNN predictions compared to field observations in Oklahoma for different transfer learning methods. Metrics are reported as mean $\pm$ standard deviation across 100 replications~(n=271).}
\label{tab:fm1_all}
\begin{tabular}{lccc}
\hline
Method & $R^2$ & Bias (\%) & RMSE (\%) \\
\hline
No Transfer & $0.78 \pm 0.03$ & $0.84 \pm 0.71$ & $6.31 \pm 0.40$ \\
Full Fine-Tuning & $0.84 \pm 0.02$ & $2.24 \pm 0.53$ & $5.31 \pm 0.38$ \\
Freeze Recurrent Layer & $0.79 \pm 0.02$ & $1.23 \pm 0.76$ & $6.06 \pm 0.36$ \\
Freeze Dense/Output Layers & $0.84 \pm 0.03$ & $2.19 \pm 0.46$ & $5.39 \pm 0.43$ \\
Time-Warping & $0.41 \pm 0.03$ & $2.46 \pm 0.42$ & $10.29 \pm 0.23$ \\
Time-Warping + Fine-Tuning & $0.84 \pm 0.05$ & $1.11 \pm 1.04$ & $5.26 \pm 0.71$ \\
\hline\hline
Nelson Model* & $0.64$ & $-2.7$ & - \\
\hline
\end{tabular}
\end{table}

\begin{figure}[htbp]
\centering
\includegraphics[width=14 cm]{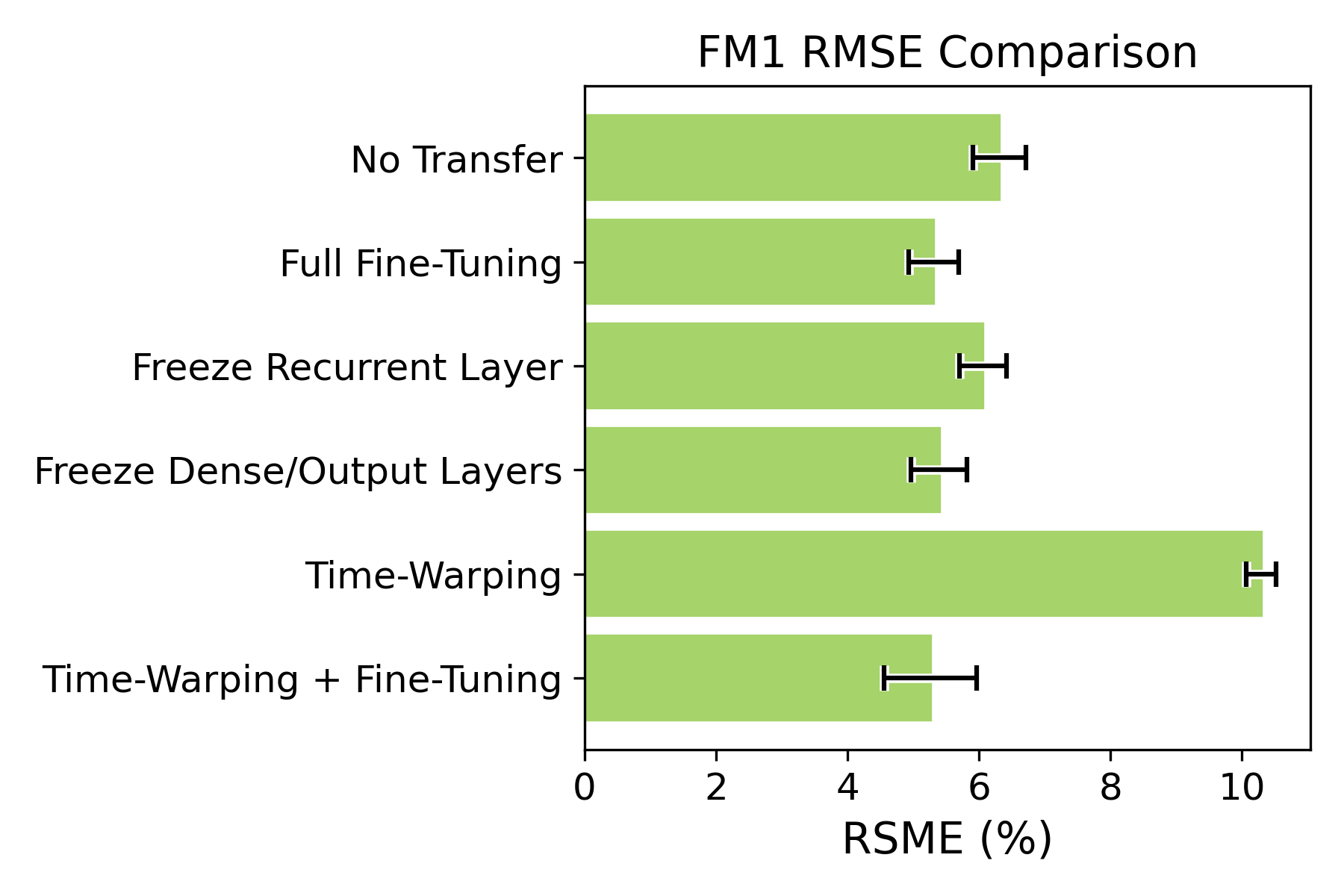}
\caption[FM1 RMSE by Transfer Learning Method.
]{FM1 RMSE comparison for different transfer learning methods. Bars represent the mean RMSE when predicting FM1 on the test set over 100 replications, and brackets show $\pm$ 1 standard deviation of the RMSE. }
\label{fig:fm1_rmse} 
\end{figure} 

The RNN-based methods perform relatively poorly when evaluated against all FM1 observations, as indicated by RMSE values over $5\%$. This is primarily due to issues with modeling very wet fuels. However, all the methods except for the Time-Warping method achieved higher $R^2$ metrics and lower bias in absolute terms than the Nelson model, though we cannot distinguish whether this difference arises from the RNNs being evaluated on a testing subset of the data versus the Nelson model.

The Time-Warping method with no fine-tuning performs the worst, with an RMSE over $10\%$ and a mean bias of $2.46\%$, indicating systematic underprediction. This method starts with the FM10 RNN from the source learning task that was trained on fuel sensors, which have a maximal response to rain. Only the two time-warping parameters are fit to actual FM1 observations, and the other remaining parameters remain fixed. The issue with predicting wet fuels persists. Fine-tuning alleviates much of this issue, resulting in the lowest mean RMSE of $5.26\%$ and tying for the best mean $R^2$ of $0.84$. The uncertainty bounds for RMSE and $R^2$ for the Time-Warping plus Fine-Tuning method indicate performance similar to the Full Fine-Tuning method and the Freeze Dense/Output Layers method.

\begin{table}[htbp]
\centering
\caption[FM1 Accuracy Metrics ($\leq 30\%$) - Transfer Learning Comparison.]{Accuracy metrics for FM1 RNN predictions compared to field observations in Oklahoma for different transfer learning methods. Metrics are reported as mean $\pm$ standard deviation across 100 replications. FM1 values are filtered $\leq 30\%$~(n=247).}
\label{tab:fm1_le30}
\begin{tabular}{lccc}
\hline
Method & $R^2$ & Bias (\%) & RMSE (\%) \\
\hline
No Transfer & $0.66 \pm 0.06$ & $-0.30 \pm 0.62$ & $3.60 \pm 0.29$ \\
Full Fine-Tuning & $0.67 \pm 0.05$ & $1.43 \pm 0.49$ & $3.59 \pm 0.29$ \\
Freeze Recurrent Layer & $0.43 \pm 0.11$ & $0.30 \pm 0.74$ & $4.69 \pm 0.45$ \\
Freeze Dense/Output Layers & $0.72 \pm 0.05$ & $1.40 \pm 0.40$ & $3.30 \pm 0.27$ \\
Time-Warping & $0.71 \pm 0.04$ & $-0.28 \pm 0.43$ & $3.35 \pm 0.25$ \\
Time-Warping + Fine-Tuning & $0.64 \pm 0.11$ & $0.62 \pm 0.92$ & $3.69 \pm 0.53$ \\
\hline\hline
Nelson Model* & $0.46$ & $-3.6$ & - \\
\hline
\end{tabular}
\end{table}

\begin{figure}[htbp]
\centering
\includegraphics[width=14 cm]{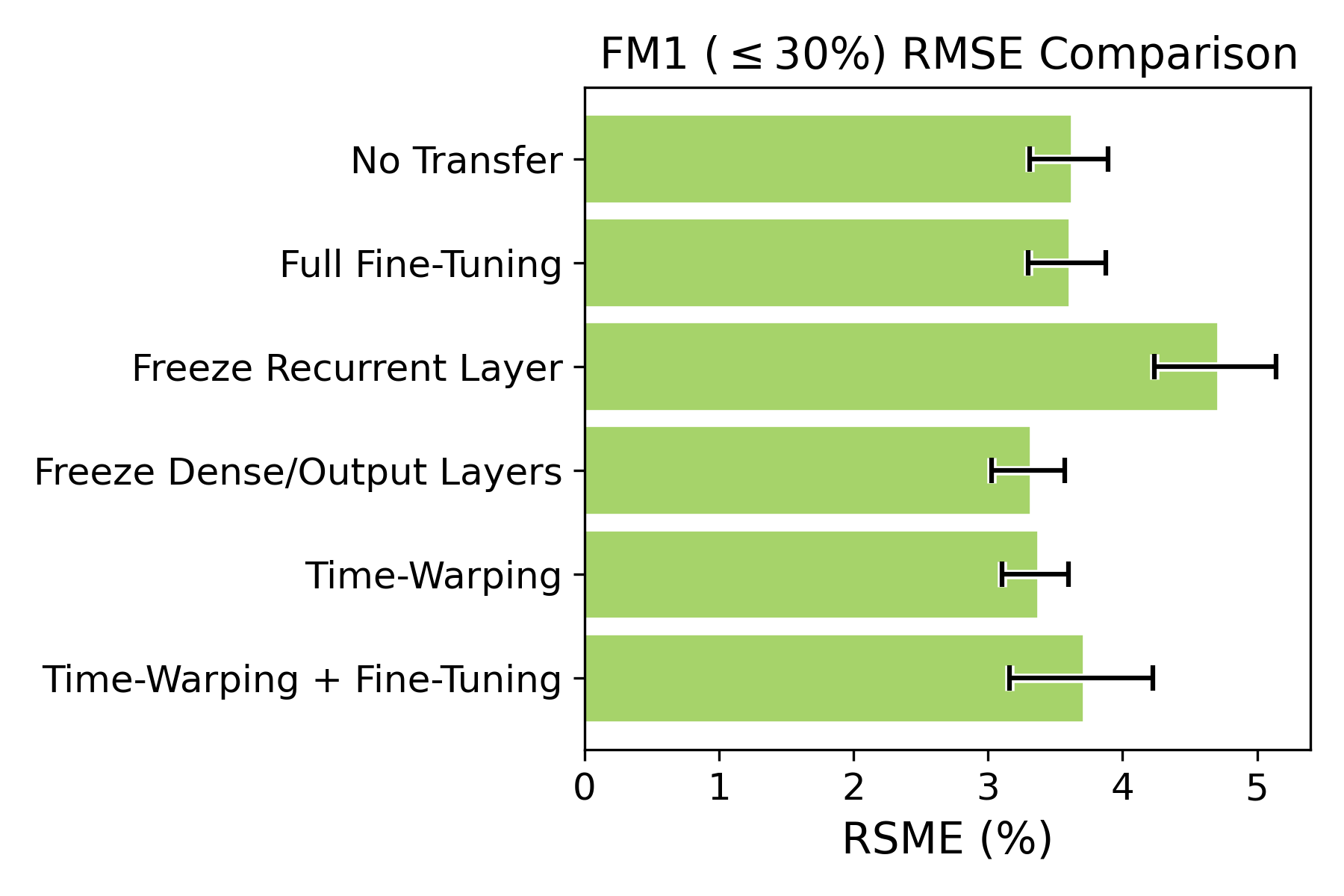}
\caption[FM1 ($\leq 30\%$) RMSE by Transfer Learning Method.
]{FM1 RMSE comparison for different transfer learning methods with FM1 filtered to ($\leq 30\%$). Bars represent the mean RMSE when predicting FM1 on the test set over 100 replications, and brackets show $\pm$ 1 standard deviation of the RMSE. }
\label{fig:fm1_30_rmse} 
\end{figure} 

For the FM1 values less than or equal to $30\%$, the RNN-based methods have substantially lower RMSE values. However, all of the $R^2$ values decrease except for the Time-Warping method, which highlights a limitation of the $R^2$ metric and motivates the use of RMSE as our primary evaluation metric. Even when the average prediction error is smaller, the $R^2$ can decrease because the variability of the FM1 values is reduced when the very wet observations are removed. The high variance introduced by the wet observations can therefore lead to a superficially higher $R^2$ value. The Nelson model had the second worst $R^2$ of $0.43$ and the largest bias at $-3.6\%$.

The Time-Warping method performed better when the FM1 values over $30\%$ are filtered out. The $R^2$ of $0.71$ was among the best and was within the uncertainty bounds of the best-performing method. The bias of $-0.28\pm0.43$ includes zero within the uncertainty bounds, so these predictions are consistent with being systematically unbiased. The mean RMSE of $3.35\%$ was the second best and again was within the uncertainty bounds of the RMSE from the most accurate method. The $R^2$ was substantially better than the Nelson model baseline, again noting the different data sets and limitations of $R^2$.

Somewhat surprisingly, fine-tuning did not improve the accuracy of the Time-Warping method in this case. The mean RMSE for the Time-Warping plus Fine-Tuning method was $3.69\%$, which is larger and outside the uncertainty bounds of the RMSE for the Time-Warping method of $3.35\pm0.25$. We hypothesize this is due to the fine-tuning process selecting a set of parameters that minimized the large errors due to wet fuels, and this optimization came at the expense of accuracy for drier fuels. We discuss in Section~\ref{sec:future} ways to augment the training with custom loss functions that may help address this issue.

The results for the other transfer learning methods provide insight as to which aspects of the problem generalized between the source and target learning tasks. The Full Fine-Tuning method, which started with the pretrained FM10 RNN and allowed all weights to be updated, did not provide substantial improvements over the No Transfer baseline method. The $R^2$ and RMSE values are essentially the same, and the bias is worse for the Full Fine-Tuning method. This may be the result of the same issues with fine-tuning with very wet observations that we discuss above. Freezing the recurrent layer performed the worst, with a mean RMSE of $4.69\%$ that is outside of the confidence bounds of the RMSE for all the other methods. This supports the idea that the temporal dynamics need to be adjusted when transferring knowledge from the FM10 source task. Freezing the dense and output layers was the most accurate method, but the Time-Warping method showed consistent accuracy in terms of $R^2$ and RMSE, and it had a better bias. Freezing the dense and output layers performed the best. The estimated RMSE uncertainty bounds were $3.30\pm0.27$, so the mean RMSE estimate for the No Transfer baseline is outside of this interval, though there is overlap with its own uncertainty bounds. We conclude that transferring knowledge from the source learning task is improving the predictive accuracy. 

However, the main result from the FM1 accuracy metrics is that the Time-Warping method was consistent in terms of accuracy with the standard transfer learning baselines, while there is reason to believe that it is less prone to overfitting to the sparse target data. The test set consisted of 247 observations of FM1 less than or equal to $30\%$ at a single location over 4 months. We cannot strongly conclude that the accuracy for the Freeze Dense/Output Layers method will generalize to other locations and times. The Time-Warping method fit two parameters to the observed FM1 data, which modified 128 out of over 21,000 trainable parameters. Further, the pretrained FM10 RNN had an RMSE of $3.02\pm1.12$ and a bias of $-0.19\pm0.33$, which were calculated on a much larger dataset at 151 different spatial locations. The accuracy metrics for the Time-Warping method are consistent for FM1$\leq30\%$; the mean bias and RMSE for this method fall within the uncertainty bounds estimated in the source learning task.

Figure~\ref{fig:fm1_scatters} shows the relationship between the observed and predicted FM1 values. The figures use the FM1 predictions from a single RNN selected from the 100 trained models, specifically the model whose RMSE was closest to the median RMSE across all replications. We display a single representative model to illustrate the behavior that can be expected from an individual model prediction. The scatterplot on the right side, which filters FM1$\leq30\%$, shows that the RNN tends to overestimate FM1 for drier fuels and underestimate FM1 for wetter fuels. Overall, the plot shows relatively strong agreement between the RNN predictions and the observations.

\begin{figure}[htbp]
\centering
\begin{minipage}[c]{0.8\textwidth}
  \centering
  \includegraphics[width=\textwidth]{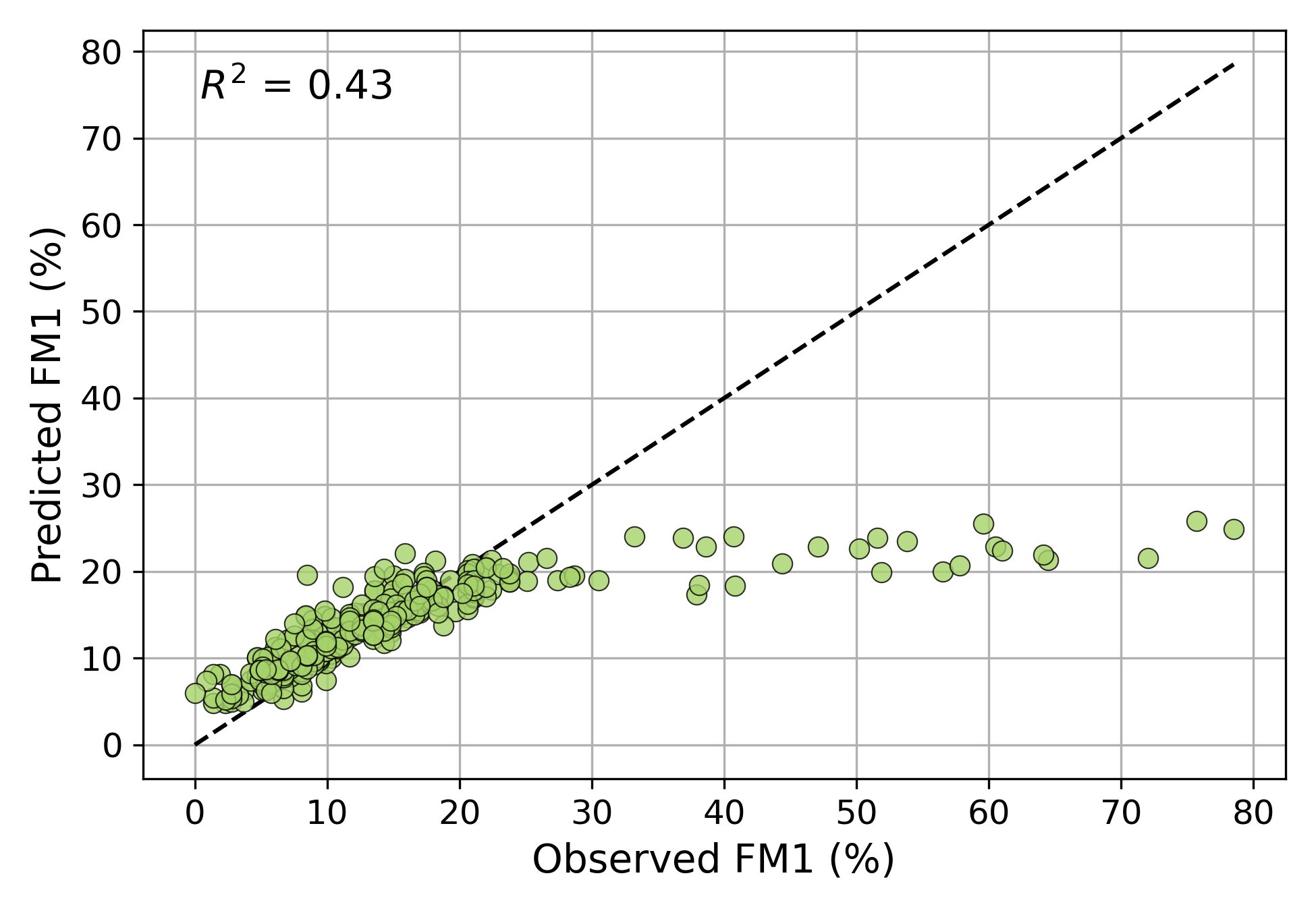}
\end{minipage}

\vspace{0.5cm}

\begin{minipage}[c]{0.8\textwidth}
  \centering
  \includegraphics[width=\textwidth]{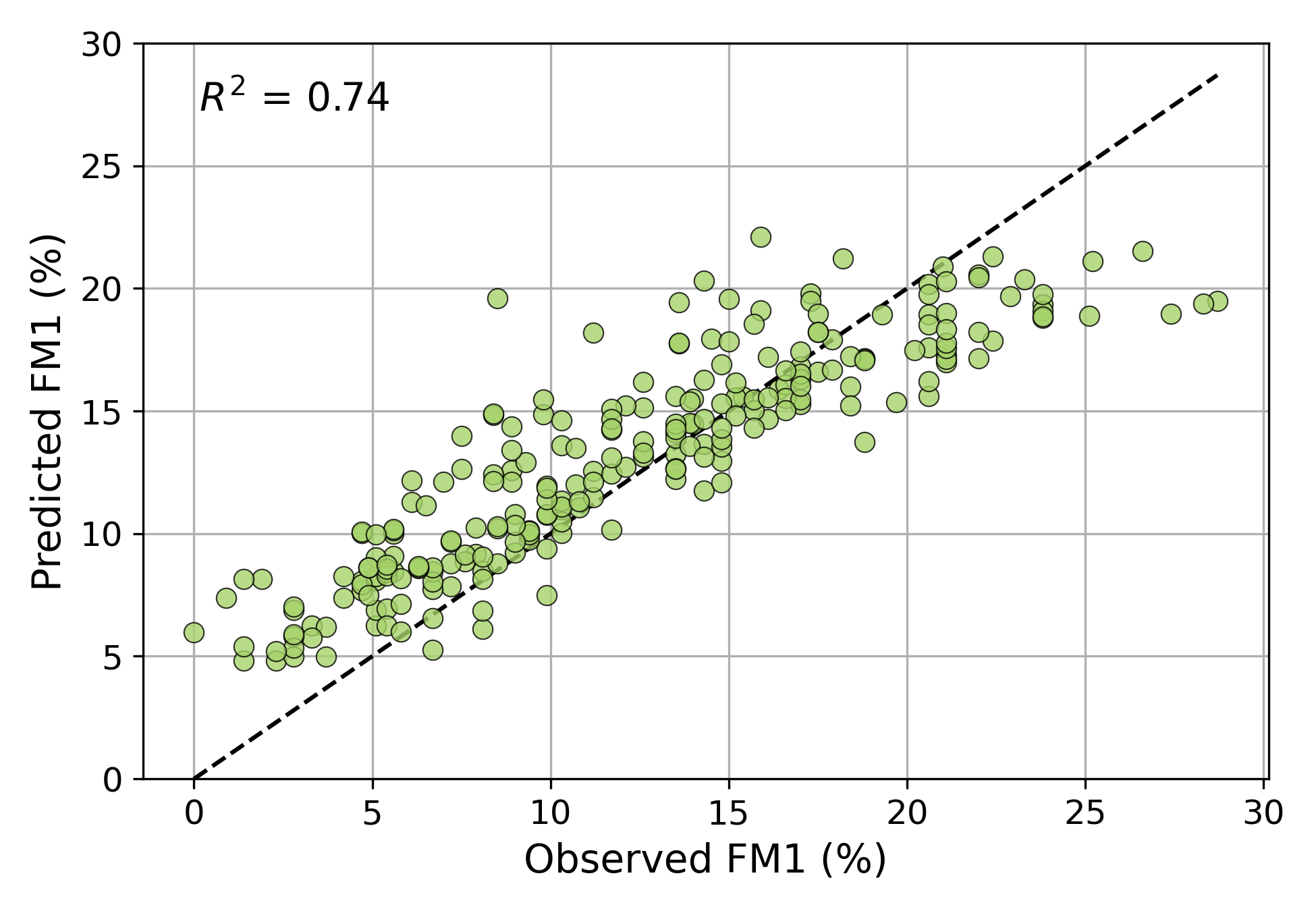}
\end{minipage}

\caption[
Observed vs Predicted FM1 for Time-Warped RNN
]{
Observed vs Predicted FM1 for Time-Warped RNN. The plotted RNN model is the set of weights that corresponds to the median RMSE on the test set out of 100 replications. 
(a) Top: All FM1 observations in the test set~(n=271). The RNN predictions substantially underestimate the FM1 for wet fuels. 
(b) Bottom: FM1 observations filtered to less than or equal to $30\%$~(n=247). The RNN predictions are much more accurate, though the model is overestimating the FM1 for very dry fuels and underestimating the FM1 for wetter fuels. 
}
\label{fig:fm1_scatters}
\end{figure}

\subsection{FM100 Transfer Learning Results}

The testing period for FM100 was from August 1997 through December 1997. A total of 206 observations were collected over this period, and no filtering was done for wet fuels. Table~\ref{tab:fm100_all} shows the accuracy metrics on the test set for the different transfer learning approaches, and Figure~\ref{fig:fm100_rmse} shows a visual comparison of the RMSE statistics. Both tables also include the corresponding metrics for the Nelson model for comparison. The RMSE and bias are interpretable in units of percent FMC. A positive bias indicates underprediction, while a negative bias indicates overprediction.

\begin{table}[htbp]
\centering
\caption[FM100 Accuracy Metrics - Transfer Learning Comparison.]{Accuracy metrics for FM100 RNN predictions compared to field observations in Oklahoma for different transfer learning methods~(n=206). Metrics are reported as mean $\pm$ standard deviation across 100 replications.}
\label{tab:fm100_all}
\begin{tabular}{lccc}
\hline
Method & $R^2$ & Bias (\%) & RMSE (\%) \\
\hline
No Transfer & $0.64 \pm 0.13$ & $-0.19 \pm 0.33$ & $2.60 \pm 0.44$ \\
Full Fine-Tuning & $0.84 \pm 0.06$ & $-0.41 \pm 0.41$ & $1.75 \pm 0.29$ \\
Freeze Recurrent Layer & $0.73 \pm 0.08$ & $-0.34 \pm 0.33$ & $2.25\pm 0.34$ \\
Freeze Dense/Output Layers & $0.75 \pm 0.09$ & $0.12 \pm 0.44$ & $2.15 \pm 0.38$ \\
Time-Warping & $0.70 \pm 0.08$ & $-0.33 \pm 0.72$ & $2.38 \pm 0.28$ \\
Time-Warping + Fine-Tuning & $0.78 \pm 0.10$ & $-0.64 \pm 0.56$ & $2.00 \pm 0.40$ \\
\hline\hline
Nelson Model* & $0.75$ & $-2.1$ & - \\
\hline
\end{tabular}
\end{table}

\begin{figure}[htbp]
\centering
\includegraphics[width=14 cm]{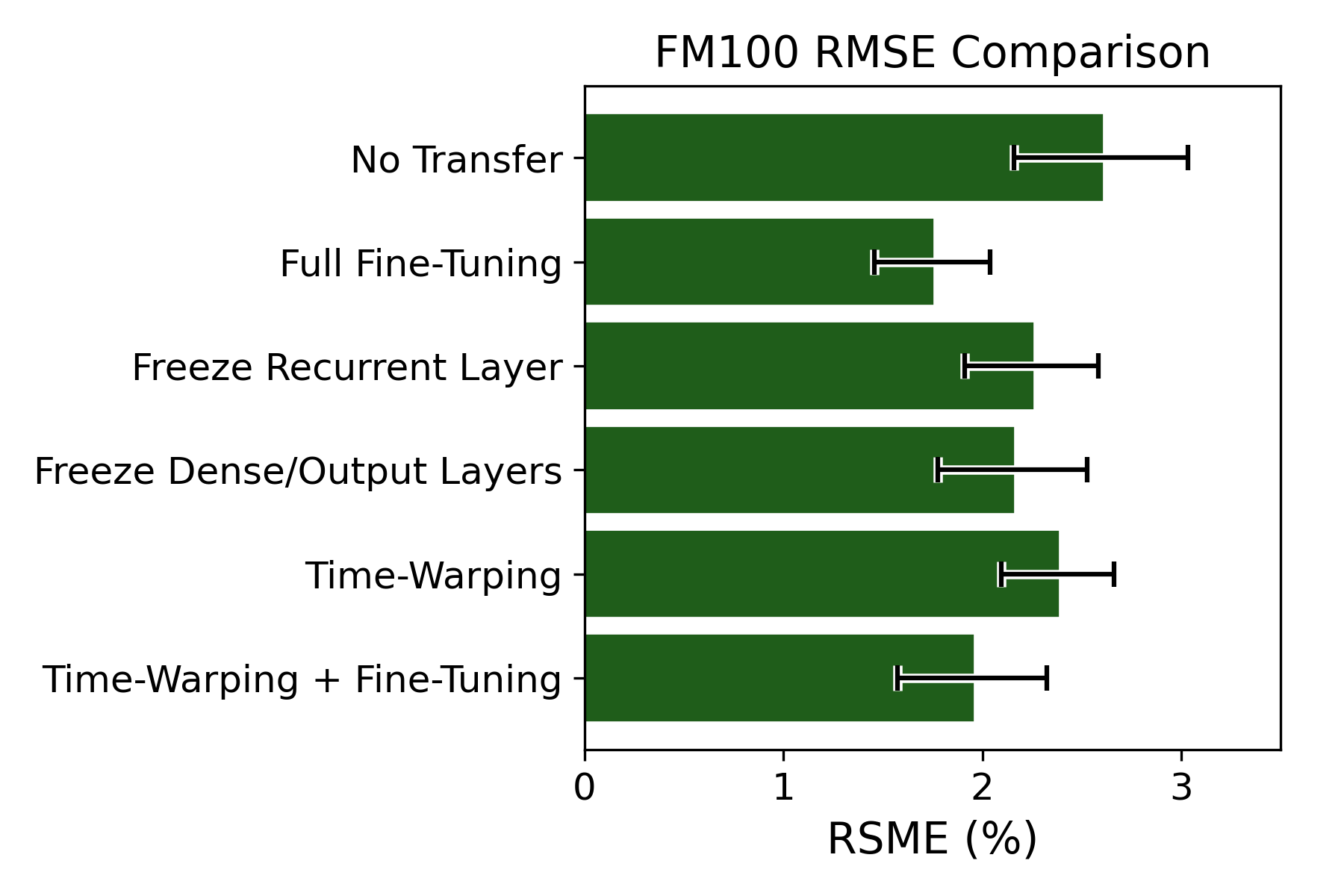}
\caption[FM100 RMSE by Transfer Learning Method.
]{FM100 RMSE comparison for different transfer learning methods. Bars represent the mean RMSE when predicting FM100 on the test set over 100 replications, and brackets show $\pm$ 1 standard deviation of the RMSE. }
\label{fig:fm100_rmse} 
\end{figure}

All methods resulted in higher accuracy for FM100 relative to FM1, likely due to the lower variability and slower dynamics of FM100. All RNN-based techniques appear to be systematically unbiased, with the estimated uncertainty bounds for the bias including zero. The Time-Warping method had a mean RMSE of $2.38\pm0.28$, which represents a modest improvement over the No Transfer baseline, although the mean RMSE for the No Transfer method falls within the uncertainty bounds. The Full Fine-Tuning and Time-Warping plus Fine-Tuning methods achieved the lowest RMSE values, and the uncertainty bounds for the Time-Warping method do not overlap with those estimates. While these results suggest that the fine-tuned methods were more accurate, the mean RMSE of $2.38$ for the Time-Warping method is less than $1\%$ FMC higher than the two most accurate methods, which is a small practical difference. As in the FM1$\leq30\%$ case, the Time-Warping method produces accuracy comparable to standard transfer learning baselines while modifying only a small fraction of the network parameters.

Figure~\ref{fig:fm100_scatter} shows the relationship between the observed and predicted FM100 values. The figures use the FM100 predictions from a single RNN selected from the 100 trained models, specifically the model whose RMSE was closest to the median RMSE across all replications. We display a single representative model to illustrate the behavior that can be expected from an individual model prediction. The scatterplot shows that the RNN, as in the FM1 case, tends to overestimate FM100 for drier fuels and underestimate FM100 for wetter fuels. Overall, the plot shows relatively strong agreement between the RNN predictions and the observations.

\begin{figure}[htbp]
\centering
\includegraphics[width=14 cm]{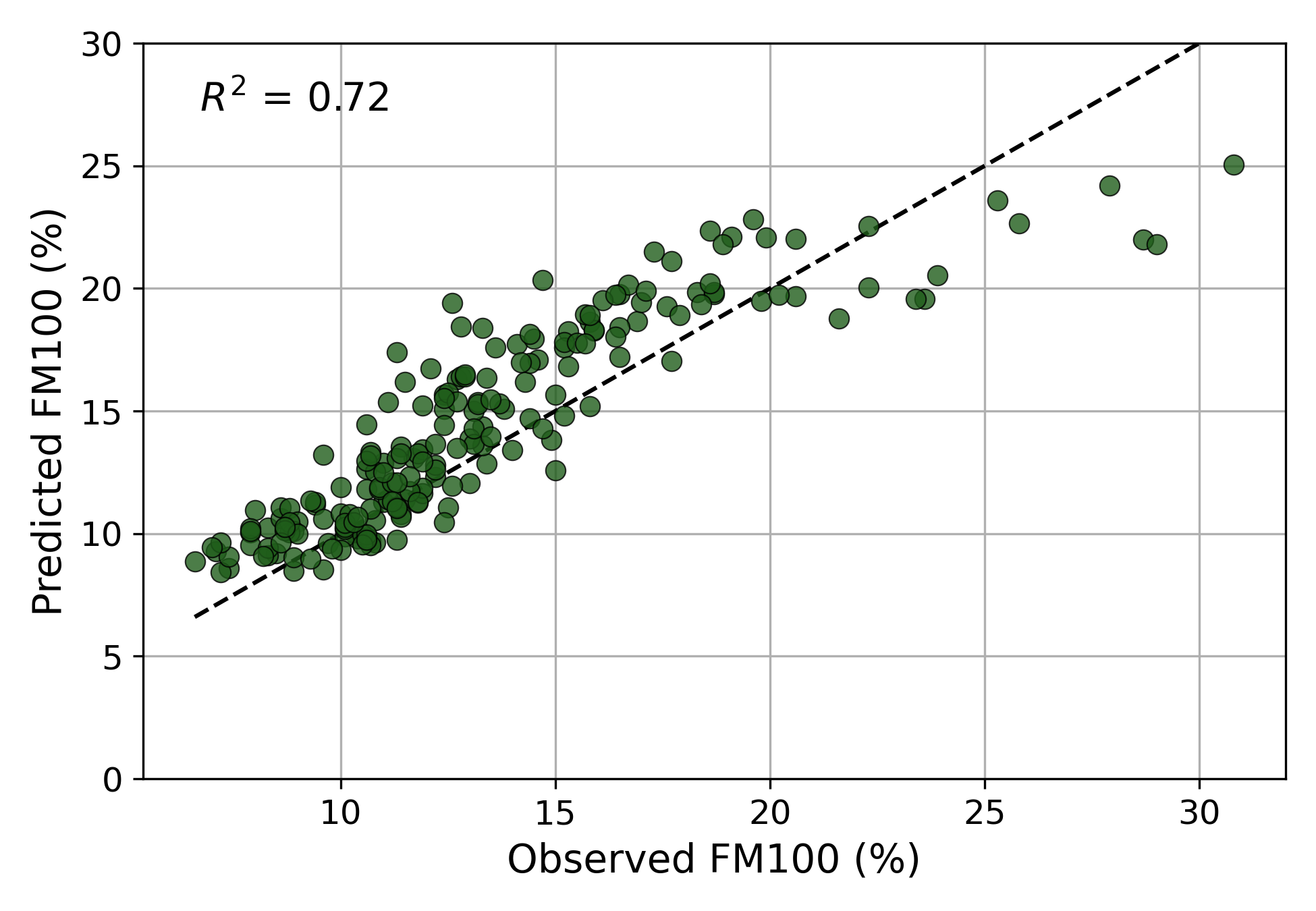}
\caption[Observed vs Predicted FM100 for Time-Warped RNN
]{Observed vs Predicted FM100 for Time-Warped RNN. The plot includes all FM100 observations in the test set~(n=206). The plotted RNN model is the set of weights that corresponds to the median RMSE on the test set out of 100 replications. }
\label{fig:fm100_scatter} 
\end{figure} 

\subsection{FM1000 Transfer Learning Results}

The testing period for FM1000 was from August 1997 through December 1997. A total of 209 observations were collected over this period, and no filtering was done for wet fuels. Table~\ref{tab:fm1000_all} shows the accuracy metrics on the test set for the different transfer learning approaches, and Figure~\ref{fig:fm1000_rmse} shows a visual comparison of the RMSE statistics. Both tables also include the corresponding metrics for the Nelson model for comparison. Both the RMSE and the bias are interpretable in units of percent FMC. A positive bias indicates underprediction, while a negative bias indicates overprediction.

\begin{table}[htbp]
\centering
\caption[FM1000 Accuracy Metrics - Transfer Learning Comparison.]{Accuracy metrics for FM1000 RNN predictions compared to field observations in Oklahoma for different transfer learning methods~(n=209). Metrics are reported as mean $\pm$ standard deviation across 100 replications.}
\label{tab:fm1000_all}
\begin{tabular}{lccc}
\hline
Method & $R^2$ & Bias (\%) & RMSE (\%) \\
\hline
No Transfer & $0.57 \pm 0.12$ & $-0.50 \pm 0.29$ & $2.16 \pm 0.31$ \\
Full Fine-Tuning & $0.59 \pm 0.05$ & $-0.92 \pm 0.21$ & $2.13 \pm 0.14$ \\
Freeze Recurrent Layer & $0.50 \pm 0.09$ & $-0.42 \pm 0.55$ & $2.33\pm 0.20$ \\
Freeze Dense/Output Layers & $0.50 \pm 0.09$ & $0.65 \pm 0.70$ & $2.33 \pm 0.21$ \\
Time-Warping & $0.54 \pm 0.18$ & $-0.81 \pm 0.71$ & $2.22 \pm 0.43$ \\
Time-Warping + Fine-Tuning & $0.65 \pm 0.10$ & $-0.75 \pm 0.36$ & $1.94 \pm 0.27$ \\
\hline\hline
Nelson Model* & $0.56$ & $-1.0$ & - \\
\hline
\end{tabular}
\end{table}

\begin{figure}[htbp]
\centering
\includegraphics[width=14 cm]{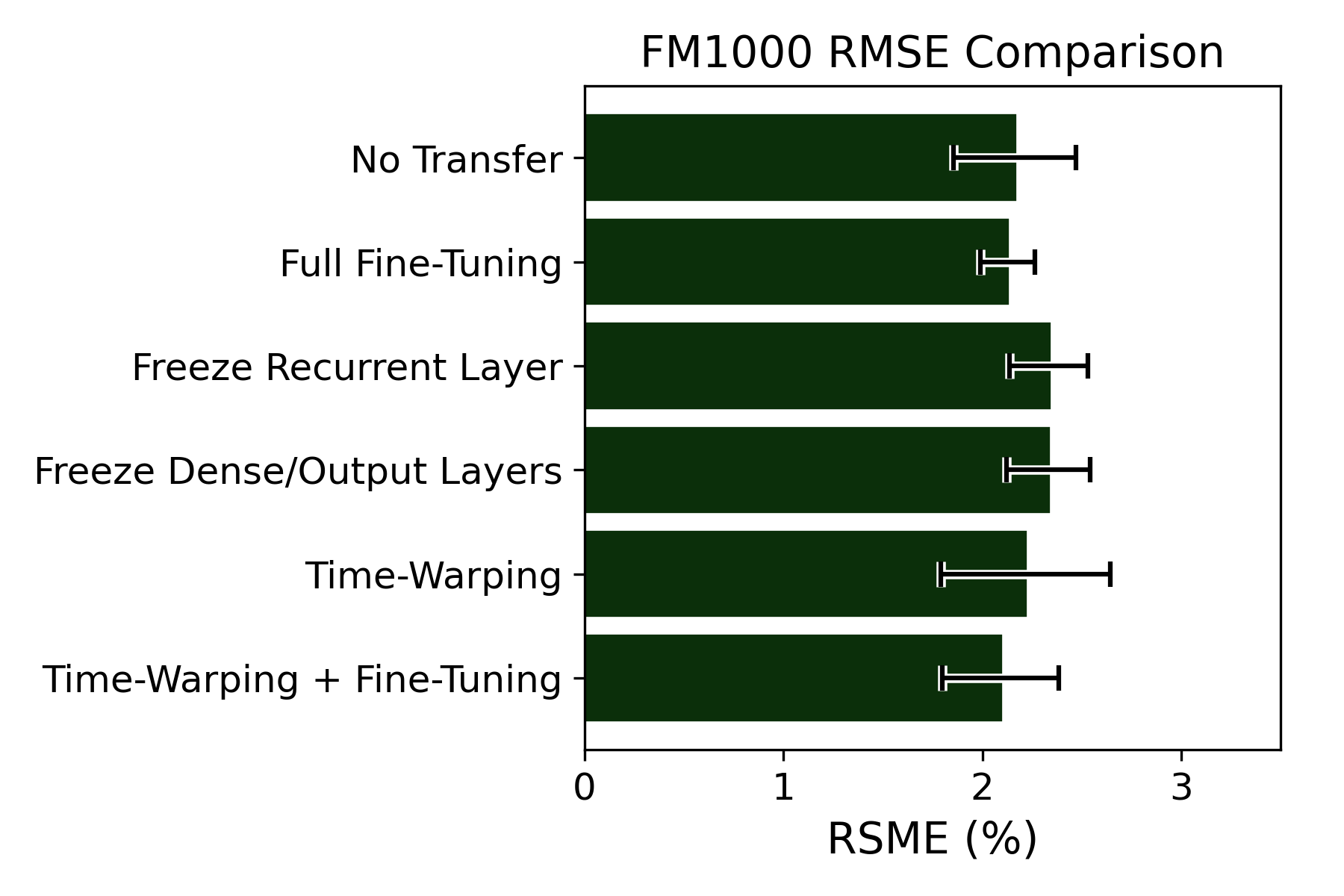}
\caption[FM1000 RMSE by Transfer Learning Method.
]{FM1000 RMSE comparison for different transfer learning methods. Bars represent the mean RMSE when predicting FM1000 on the test set over 100 replications, and brackets show $\pm$ 1 standard deviation of the RMSE. }
\label{fig:fm1000_rmse} 
\end{figure}

The general prediction accuracy was similar for FM1000 and FM100, and the accuracy was better than for FM1. While the estimated $R^2$ values are worse than for the FM100 case, the RMSE values are consistent. The Time-Warping method had an estimated RMSE of $2.22\pm0.43$, which is within the uncertainty bounds for the most accurate transfer learning method, which was Time-Warping plus Fine-Tuning. The estimated bias of $-0.81\pm0.71$ provides evidence that the time-warped models are generally overpredicting FM1000. The $R^2$ of $0.54\pm0.18$ is consistent with the $R^2$ from the Nelson model, again noting the differences in the time period of evaluation.

Figure~\ref{fig:fm1000_scatter} shows the relationship between the observed and predicted FM1000 values. The figures use the FM1000 predictions from a single RNN selected from the 100 trained models, specifically the model whose RMSE was closest to the median RMSE across all replications. We display a single representative model to illustrate the behavior that can be expected from an individual model prediction. The scatterplot shows that the RNN, as in the FM1 case, tends to overestimate FM1000 for drier fuels and underestimate FM1000 for wetter fuels, though the observations of FM1000 greater than $15\%$ are very sparse. Overall, the plot shows moderately strong agreement between the RNN predictions and the observations.

\begin{figure}[htbp]
\centering
\includegraphics[width=14 cm]{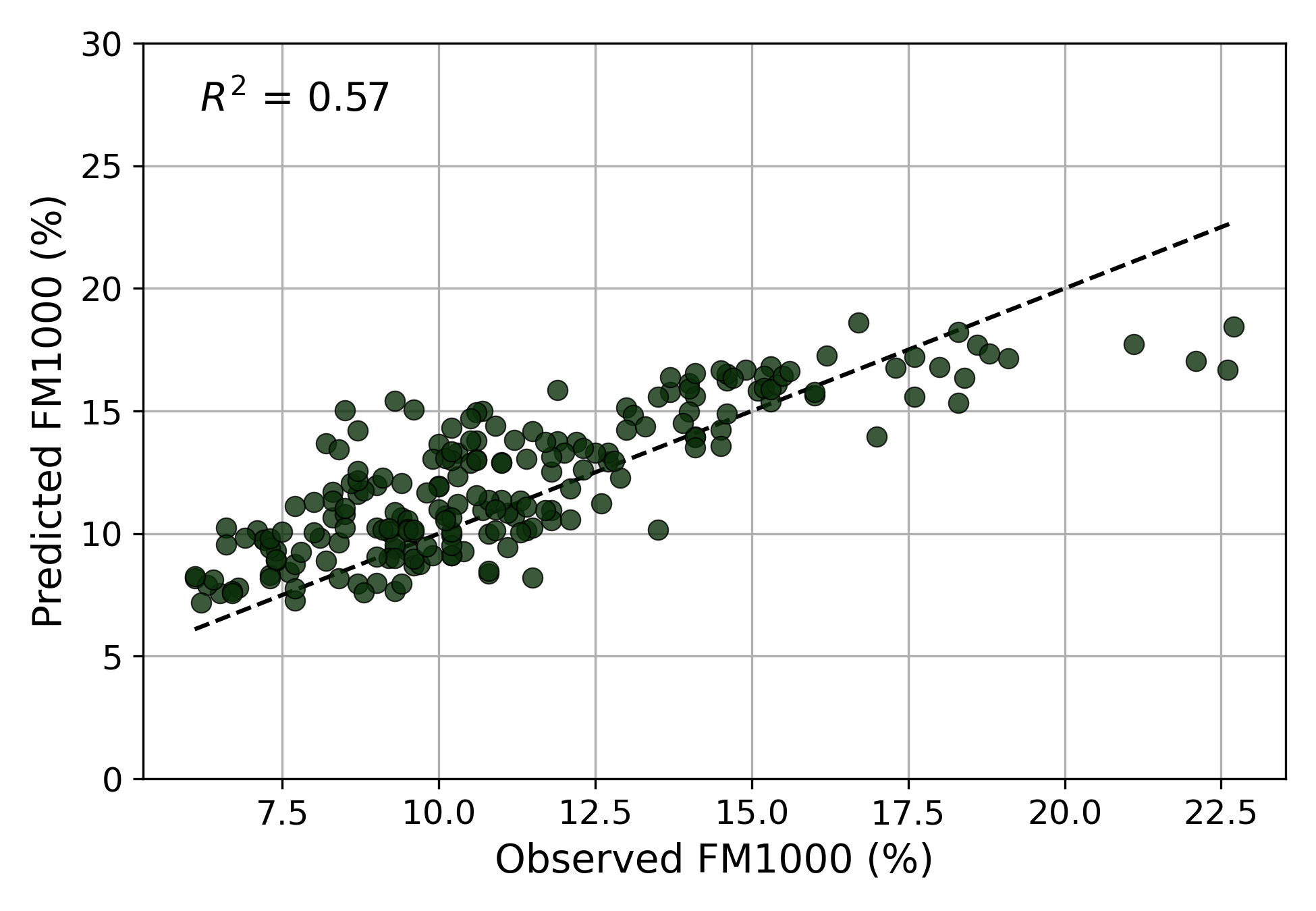}
\caption[Observed vs Predicted FM1000 for Time-Warped RNN
]{Observed vs Predicted FM1000 for Time-Warped RNN. The plot includes all FM1000 observations in the test set~(n=209). The plotted RNN model is the set of weights that corresponds to the median RMSE on the test set out of 100 replications. }
\label{fig:fm1000_scatter} 
\end{figure} 

\clearpage

\subsection{Time-Warping Effect on Learned Dynamics Results}

This section analyzes the learned dynamics of the RNN after applying the Time-Warping transfer learning method, with no fine-tuning. We analyze 100 replications of fitting the time-warping parameters to the data, while holding all other model parameters constant. 

The Time-Warping method led to accurate predictions on the test set for FM1, FM100, and FM1000. In this section, we wish to provide statistical evidence that the temporal dynamics have been changed. We analyze the temporal dependencies of the RNN predictions during the test period for FM1, FM100, and FM1000 using several techniques. The FM1 predictions show much faster change over time than FM100 or FM1000. The two larger fuel classes have similar temporal dependencies. 

Figure~\ref{fig:tscales} plots time series of predictions for FM1, FM100, and FM1000 over 72 hours during the first week of the testing period. The predictions are taken from the replication whose RMSE is closest to the median RMSE across the 100 runs. This choice avoids averaging across replications, which can artificially smooth the predictions and obscure representative model behavior. The FM1 predictions are much more variable throughout the day as the predicted FM1 responds quickly to environmental changes. The FM100 and FM1000 predictions exhibit less variability and slower change over the diurnal cycle. The dynamics for the FM100 and FM1000 are visually similar, though the average FM100 value is wetter than the average FM1000 value over the time period. 

\begin{figure}[htbp]
\centering
\includegraphics[width=14 cm]{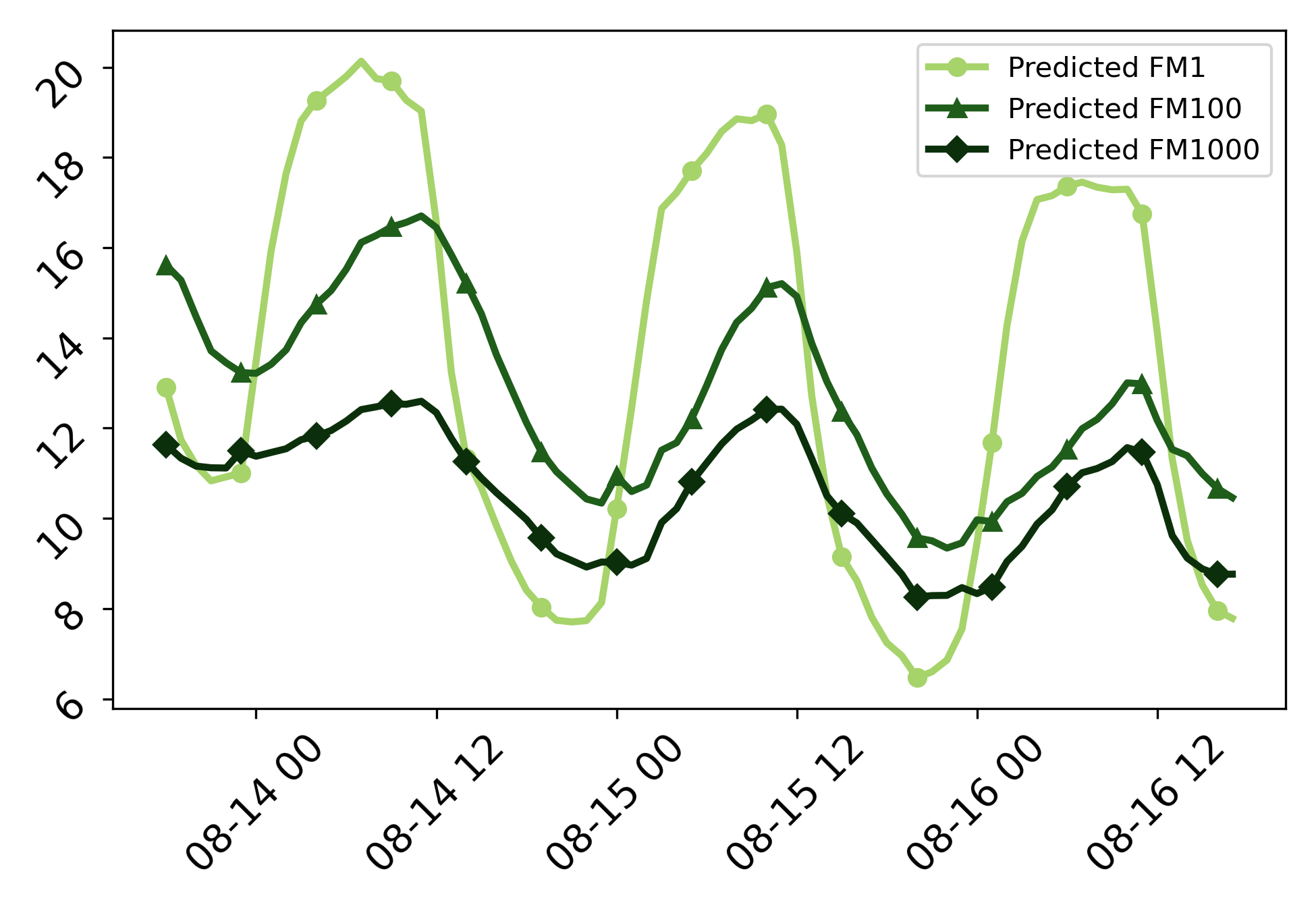}
\caption[Example Time Series of Predictions for FM1, FM100, and FM1000 from the Time-Warped RNN
]{Example Time Series of Predictions for FM1, FM100, and FM1000 from the Time-Warped RNN, with no fine-tuning. The time period plotting is 72 hours during the first week of the test period in August 1997. The predictions are from the models that had the median RMSE across the 100 replications. }
\label{fig:tscales} 
\end{figure} 

Tables~\ref{tab:fm1_timewarp_params}, \ref{tab:fm100_timewarp_params}, and \ref{tab:fm1000_timewarp_params} summarize the best-fit time-warping parameters across 100 replications. For each replication, the two parameters were used to shift the forget and input gate biases, then the FMC was predicted on the training set, and then we selected the parameter combination that led to the best accuracy on the training set. The tables summarize the mean, standard deviation, median, minimum, and maximum values across the replications. The values for the forget gate time-warping parameters show relative convergence for the FM1 and FM100 cases. For FM1000, the parameters are more variable but still show a relatively consistent pattern across the replications for the forget gate bias time-warping parameter. The values for the input gate time-warping parameters are more variable and don't exhibit clear signs of convergence. 

\begin{table}[htbp]
\caption[FM1 Time-Warp Parameters Summary.]{Summary statistics for the time-warp parameters across 100 replications for FM1 with no fine-tuning.}
\label{tab:fm1_timewarp_params}
\centering
\begin{tabular}{lccccc}
\hline
Time-Warping Parameter & Mean & Std & Median & Low & High \\
\hline
$b_f$ shift & $-1.31$ & $0.50$ & $-1.25$ & $-3.33$ & $-0.42$ \\
$b_i$ shift & $1.82$  & $2.19$ & $1.67$  & $-1.67$ & $5.00$ \\
\hline
\end{tabular}
\end{table}

For the FM1 case, the time-warping parameter for the forget gate was $-1.31\pm0.50$ across the 100 replications, and the parameter was negative for all replications. This shows signs of convergence across 100 replications, and it matches the theoretical expectation for the time-lag system. Decreasing the forget gate bias reduces the retention of the previous state, effectively reducing the memory of the system. The time-warping parameter for the input gate was $1.82\pm2.18$, with a median value of $1.67$ indicating that most of the 100 replications resulted in a positive input gate bias shift. The mean and median values are consistent with the theoretical expectation that the dependence on new information should increase when time-warping from an FM10 to an FM1 model. However, the standard deviation is large relative to the mean, and some of the replications resulted in negative values for the parameter. 

\begin{table}[htbp]
\caption[FM100 Time-Warp Parameters Summary.]{Summary statistics for the time-warp parameters across 100 replications for FM100 with no fine-tuning.}
\label{tab:fm100_timewarp_params}
\centering
\begin{tabular}{lccccc}
\hline
Time-Warping Parameter & Mean & Std & Median & Low & High \\
\hline
$b_f$ shift & $1.89$ & $0.61$ & $2.08$ & $0.42$ & $3.33$ \\
$b_i$ shift & $-1.01$ & $2.06$ & $-1.25$ & $-4.17$ & $4.17$ \\
\hline
\end{tabular}
\end{table}

For the FM100 case, the time-warping parameter for the forget gate was $1.89\pm0.61$ across the 100 replications, and the parameter was positive for all replications. Like the FM1 case, this shows signs of convergence across 100 replications, and it matches the theoretical expectation for the time-lag system. Increasing the forget gate bias increases the retention of the previous state, effectively increasing the memory of the system. The time-warping parameter for the input gate bias had a negative mean of $-1.01$ and a negative median of $-1.25$, which are consistent with the theoretical expectation that the dependence on new information should decrease when time-warping from an FM10 to an FM10 model. But again, the standard deviation was large relative to the mean and median, and there were replications where the parameter was positive. 

\begin{table}[htbp]
\caption[FM1000 Time-Warp Parameters Summary.]{Summary statistics for the time-warp parameters across 100 replications for FM1000 with no fine-tuning..}
\label{tab:fm1000_timewarp_params}
\centering
\begin{tabular}{lccccc}
\hline
Time-Warping Parameter & Mean & Std & Median & Low & High \\
\hline
$b_f$ shift & $1.15$ & $1.46$ & $1.25$ & $-4.17$ & $3.75$ \\
$b_i$ shift & $-1.62$ & $3.81$ & $-4.17$ & $-5.00$ & $5.00$ \\
\hline
\end{tabular}
\end{table}

The same general patterns were seen for the FM1000 case, but with greater variability and less clear signs of convergence in the parameters. The mean time-warping parameters were $1.15$ for the forget gate and $-1.62$ for the input gate, which match theoretical expectations of increasing memory retention and decreasing the dependence on new information. But the parameters exhibit high variability relative to the mean and median, and in some replications the parameters have the opposite signs of the theoretical expectations. 

Generally, the signs of the time-warping parameters match the theoretical expectations. The FM1 case resulted in decreasing the forget gate bias and increasing the input gate bias. The FM100 and FM1000 cases showed the opposite. This aligns with the idea that the FM1 model was time-warped to have a shorter memory, and the FM100 and FM1000 models were time-warped to have longer memories. But for the FM1000 case, the forget gate time-warping parameter was highly variable. The input gate time-warping parameter was highly variable in each case. 

Next, we analyze the ACF and PACF of the predictions in the test set. For these plots, we use a single model instead of aggregating the 100 replications, since averaging predictions across replications can smooth the autocorrelation structure and obscure the temporal dependencies present in an individual model. We therefore select the replication with the median test set RMSE for each of the FM1, FM100, and FM1000 cases. The plotted values represent the hourly predictions from the RNNs themselves, not the observed values. The observed FMC values are too sparse in time to analyze the ACF plots at short time scales. Figure~\ref{fig:acf_all} shows the three ACF plots from lags zero to 50, and Figure~\ref{fig:pacf_all} shows the corresponding PACF plots from lags zero to 30. 

\begin{figure}[htbp]
\centering
\includegraphics[width=\textwidth,height=0.28\textheight,keepaspectratio]{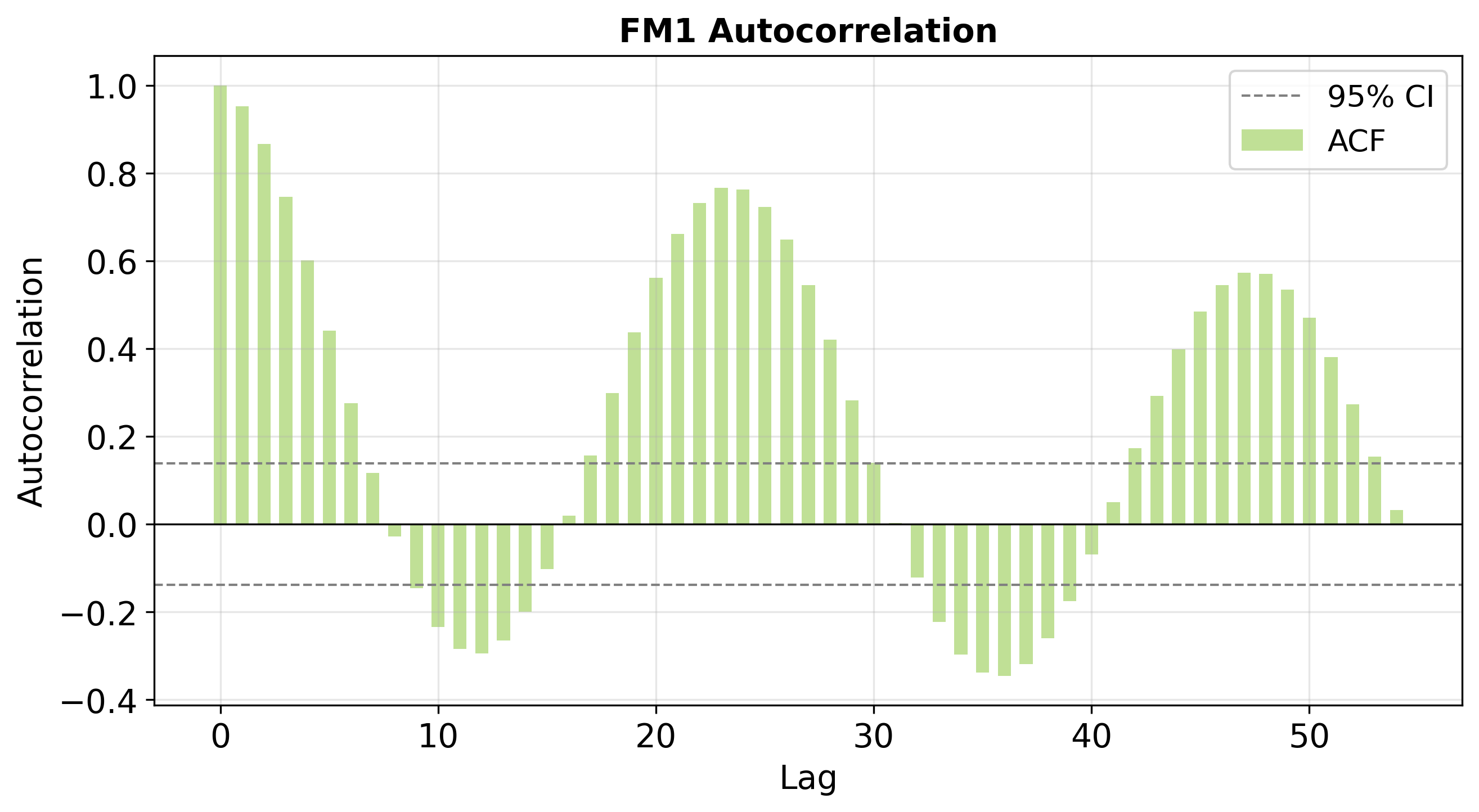}
\vspace{0.6em}
\includegraphics[width=\textwidth,height=0.28\textheight,keepaspectratio]{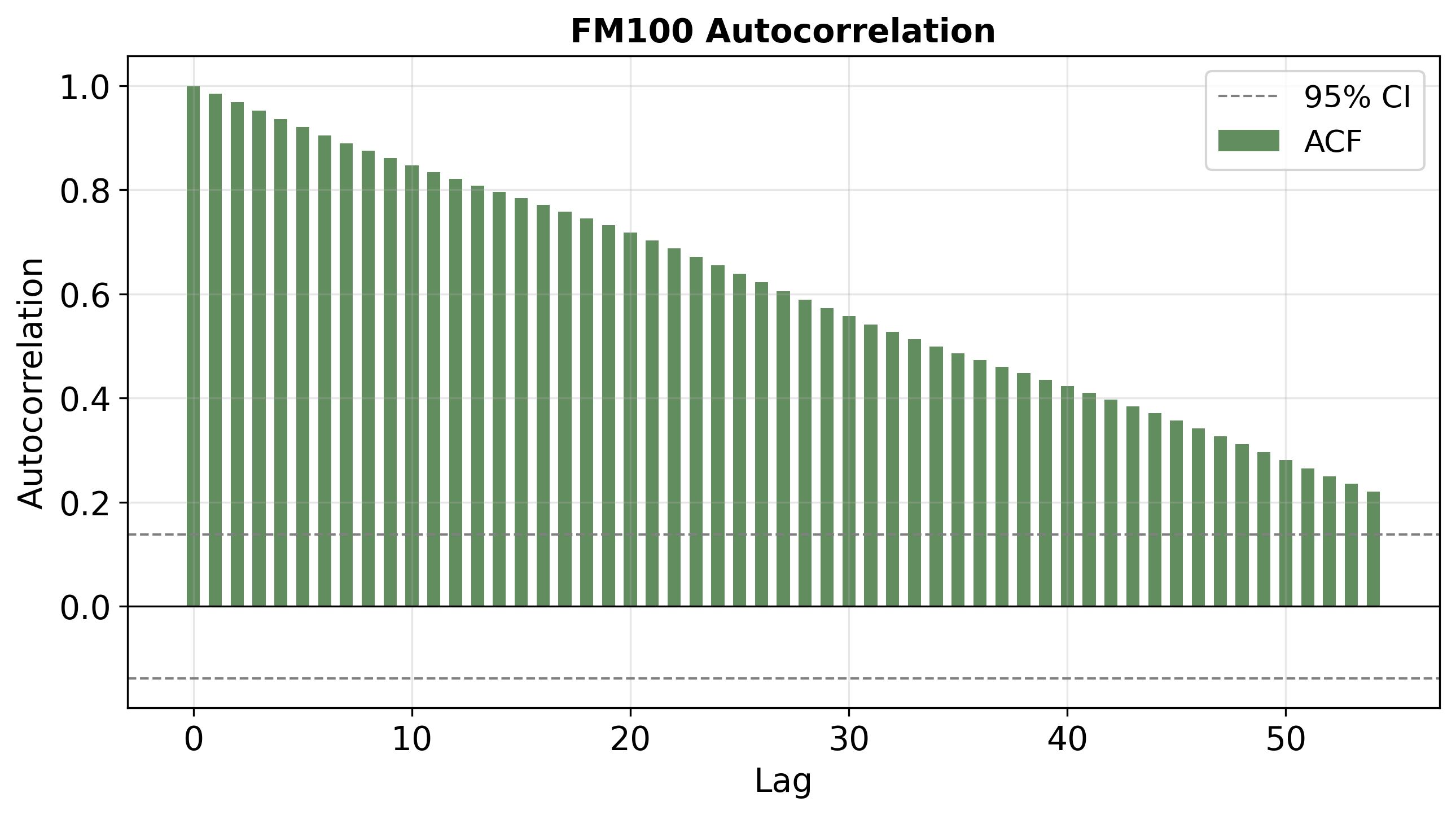}
\vspace{0.6em}
\includegraphics[width=\textwidth,height=0.28\textheight,keepaspectratio]{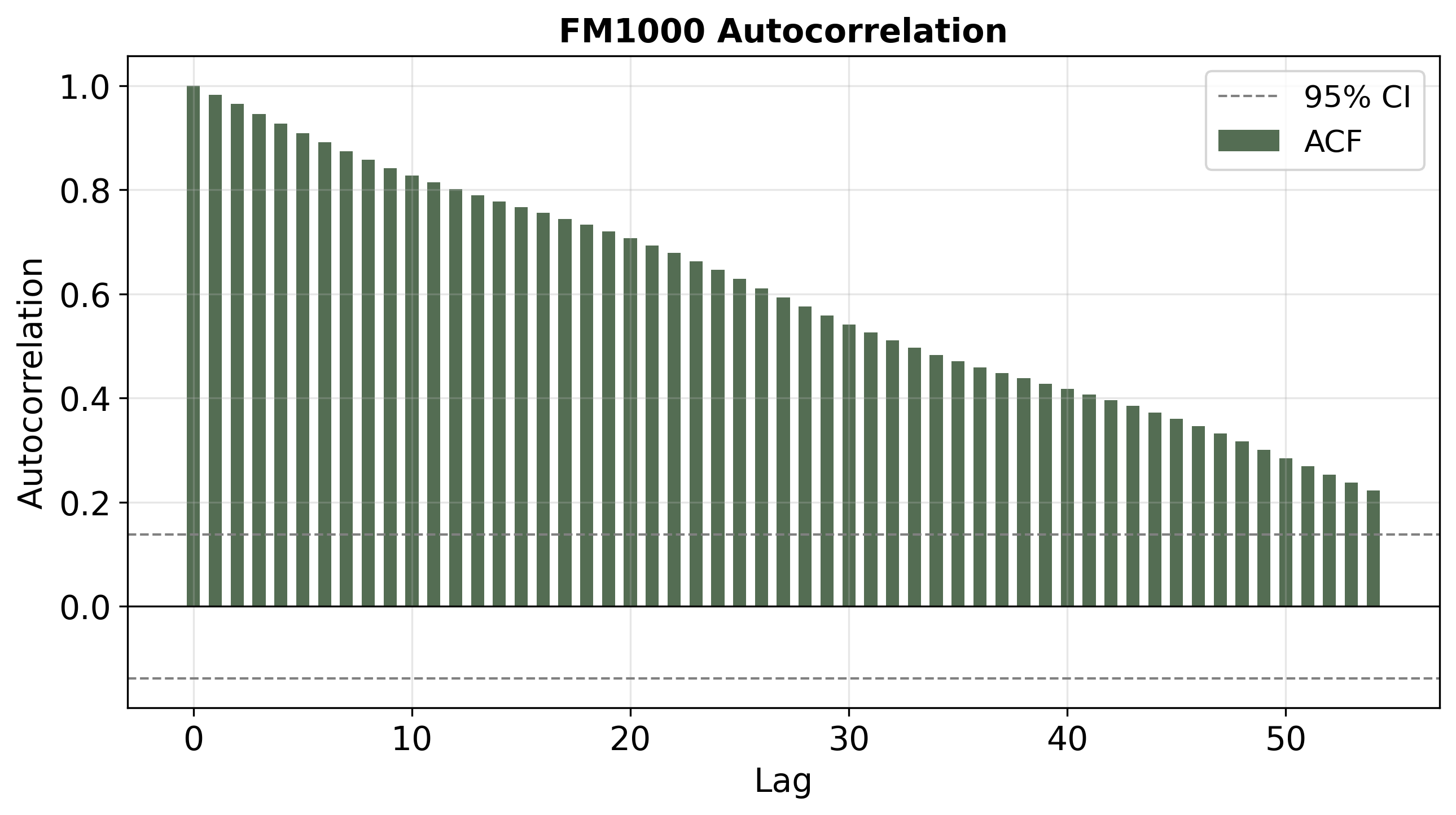}
\caption[ACF of FM1, FM100, and FM1000]{Autocorrelation functions for FM1 (top), FM100 (middle), and FM1000 (bottom) from predictions on the test set. The plots represent the ACF for the single models that had the median RMSE on the test set. The ACF for FM1 decays relatively quickly and has a visible diurnal periodicity, while FM100 and FM1000 show much slower decay consistent with stronger persistence in the system.}
\label{fig:acf_all}
\end{figure}

\begin{figure}[htbp]
\centering
\includegraphics[width=\textwidth,height=0.28\textheight,keepaspectratio]{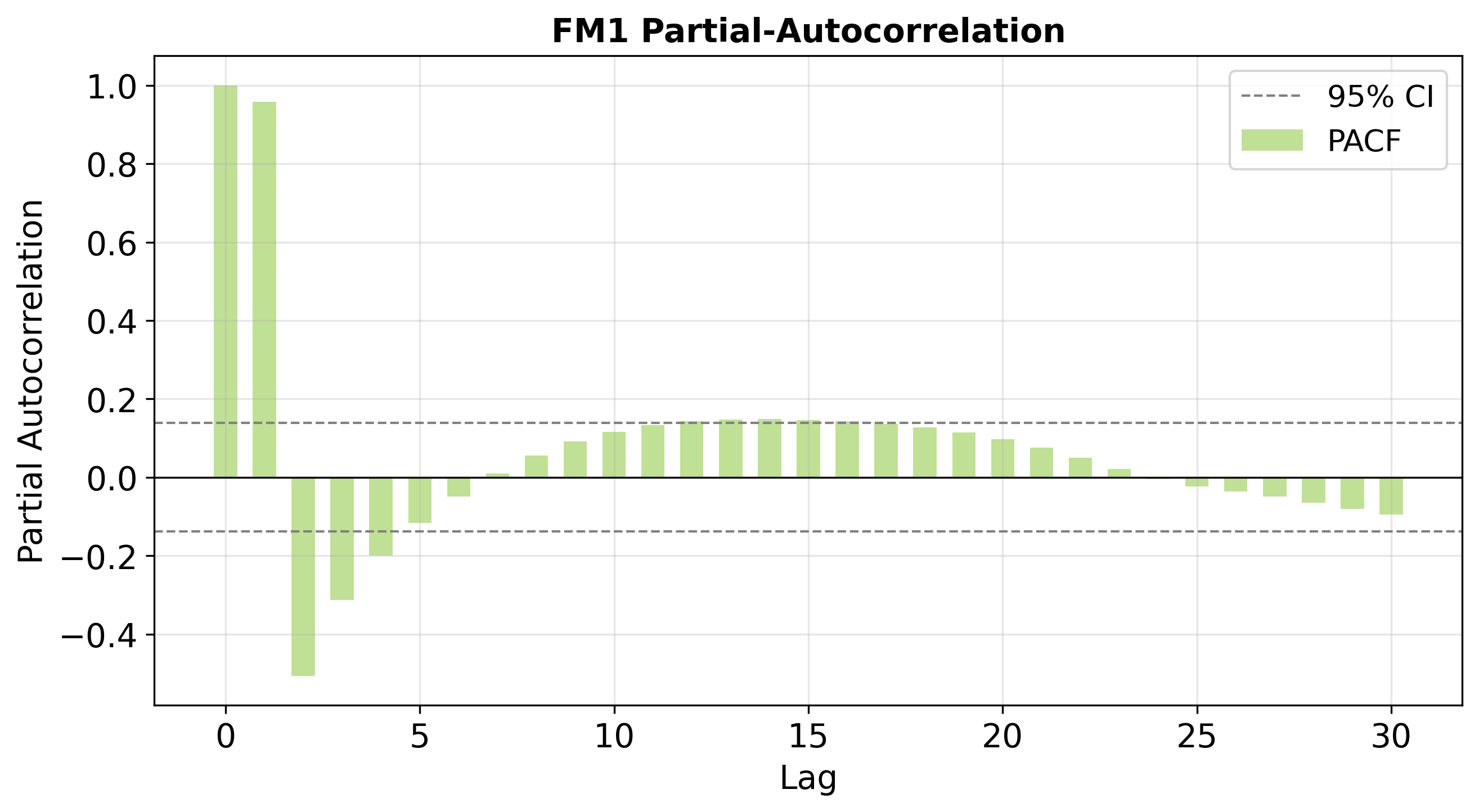}
\vspace{0.6em}
\includegraphics[width=\textwidth,height=0.28\textheight,keepaspectratio]{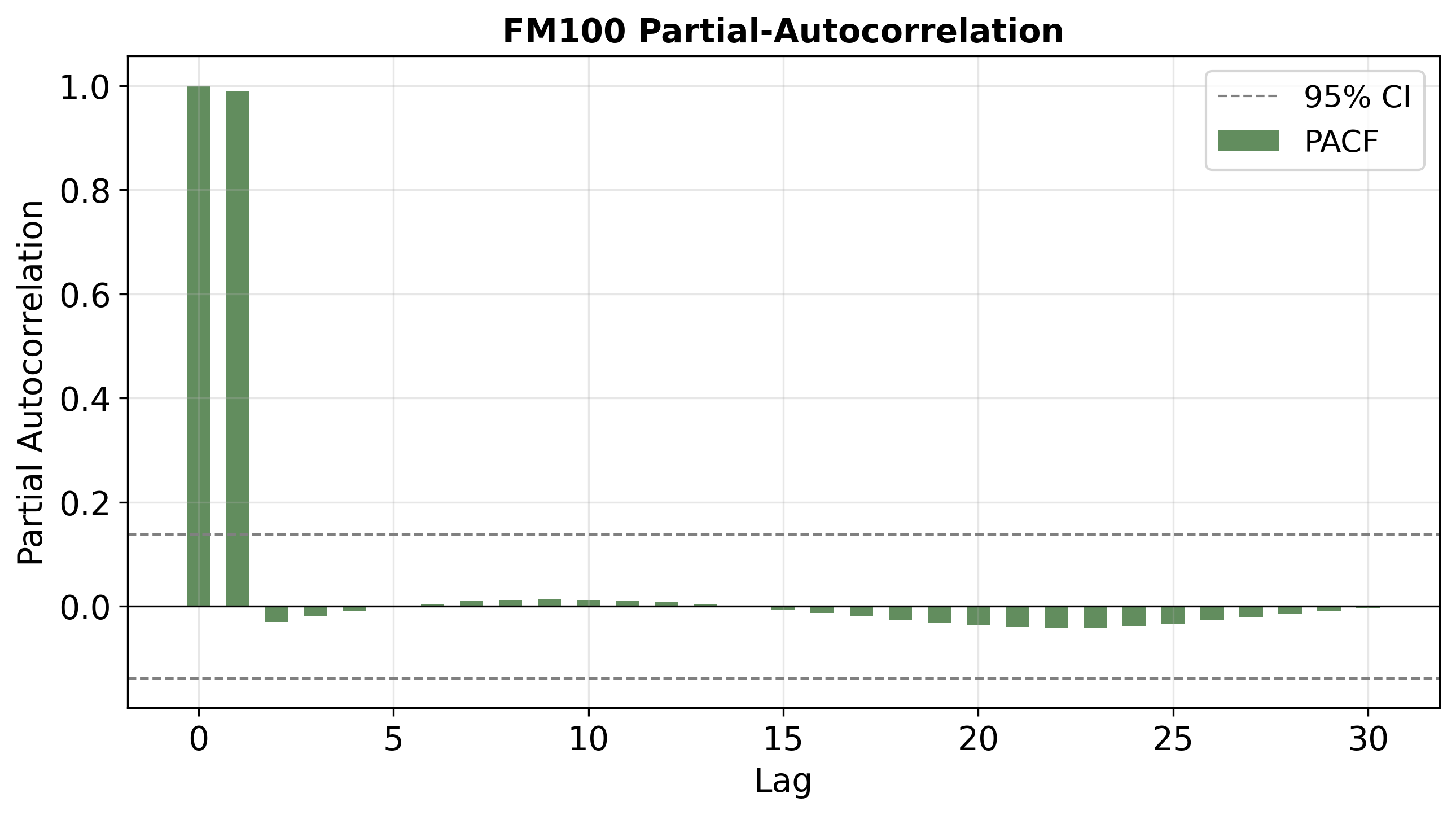}
\vspace{0.6em}
\includegraphics[width=\textwidth,height=0.28\textheight,keepaspectratio]{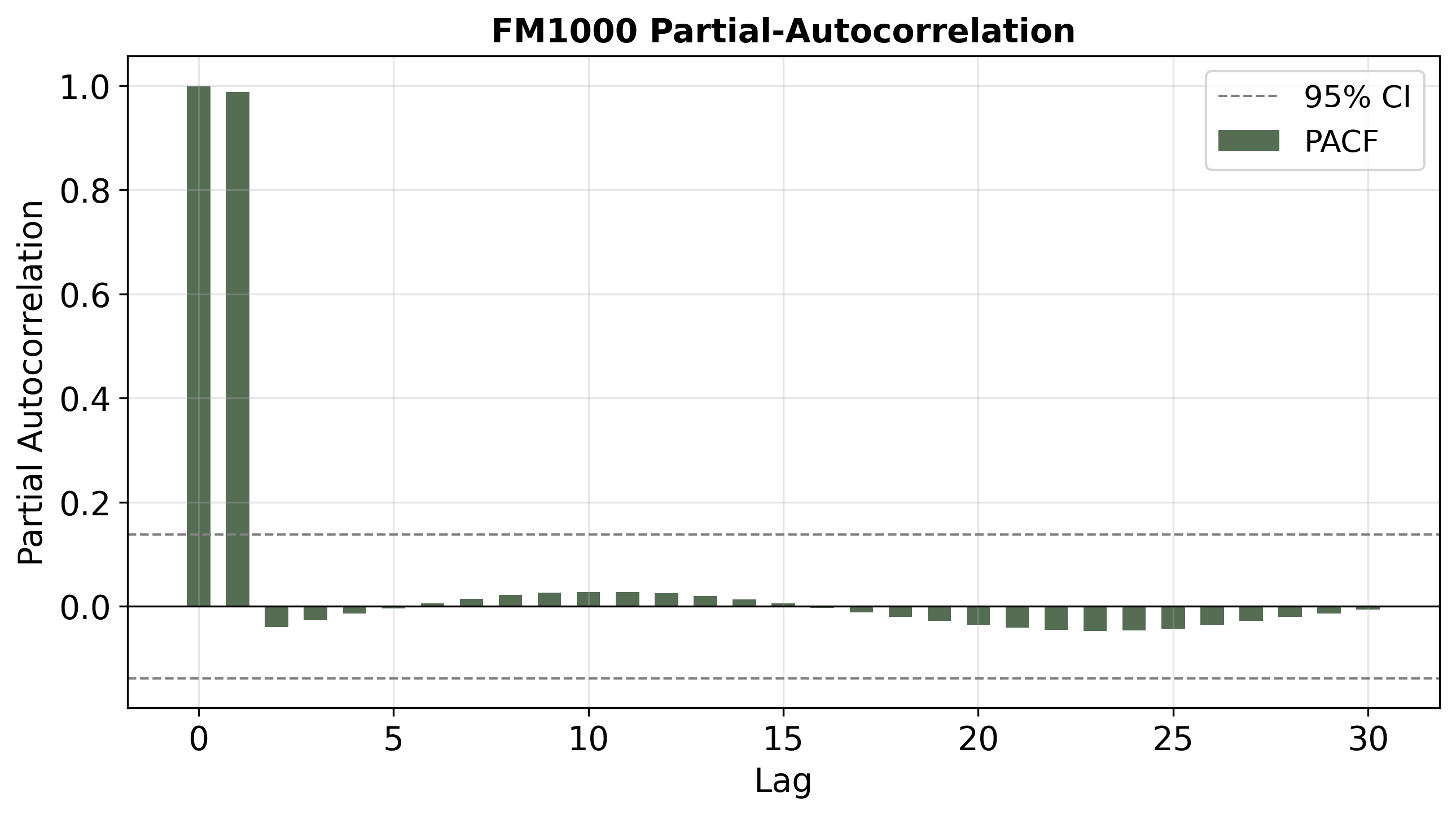}
\caption[PACF of FM1, FM100, and FM1000]{Partial autocorrelation functions for FM1 (top), FM100 (middle), and FM1000 (bottom) from predictions on the test set. The plots represent the ACF for the single models that had the median RMSE on the test set. FM100 and FM1000 show a dominant lag-1 effect consistent with strong persistence, whereas FM1 shows additional short-lag structure and weaker persistence beyond the first lag.}
\label{fig:pacf_all}
\end{figure}

The ACF for the FM1 predictions has positive autocorrelation for short time lags. It shows a clear cyclical pattern that is indicative of diurnal variability in the weather. The FM1 predictions are negatively correlated every 12 hours and positively correlated every 24 hours. The ACFs for the FM100 and FM1000 predictions are very similar and resemble the theoretical ACF of an AR(1) process. The autocorrelation is strongly positive at short time lags and gradually decays toward zero as the lag increases. There are subtle signs of the diurnal variability seen in the FM1 case, but the dependence on the previous state of the system dominates the temporal structure. By contrast, the FM1 case has less dependence on the previous state, since the ACF drops below the significance threshold after about 7 hours. The strong diurnal pattern in the FM1 ACF appears to be driven by diurnal variability in the weather, since there is no physical reason to expect FMC itself to become negatively correlated over time in the absence of changing environmental conditions.

The PACF for the three predictions shows a strong positive correlation at lag 1, which is expected due to the physical persistence of FMC. For the FM1 predictions, the PACF values at lags 2, 3, and 4 are negative and statistically significant. This negative correlation appears to be driven by external forcing from short-term weather variability. Rain events that sharply increase FM1 contribute to negative autocorrelation at short time lags, while increases in wind and solar radiation dry out fuels and produce similar short-term responses. Thus, both the wetting and drying processes driven by weather can contribute to negative autocorrelation at short time lags. The PACFs for FM100 and FM1000 have a strongly positive value at lag 1 and then immediately drop below the significance threshold. This closely matches what would be expected for the PACF of an AR(1) process. While the cyclical patterns in the three PACF plots are mostly below the significance threshold, they still show structure that would not be expected in a white-noise process with no temporal dependence. The ACF values for a white-noise process are expected to be positive or negative with random frequency. The three PACF plots show stable patterns of oscillatory behavior, so we can conclude that there is evidence for cyclical patterns for the three fuel class predictions. 

The ACFs and PACFs provide strong statistical evidence that the FM1 predictions are more strongly dependent on external forcing from weather. Correspondingly, the FM100 and FM1000 predictions are more strongly dependent on memory persistence.

\clearpage

    \chapter{\uppercase{Conclusions}}\label{chapter:conclusions}
\section{Summary}

A method for modifying the learned temporal dynamics of an RNN was proposed in this thesis. The LSTM is a common RNN architecture, but there were no existing methods to modify its temporal dynamics. Shifting the forget and input gate biases to increase or decrease the associated activation can time-warp an LSTM. We proved that a time-lag system can be approximated to any level of accuracy with an LSTM. Then, we proved that if the system is time-warped, increasing or decreasing the gate bias parameters can approximate the time-warped system to any level of accuracy. The theoretical properties of the approximation error were demonstrated numerically using simulations from a time-lag system. 

The Time-Warping method for transfer learning demonstrated a strong empirical performance for predicting the FMC for fuel classes with sparse observations. First, the results from zero-shot transfer learning for FM10 establish that the pretrained RNN could reliably transfer knowledge from the source learning task with FM10 sensors and weather inputs from numerical weather prediction (NWP) to the target task with gravimetric observations and ground-based weather observation. The accuracy of the zero-shot FM10 predictions was within the estimated uncertainty bounds of the pretrained model on the source task, and was strong relative to the state-of-the-art Nelson model. 

The empirical accuracy results for FM1, FM100, and FM1000 show that the Time-Warping method resulted in comparable accuracy to established transfer learning methods, despite only fitting two parameters to data that shifted 128 out of over 21,000 trainable parameters in the RNN. Although the method of freezing the recurrent layer produced similar accuracy metrics for larger fuels, for FM1 freezing the recurrent layer was substantially less accurate than all other methods. This supports the idea that the temporal dynamics within the LSTM recurrent layer need to be modified for the different fuel classes. 

The values of the time-warping parameters were in some cases highly variable, particularly for the input gate time-warping shift. However, the directionality of the time-warping parameters matched theoretical expectations for all fuel classes in terms of the mean and the median. Further, the forget gate bias time-warping parameter demonstrated consistency across replications. Examination of the ACF and PACF functions of the empirical predictions provides additional evidence that the Time-Warping method successfully modified the temporal dynamics of the pretrained RNN.

The Oklahoma field study had under two years of observations from a single spatial location. This is a highly sparse dataset, considering it is being used to represent the FMC of four different fuel classes for the entire CONUS. As a consequence, any model that is fit to this data risks overfitting. The set of learned parameters could be overfit to one spatial location, and the parameters could also be overfit to one particular time period of weather patterns. This issue applies to both ML models that are trained on the data and to physics-based models that have parameters that are calibrated to the data. With the zero-shot transfer learning for FM10, the full time period from March of 1996 through 1997 was used as a test set. For all other models, we split the data into training and testing sets and, in some situations, utilized a validation set. Splitting the data for proper statistical validation resulted in highly sparse data in time. The test set consisted of just four months, from August of 1997 through December of 1997. 

The sparsity of the observed data means that we focus primarily on the accuracy metrics for the zero-shot FM10 model and for the physics-based transfer learning model, where only the time-warping parameters were fit to data. The zero-shot FM10 model was trained with HRRR weather forecast inputs and with fuel sensor response data. Further, the RNN had never been exposed to data from the location of the Oklahoma field study. While the zero-shot FM10 model performed poorly for very wet fuels due to systematic biases in the fuel sensors, the accuracy metrics were at least as strong as the state-of-the-art Nelson model when focusing on FM10 less than or equal to $30\%$. The Nelson model had been specifically calibrated to observations from the Oklahoma field study, so it is advantaged relative to the RNN. This provides strong evidence that the RNN is learning a set of weights that generalize well. The zero-shot FM10 RNN was still accurate when deployed at a new location using different weather inputs and a different methodology for observing the response (sensors versus gravimetric measurements).

The Time-Warping method used observations from the Oklahoma field study to optimize the time-warping parameters to generate predictions for the other fuel classes. So, there is the possibility that this has been overfit to one spatial location and one particular year of weather. However, the amount of fine-tuning to the particular data set was very small compared to the overall model complexity. The pretrained RNN consisted of over 21,000 parameters that had been trained on data from a different time period and different spatial locations than the target learning task. The time-warping method with no fine-tuning modified just 128 of these parameters: 64 bias terms for the forget and input gates. Further, these 128 parameters were shifted uniformly. These particular parameters were selected for modification based on apriori theoretical considerations, not on post-hoc rationalization after seeing the data from the target task. 

All the other transfer learning methods utilized fine-tuning and fit model parameters to the sparse data set. Overfitting in both space and time remains an issue for the fine-tuned models. The RNN from the source learning task was trained with longitude, latitude, and elevation as static predictors. The FM10 model is therefore performing spatial interpolation using the learned patterns from the training data. The Oklahoma field study was one physical location. Fine-tuning allowed many model parameters to update, and there is no way to verify with data from one spatial location whether this came at the expense of the accuracy of the spatial interpolation. In contrast, for the Time-Warped RNN the input connections from the longitude, latitude, and elevation were unchanged, along with most of the other parameters which could implicitly encode knowledge about the spatial patterns. It is possible that modifying the forget and input gate biases deteriorated the ability of the RNN to perform accurate spatial interpolation, but it is more likely to generalize than the other transfer learning methods that fit many parameters to data from one location. The accuracy of the Time-Warping method was consistent with the transfer learning baselines, and there is reason to suspect that the accuracy will generalize to new locations and times better than the fine-tuning methods. 

\section{Contributions}

To the best of our knowledge, this thesis contributes in several ways to time series modeling with RNNs and specifically to the FMC literature. 

This thesis provides the first case of time-warping, as defined by modifying the learned dynamics of a model, being framed as a transfer learning problem. Connecting the traditional concepts of time-warping from dynamic systems to the modern study of transfer learning is an important step towards understanding how knowledge gained from ML models can be generalized across different temporal scales. 

This thesis builds on existing research about how RNNs, and LSTMs specifically, learn parameters that are related to a characteristic time scale. The mathematical propositions proved that there exist situations when the characteristic time scale of an LSTM can be time-warped with arbitrary accuracy by modifying the forget and input gate biases. This was validated numerically with the demonstration using Newton's Law of Cooling. The propositions were proven through restricting the true dynamical system to a class of linear first-order models, and further that the weights of the LSTM were initialized in a way that matches the dynamic system. The mathematical propositions therefore do not guarantee that this method will work for an arbitrary trained LSTM. Nonetheless, identifying a class of dynamical systems where the LSTM can provably be time-warped is a novel result. While the specific values of the time-warping parameters were highly variable for the FMC transfer learning study, the parameters directionally matched theoretical expectations. It is rare within the neural network literature to be able to develop an apriori expectation of what the value of a parameter should be post-training. This thesis develops an expectation of what should happen to specific bias parameters in an LSTM layer, then demonstrates the validity of this apriori expectation in an operationally useful applied setting. 

Next, this thesis provides the first method to post-hoc modify the learned dynamics of a pretrained RNN. We demonstrated empirically that the Time-Warping method performs well compared to established transfer learning methods despite modifying a small fraction of the trainable parameters of the network relative to the other methods. Therefore, there are theoretical reason to believe that the level of accuracy demonstrated by the Time-Warping transfer learning method will generalize to new spatial locations and times. And the established transfer learning methods, while in some instances modestly more accurate, are more likely to have been overfit to a single location over a short period of time, and thus there is reason to suspect that the level of accuracy may not generalize as well. 

For the field of FMC modeling, the Time-Warping method has potential to contribute to the literature and to operational systems. Modern ML methods are able to leverage the large quantities of available FM10 sensor observations to provide accurate forecasts for that particular fuel class, but currently there is limited success in ML applications to forecast the other fuel classes that are sparsely observed. Transfer learning has the capacity to greatly improve the accuracy of the FM1, FM100, and FM1000 forecasts. Further, the Time-Warping method provides the possibility of generating FMC forecasts for sticks of arbitrary sizes by producing zero-shot predictions with different time-warping parameters. The traditional physics-based models are capable of doing this, but the modern ML models of FMC have not allowed for that possibility until now. 

\section{Future Research}\label{sec:future}

Future research will use FMC sampling from different spatial locations to evaluate the predictions of FM1, FM100, and FM1000. The FEMS field samples were not utilized in this thesis due to their low temporal resolution: at most every two weeks versus twice daily for the Oklahoma field study. Since this thesis focused on time-warping, we concluded that the Oklahoma field study was preferable. Future research could utilize the FEMS field studies, which include observations from across CONUS. The rigorous spatiotemporal cross validation method used to evaluate the FM10 RNN could be applied to the time-warped RNN using the FEMS field studies. A time-warped version of the RNN could be compared to the state-of-the-art Nelson model at a broad set of physical locations. This type of validation accounts for spatial uncertainty, but it was not possible in this thesis due to the sparsity of the Oklahoma field study and lack of other controlled studies. 

The other controlled field experiments of observing FMC that we were able to locate were not in CONUS. The Weiss study was in Hawaii~\citep{Weise-2025-DFM}, and other studies have been conducted in Europe, Australia, and East Asia. The pretrained RNN used in this thesis utilized longitude and latitude as predictors to help the model to learn spatial patterns. The spatial domain was locations across CONUS. Combining the longitude and latitude from CONUS with coordinates from distant locations into the same ML training framework raises practical issues related to the curvature of the Earth. One possibility is to retrain the RNN without longitude and latitude. In the paper on the FM10 RNN, the model was slightly less accurate when trained without longitude and latitude as predictors, but it was still relatively accurate. Removing the spatial coordinates could allow for utilizing data from around the world~\citep{Hirschi-2026-RNN}.

The Time-Warping method could be applied to other physical systems in future research. This thesis examines the static and homogeneous time-warping case, meaning that the forget and input gate biases are shifted by a scalar value for all times. A different temporal transfer learning task may necessitate using time-varying or heterogeneous time-warping parameters. That is, the forget and input gate bias terms could be shifted with different values for different LSTM units. And the bias parameters could be shifted by different amounts at different times. This would modify the learned temporal dynamics in a much more complicated way, and would require extensive evaluation to find the right time-warping functions. This thesis provides a theoretical foundation for these more complicated methods, which could be useful for modeling different types of dynamic systems.

\renewcommand\bibname{REFERENCES}
\singlespacing

\nocite{*}
\bibliographystyle{abbrvnat}
\bibliography{references.bib}

\doublespacing
    \ucdappendix
\chapter{\uppercase{Abbreviations and Notation} \label{appendix-notation}}


\begin{description}
\item[\textbf{ML}] Machine Learning
\item[\textbf{RNN}] Recurrent Neural Network
\item[\textbf{LSTM}] Long Short-Term Memory
\item[\textbf{ODE}] Ordinary Differential Equation
\item[\textbf{CNN}] Convolutional Neural Network
\item[\textbf{NLP}] Natural Language Processing
\item[\textbf{RMSE}] Root Mean Squared Error
\item[\textbf{FMC}] Fuel Moisture Content
\item[\textbf{FM1}] 1-hr Fuel Moisture Content
\item[\textbf{FM10}] 10-hr Fuel Moisture Content
\item[\textbf{FM100}] 100-hr Fuel Moisture Content
\item[\textbf{FM1000}] 1000-hr Fuel Moisture Content
\item[\textbf{RAWS}] Remote Automated Weather Station
\item[\textbf{FEMS}] Fire Environment Mapping System
\item[\textbf{NWP}] Numerical Weather Prediction
\item[\textbf{HRRR}] High-Resolution Rapid Refresh
\item[\textbf{CONUS}] Continental United States
\item[\textbf{NFDRS}] National Fire Danger Rating System
\item[\textbf{ACF}] Autocorrelation Function
\item[\textbf{PACF}] Partial-Autocorrelation Function
\item[\textbf{AR($p$)}] Autoregressive Process of Order $p$
\end{description}



\chapter{\uppercase{Additional appendix} \label{appendix-thm}}

\section{Extra Tables and Figures}\label{app:extra}

In this section, we include additional figures that did not fit in-line in the main text of the thesis. 

\begin{figure}[htbp]
    \centering
    \includegraphics[width=\textwidth, height=0.8\textheight, keepaspectratio]{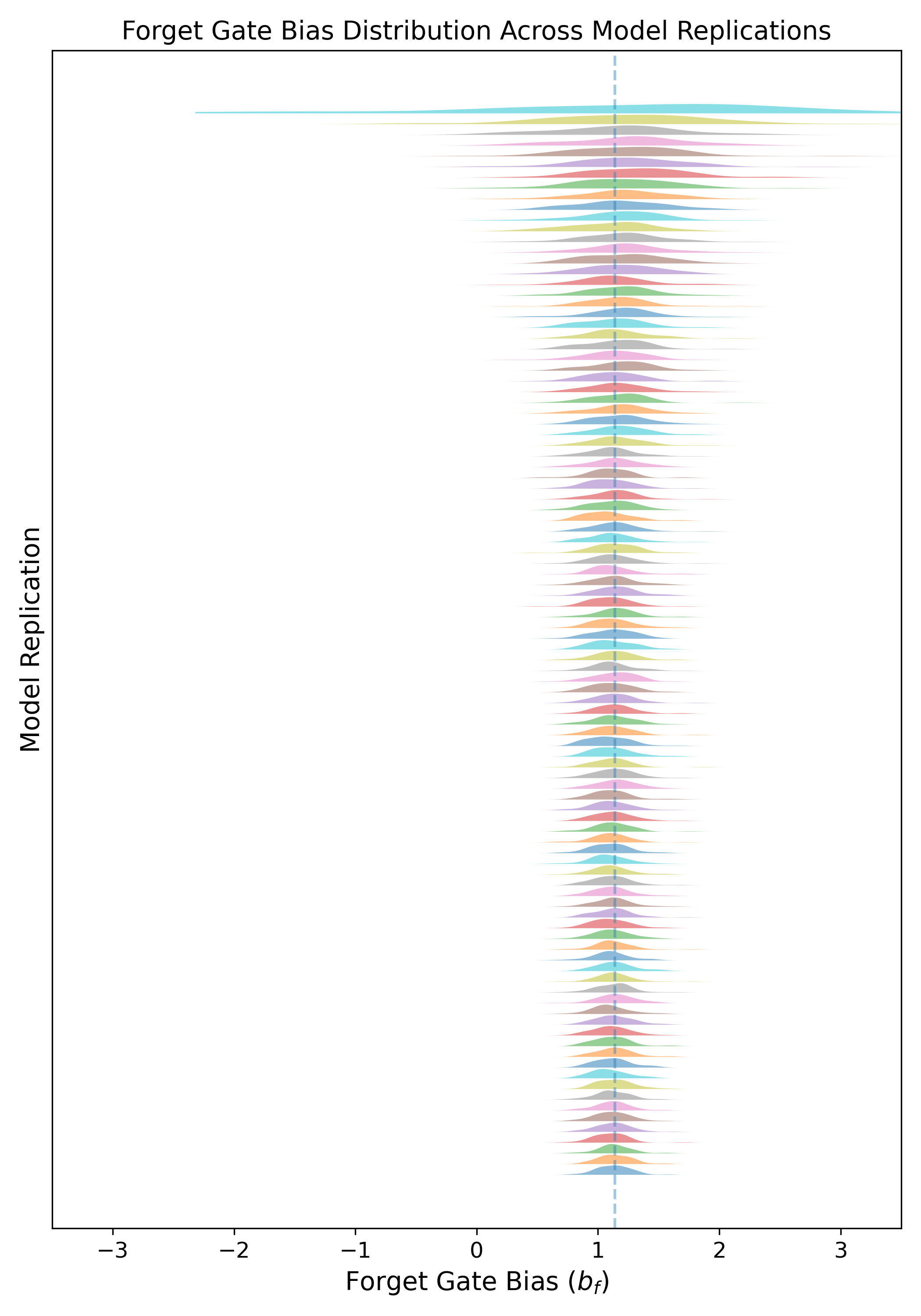}
    \caption[Forget Gate Bias Distribution across Replications]{Forget Gate Bias Distribution across Replications. The distributions are visualized using Gaussian kernel density estimates to provide a smooth representation of the empirical bias distributions for comparison, and the replications are ordered by the sample variance of each distribution.}\label{fig:bf_hist}
\end{figure}

\begin{figure}[htbp]
    \centering
    \includegraphics[width=\textwidth, height=0.8\textheight, keepaspectratio]{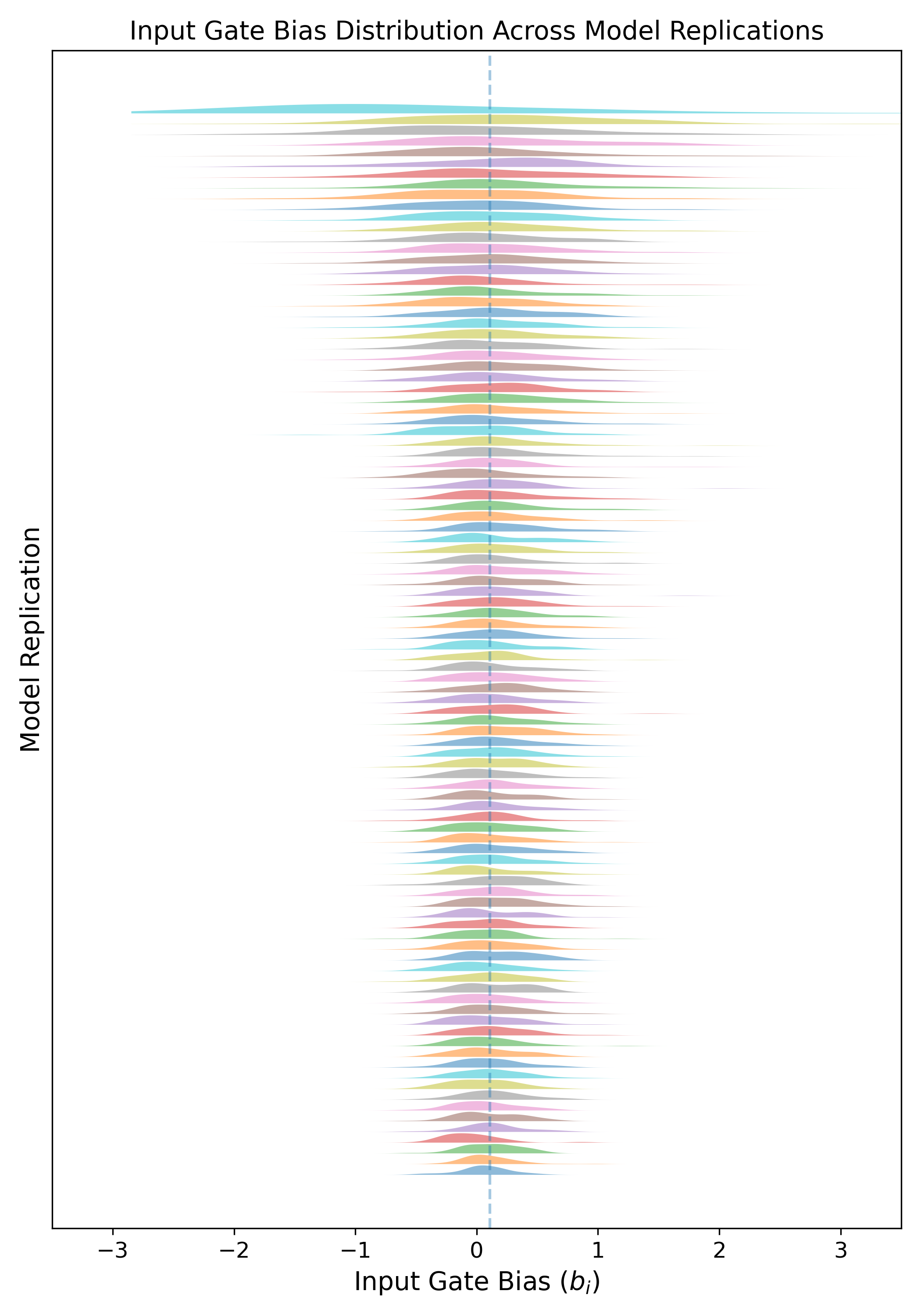}
    \caption[Input Gate Bias Distribution across Replications]{Input Gate Bias Distribution across Replications. The distributions are visualized using Gaussian kernel density estimates to provide a smooth representation of the empirical bias distributions for comparison, and the replications are ordered by the sample variance of each distribution.}\label{fig:bi_hist}
\end{figure}

\begin{figure}[htbp]
    \centering
    \includegraphics[width=\textwidth, height=0.8\textheight, keepaspectratio]{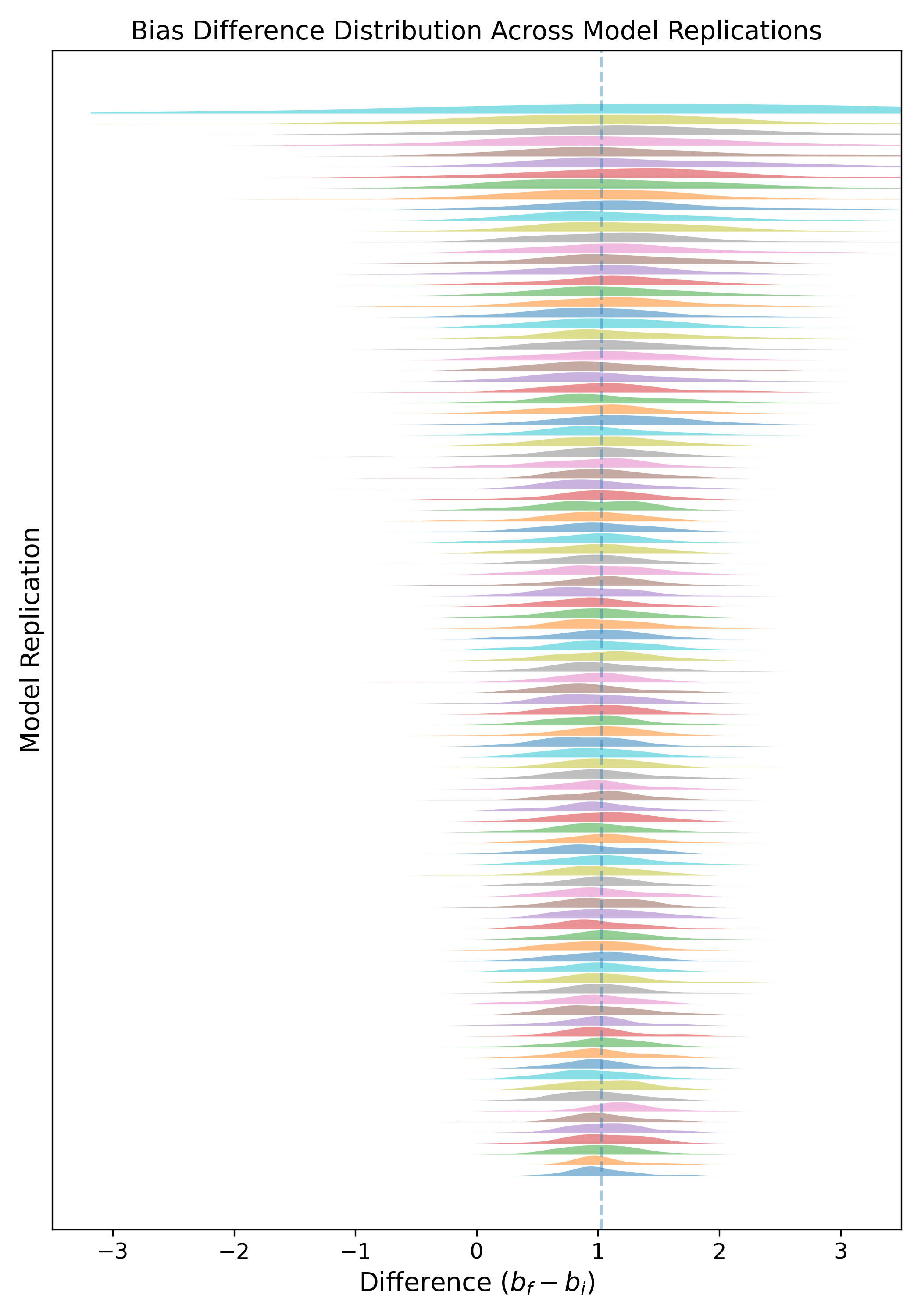}
    \caption[Per-Unit Difference of Forget and Input Gate Biases across Replications.]{Per-Unit Difference of Forget and Input Gate Biases across Replications. The distributions are visualized using Gaussian kernel density estimates to provide a smooth representation of the empirical bias distributions for comparison, and the replications are ordered by the sample variance of each distribution.}\label{fig:diff_hist}
\end{figure}

\clearpage

Two static regression models were fit on the training and validation periods and used to predict FMC for each fuel class in the test period. The models are an XGBoost and a simple linear regression, and they include the same set of predictors as the RNN, which can be found in Table~\ref{tab:all_vars}. The test period was from August 13, 1997 at 12:00 UTC through December 30, 1997 at 22:00 UTC. The results are presented in Tables~\ref{tab:fm1_static}, \ref{tab:fm10_static}, \ref{tab:fm100_static}, and \ref{tab:fm1000_static}. These models provide a baseline of accuracy, and help confirm that the more sophisticated modeling approaches are improving accuracy relative to a simple ML model.

\begin{table}[htbp]
\caption[Accuracy metrics for static FM1 models compared to Oklahoma field observations.]{Accuracy metrics for FM1 predictions during the test period from XGBoost and linear regression model (LM). Comparisons of the models are presented for Oklahoma field observations of all FM1 values and for FM1 $\leq 30\%$.}
\label{tab:fm1_static}
\centering
\begin{tabular}{lcc|cc}
\multicolumn{2}{c}{\textbf{FM1: Static Baselines}} \\
\hline
\multicolumn{1}{c}{} 
& \multicolumn{2}{c}{All observed FM1} 
& \multicolumn{2}{c}{Observed FM1 $\leq$ 30\%} \\
\hline
Metric 
& XGBoost & LM 
& XGBoost & LM \\
\hline
$R^2$      & $0.63$ & $0.56$ & $0.35$ & $0.42$ \\
Bias (\%)  & $0.71$ & $-0.05$ & $-0.92$ & $-2.21$ \\
RMSE (\%)  & $8.15$ & $8.81$ & $5.03$ & $4.73$ \\
\hline
\end{tabular}
\end{table}

\begin{table}[htbp]
\caption[Accuracy metrics for static FM10 models compared to Oklahoma field observations.]{Accuracy metrics for FM10 predictions during the test period from XGBoost and linear regression model (LM). Comparisons of the models are presented for Oklahoma field observations of all FM1 values and for FM1 $\leq 30\%$.}
\label{tab:fm10_static}
\centering
\begin{tabular}{lcc|cc}
\multicolumn{2}{c}{\textbf{FM10: Static Baselines}} \\
\hline
\multicolumn{1}{c}{} 
& \multicolumn{2}{c}{All observed FM10} 
& \multicolumn{2}{c}{Observed FM10 $\leq$ 30\%} \\
\hline
Metric 
& XGBoost & LM 
& XGBoost & LM \\
\hline
$R^2$      & $0.56$ & $0.59$ & $0.49$ & $0.59$ \\
Bias (\%)  & $1.31$ & $0.77$ & $0.31$ & $-0.36$ \\
RMSE (\%)  & $5.80$ & $5.62$ & $3.88$ & $3.47$ \\
\hline
\end{tabular}
\end{table}

\begin{table}[htbp]
\caption[Accuracy metrics for static FM100 models compared to Oklahoma field observations.]{Accuracy metrics for FM100 predictions during the test period from XGBoost and linear regression model (LM) compared against field observations. Results are shown for all observed FM100 values.}
\label{tab:fm100_static}
\centering
\begin{tabular}{lcc}
\multicolumn{2}{c}{\textbf{FM100: Static Baselines}} \\
\hline
\multicolumn{1}{c}{} 
& \multicolumn{2}{c}{All observed FM100} \\
\hline
Metric 
& XGBoost & LM \\
\hline
$R^2$      & $0.57$ & $0.53$ \\
Bias (\%)  & $-0.58$ & $-0.51$ \\
RMSE (\%)  & $2.86$ & $3.00$ \\
\hline
\end{tabular}
\end{table}

\begin{table}[htbp]
\caption[Accuracy metrics for static FM1000 models compared to Oklahoma field observations.]{Accuracy metrics for FM1000 predictions from XGBoost and linear regression model (LM) during the test period compared against field observations. Results are shown for all observed FM1000 values.}
\label{tab:fm1000_static}
\centering
\begin{tabular}{lcc}
\multicolumn{2}{c}{\textbf{FM1000: Static Baselines}} \\
\hline
\multicolumn{1}{c}{} 
& \multicolumn{2}{c}{All observed FM1000} \\
\hline
Metric 
& XGBoost & LM \\
\hline
$R^2$      & $0.38$ & $0.44$ \\
Bias (\%)  & $-0.71$ & $-0.64$ \\
RMSE (\%)  & $2.61$ & $2.47$ \\
\hline
\end{tabular}
\end{table}

\clearpage
\section{Data and Code}\label{app:code}

The code associated with this project was built in Python (Version 3.11.14). The main ML software packages are TensorFlow (Version 2.19.1) and Keras (Version 3.13.2). Two publicly-available GitHub repositories form the code base of the project. The GitHub repository \url{https://github.com/openwfm/ml_fmda} contains materials needed to reproduce the data set used in the source learning task and associated results, which were primarily reported in the publication \citet{Hirschi-2026-RNN}. The data used in this project is derived from public domain resources, including the Amazon AWS data set of historical HRRR forecasts (\url{https://registry.opendata.aws/noaa-hrrr-pds/}) and the historical RAWS data from Synoptic, accessed through the software package SynopticPy (\url{https://github.com/blaylockbk/SynopticPy}). The data used in the project was collected, formatted, and stored using the University of Colorado Denver supercomputing cluster Alderaan. The exact data is available upon request due to privacy restrictions of the university computing system and the substantial storage space needed for this particular project.

The GitHub repository \url{https://github.com/jh-206/fmc_transfer} contains the code needed to reproduce the results related to the transfer learning analysis. 

The Oklahoma field study data described in \citet{Carlson-2007-ANM} was personally delivered in an email correspondence by Dr. Van der Kamp. This data is available upon request. The data includes hourly weather observations from a ground-based station in the Oklahoma Mesonet~\citep{OKMeso-2026-MSI}. The weather data from the study region and time were purchased from Mesonet. We received confirmation in an email correspondence from Mesonet that the data can be freely shared with proper citation, so this data is also available upon request. The wetting and drying equilibria were calculated from the air temperature and relative humidity from the Mesonet data. 

The Mesonet weather data was hourly resolution, with reported observations occurring directly on the hour. The associated FMC measurements included times that specified minutes within an hour. We aligned the times for the RNN predictions and accuracy calculations in the following ways. 

Times were converted from central time to UTC. The RNN was pretrained on FM10 observations from RAWS and weather data from HRRR, both of which report data in UTC time format. The hour of the day and the day of the year were used as derived features in the RNN to help the network learn diurnal and seasonal patterns. Thus, the times from the Oklahoma field study needed to be converted to UTC so the associated hour of day and day of the year features were consistent with the training data in the source domain. 

The weather variables from the nearest hour were assigned to the FMC observations for the training set. This introduces a potential misalignment of up to half an hour, but this error is accounted for when predicting in the test set. Temporal interpolation was used on the RNN predictions to align them with the exact times of the reported FMC observations in the test set. The RNN produces hourly predictions of FMC for the different fuel classes. These predictions were interpolated linearly to the exact time of the FMC observations. 

As an example, consider an observation in the training set that was recorded at 12:35 UTC. During training, it would be aligned with the weather at 1:00 UTC. In other words, the RNN prediction at 1:00 UTC would be compared to the FMC observation at 12:35 UTC to calculate a loss. But an observation in the test set that was recorded at 12:35 UTC would be kept as-is. The RNN predictions at 12:00 UTC and 1:00 UTC would be used with linear interpolation to produce a prediction at 12:35 UTC.

\section{RNN Paper}\label{app:paper}

Portions of this thesis are based on the following peer-reviewed publication:

Hirschi, J.; Mandel, J.; Hilburn, K.; Farguell, A. \textit{A Recurrent Neural Network for Forecasting Dead Fuel Moisture Content with Inputs from Numerical Weather Models}. \textit{Fire} 2026, 9, 26. \url{https://doi.org/10.3390/fire9010026}

This paper presents a recurrent neural network framework for forecasting dead fuel moisture content using inputs from numerical weather prediction models. The methods and results developed in this publication form a subset of the broader modeling framework and analysis presented in this thesis. 

    \newpage \ \newpage 
\end{document}